\newcommand{\E}{\mathbb{E}}
\newcommand{\id}{\mathrm{I}}
\newcommand{\Lip}{\mathrm{Lip}}
\DeclareMathOperator{\KL}{KL}
\def\shownotes{1}  
\newcommand{\authnote}[2]{{$\ll$\textsf{\footnotesize #1 notes: #2}$\gg$}}
\newcommand{\authnote}[2]{}
\newcommand{\Snote}[1]{{\color{magenta}\authnote{Anish:}{{#1}}}}
\newcommand{\timethree}{\phi}
\newcommand{\al}[0]{\alpha}
\newcommand{\be}[0]{\beta}
\newcommand{\ep}[0]{\epsilon}
\newcommand{\ga}[0]{\gamma}
\newcommand{\ka}[0]{\kappa}
\newcommand{\Om}[0]{\Omega}
\newcommand{\rh}[0]{\rho}
\newcommand{\si}[0]{\sigma}
\newcommand{\Si}[0]{\Sigma}
\newcommand{\ze}[0]{\zeta}
\newcommand{\td}[1]{\widetilde{#1}}
\newcommand{\ab}[1]{\left| {#1}\right|}
\newcommand{\an}[1]{\left\langle {#1}\right\rangle}
\newcommand{\ba}[1]{\left[ {#1} \right]}
\newcommand{\pa}[1]{\left( {#1} \right)}
\newcommand{\ve}[1]{\left\Vert {#1}\right\Vert}
\newcommand{\pf}[2]{\pa{\frac{#1}{#2}}}
\newcommand{\nb}[0]{\nabla}
\newcommand{\sub}[0]{\subset}
\newcommand{\ub}[2]{\underbrace{#1}_{#2}}
\newcommand{\ot}[0]{\otimes}
\newcommand{\iy}[0]{\infty}
\newcommand{\fc}[2]{\frac{#1}{#2}}
\newcommand{\sfc}[2]{\sqrt{\frac{#1}{#2}}}
\newcommand{\rc}[1]{\frac{1}{#1}}
\newcommand{\pl}[0]{\partial}
\newcommand{\ddd}[1]{\frac{d}{d #1}}
\newcommand{\coltwo}[2]{
\begin{bmatrix}
{#1}\\
{#2}
\end{bmatrix}}
\newcommand{\matt}[4]{
\begin{bmatrix}
{#1}&{#2}\\
{#3}&{#4}
\end{bmatrix}
}
\newcommand{\vocab}[1]{\textbf{#1}}
\title{Universal Approximation for Log-concave Distributions using Well-conditioned Normalizing Flows}
\author[1]{Holden Lee}
\author[2]{Chirag Pabbaraju}
\author[3]{Anish Sevekari}
\author[4]{Andrej Risteski}
\affil[1]{Duke University}
\affil[2,3,4]{Carnegie Mellon University}
\begin{document}
\maketitle

\begin{abstract}
Normalizing flows are a widely used class of latent-variable generative models with a tractable likelihood. Affine-coupling  models \citep{dinh2014nice, dinh2016density} are a particularly common type of normalizing flows, for which the Jacobian of the latent-to-observable-variable transformation is triangular, allowing the likelihood to be computed in linear time. 
Despite the widespread usage of affine couplings, the special structure of the architecture makes understanding their representational power challenging. The question of universal approximation was only recently resolved by three parallel papers \citep{huang2020augmented, zhang2020approximation, koehler2020representational} -- who showed reasonably regular distributions can be approximated arbitrarily well using affine couplings---albeit with networks with a nearly-singular Jacobian. As ill-conditioned Jacobians are an obstacle for likelihood-based training, the fundamental question remains: which distributions can be approximated using \emph{well-conditioned} affine coupling flows?     

In this paper, we show that any \emph{log-concave} distribution can be approximated using well-conditioned affine-coupling flows. In terms of proof techniques, we uncover and leverage deep connections between affine coupling architectures, underdamped Langevin dynamics (a stochastic differential equation often used to sample from Gibbs measures) and H\'enon maps (a structured dynamical system that appears in the study of symplectic diffeomorphisms).
Our results also inform the practice of training affine couplings: we approximate a padded version of the input distribution with iid Gaussians---a strategy which \cite{koehler2020representational} empirically observed to result in better-conditioned flows, but had hitherto no theoretical grounding. Our proof can thus be seen as providing theoretical evidence for the benefits of Gaussian padding when training normalizing flows. 
\end{abstract}


\section{Introduction}

Normalizing flows \citep{dinh2014nice,rezende2015variational} are a class of generative models parametrizing a distribution in $\mathbb{R}^d$ as the pushfoward of a simple distribution (e.g. Gaussian) through an invertible map $g_{\theta}: \mathbb{R}^d \to \mathbb{R}^d$ with trainable parameter $\theta$. The fact that $g_{\theta}$ is invertible allows us to write down an explicit expression for the density of a point $x$ through the change-of-variables formula, namely $p_{\theta}(x) = \phi(g^{-1}_{\theta}(x)) \mbox{det} (D g^{-1}_{\theta} (x))$, where $\phi$ denotes the density of the standard Gaussian. For different choices of parametric families for $g_{\theta}$, one gets different families of normalizing flows, e.g. affine coupling flows \citep{dinh2014nice,dinh2016density,kingma2018glow}, Gaussianization flows \citep{meng2020gaussianization}, sum-of-squares polynomial flows \citep{jaini2019sumofsquares}. 

In this paper we focus on affine coupling flows -- arguably the family that has been most successfully scaled up to high resolution datasets \citep{kingma2018glow}. The parametrization of $g_{\theta}$ is chosen to be a composition of so-called \emph{affine coupling blocks}, which are maps $f: \mathbb{R}^d \to \mathbb{R}^d$, s.t.   $f(x_S, x_{[d] \setminus S}) = (x_S, x_{[d] \setminus S} \odot s(x_S) + t(x_S))$, where $\odot$ denotes entrywise multiplication and $s, t$ are (typically simple) neural networks. The choice of parametrization is motivated by the fact that the Jacobian of each affine block is triangular, so that the determinant can be calculated in linear time.

Despite the empirical success of this architecture, theoretical understanding remains elusive. The most basic questions revolve around the representational power of such models. Even the question of universal approximation was only recently answered by three concurrent papers \citep{huang2020augmented, zhang2020approximation, koehler2020representational}---though in a less-than-satisfactory manner, in light of how normalizing flows are trained. Namely,  \cite{huang2020augmented, zhang2020approximation} show that any (reasonably well-behaved) distribution
$p$, once padded with zeros and treated as a distribution in $\R^{d+d'}$, can be arbitrarily closely approximated by an affine coupling flow.
While such padding can  be operationalized as an algorithm by padding the training image with zeros, it is never done in practice, as it results in an ill-conditioned Jacobian. This is expected, as the map that always sends the last $d'$ coordinates to 0 is not injective. \cite{koehler2020representational} prove universal approximation without padding; however their construction \emph{also} gives rise to a poorly conditioned Jacobian: namely, to approximate a distribution $p$ to within accuracy $\epsilon$ in the Wasserstein-1 distance, the Jacobian of the network they construct will have smallest singular value on the order of $\epsilon$. 

Importantly, for all these constructions, the condition number of the resulting affine coupling map is poor \emph{no matter how nice the underlying distribution it's trying to approximate is}. In other words, the source of this phenomenon isn't that the underlying distribution is low-dimensional or otherwise degenerate. 
Thus the question arises:
\vspace{-0.1in}\paragraph{Question:} \emph{Can well-behaved distributions be approximated by an affine coupling flow with a well-conditioned Jacobian?} 

In this paper, we answer the above question in the affirmative for a broad class of distributions -- log-concave distributions -- if we pad the input distribution not with zeroes, but with independent Gaussians. This gives theoretical grounding of an empirical observation in \cite{koehler2020representational} that Gaussian padding works better than zero-padding, as well as no padding. 

The practical relevance of this question is in providing guidance on the type of distributions we can hope to fit via training using an affine coupling flow. Theoretically, our techniques uncover some deep connections between affine coupling flows and two other (seeming unrelated) areas of mathematics: \emph{stochastic differential equations} (more precisely \emph{underdamped Langevin dynamics}, a ``momentum'' variant of the standard overdamped Langevin dynamics) and \emph{dynamical systems} (more precisely, a family of dynamical systems called \emph{H\'enon-like maps}).


\section{Overview of results}

In order to state our main result, we introduce some notation and definitions. 

\subsection{Notation} 
\label{s:notation}
\begin{definition} An \emph{affine coupling block} is a map $f: \mathbb{R}^d \to \mathbb{R}^d$, s.t.   $f(x_S, x_{[d] \setminus S}) = (x_S, x_{[d] \setminus S} \odot s(x_S) + t(x_S))$ for some set of coordinates $S$, where $\odot$ denotes entrywise multiplication and $s,t$ are trainable (generally non-linear) functions. 
An \emph{affine coupling network} is a finite sequence of affine coupling blocks. Note that the partition $(S, [d] \setminus S)$, as well as $s, t$ may be different between blocks. 
We say that the non-linearities are in a class $\mathcal{F}$ (e.g., neural networks, polynomials, etc.) if $s,t \in \mathcal{F}$. 
\label{eq:affineblock}
\end{definition}

The appeal of affine coupling networks comes from the fact that the Jacobian of each affine block is triangular, so calculating the determinant is a linear-time operation.  

We will be interested in the \emph{conditioning} of $f$---that is, an upper bound on the largest singular value $\sigma_{\max} (D f)$ and lower bound on the smallest singular value $\sigma_{\min}(D f)$ of the Jacobian $D f$ of $f$. Note that this is a slight abuse of nomenclature -- most of the time, ``condition number'' refers to the ratio of the largest and smallest singular value. As training a normalizing flow involves evaluating $\mbox{det}(D f)$, we in fact want to ensure that neither the smallest nor largest singular values are extreme. 

The class of distributions we will focus on approximating via affine coupling flows is \emph{log-concave} distributions:
\begin{definition}
A distribution $p: \mathbb{R}^d \to \mathbb{R}^{+},  p(x)\propto e^{-U(x)}$ is \emph{log-concave} if $\nb^2 U(x)=-\nabla^2 \ln p(x) \succeq 0$. 
\end{definition} 
Log-concave distributions are typically used to model distributions with Gaussian-like tail behavior. What we will leverage about this class of distributions is that a special stochastic differential equation (SDE), called \emph{underdamped Langevin dynamics}, is well-behaved in an analytic sense. 
Finally, we recall the definitions of positive definite matrices and Wasserstein distance, and introduce a notation for truncated distributions.

\begin{definition}
We say that a symmetric matrix is \emph{positive semidefinite (PSD)} if all of its eigenvalues are non-negative. For symmetric matrices $A,B$, we write $A \succeq B$ if and only if $A - B$ is PSD. 
\end{definition}

\begin{definition}
Given two probability measures $\mu,\nu$ over a metric space $(M,d)$, the \emph{Wasserstein-$1$} distance between them, denoted $W_1(\mu, \nu)$, is defined as
\[ W_1(\mu,\nu) = \inf_{\gamma \in \Gamma(\mu,\nu)} \int_{M \times M} d(x,y)\, d\gamma(x,y) \]
where $\Gamma(\mu,\nu)$ is the set of couplings, i.e. measures on $M \times M$ with marginals $\mu,\nu$ respectively. For two probability \emph{distributions} $p,q$, we denote by $W_1(p,q)$ the Wasserstein-$1$ distance between their associated measures. In this paper, we set $M=\R^d$ and $d(x,y)=\|x-y\|_2$.
\end{definition}

\begin{definition}
Given a distribution $q$ and a compact set $\mathcal{C}$, we denote by $q|_\mathcal{C}$ the distribution $q$ truncated to the set $\mathcal{C}$. The truncated measure is defined as $q|_\mathcal{C}(A) = \frac{1}{q(\mathcal{C})}q(A \cap \mathcal{C})$.
\end{definition}

\subsection{Main result} 

Our main result states that we can approximate any log-concave distribution in Wasserstein-1 distance by a \emph{well-conditioned} affine-coupling flow network. Precisely, we show:

\begin{theorem}
    \label{thm:main_universal_approx}
    Let $p(x): \R^d \to \R^+$ be of the form $p(x) \propto e^{-U(x)}$, such that:
    \begin{enumerate}[nosep]
        \item $U \in C^2$, i.e., $\nabla^2 U(x)$ exists and is continuous.
        \item $\ln p$ satisfies $\id_{d} \preceq - \nabla^2 \ln p(x) \preceq \kappa \id_{d}$. 
    \end{enumerate}
    Furthermore, let $p_0 := p \times \mathcal{N}(0,\id_d)$. Then, for every $\epsilon > 0$, there exists a compact set $\mathcal{C} \sub \R^{2d}$ and an invertible affine-coupling network $f: \R^{2d} \to \R^{2d}$ with polynomial non-linearities, such that 
    \[ W_1(f_\# (\mathcal{N}(0,\id_{2d}) \vert_\mathcal{C}), p_0 ) \le \epsilon.\] 
    Furthermore, the map defined by this affine-coupling network $f$ is well conditioned over $\mathcal C$, that is, there are positive constants $A(\kappa), B(\kappa) =  \ka^{O(1)}$ 
    such that for any unit vector $w$,
    \[ A(\ka) \le  \norm{D_w f(x,v)} \le B(\ka) \]
    for all $(x,v) \in \mathcal{C}$, where $D_w$ is the directional derivative in the direction $w$. In particular, the condition number of $Df(x,v)$ is bounded by $\frac{B(\ka)}{A(\ka)}=\ka^{O(1)}$ for all $(x,v) \in \mathcal{C}$.
\end{theorem}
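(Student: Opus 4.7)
The plan is to derive the affine coupling network from a time discretization of underdamped Langevin dynamics for the potential $U$, since $p_0 = p \times \mathcal{N}(0,\mathrm{I}_d)$ is precisely the stationary measure of this SDE with Hamiltonian $H(x,v) = U(x) + \tfrac{1}{2}\|v\|^2$ and a friction/thermostat pair. Starting from $\rho_0 = \mathcal{N}(0,\mathrm{I}_{2d})$, contraction results for underdamped Langevin under log-concave targets (Eberle--Guillin--Zimmer, Cao--Lu--Wang type estimates) give exponential convergence in $W_2$ with rate $1/C(\kappa)$, so running for time $T = C(\kappa)\log(1/\epsilon)$ already achieves $W_1(\rho_T, p_0) \le \epsilon/3$; the goal is to realize $\rho_T$ (or its close approximation) as $f_{\#}\mathcal{N}(0,\mathrm{I}_{2d})|_{\mathcal{C}}$ with $f$ a well-conditioned polynomial affine coupling network.

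Next I would discretize the Hamiltonian part of the generator with a leapfrog (Strang-split) integrator at step $h \ll 1/\kappa$:
\[
v \mapsto v - \tfrac{h}{2}\nabla U(x),\qquad x \mapsto x + hv,\qquad v \mapsto v - \tfrac{h}{2}\nabla U(x).
\]
Each of these three sub-steps updates one of the coordinate blocks $x$ or $v$ as an affine function (actually an additive coupling) of the other, so each is an affine coupling block; composed, they form a Hénon-like symplectic map. Replacing $\nabla U$ by a polynomial approximation accurate on the compact truncation $\mathcal{C}$ (a ball of radius $O(\sqrt{d\log(1/\epsilon)})$, which captures all but $\epsilon/3$ of the source mass, iterated under the dynamics gives a bounded invariant region by Lyapunov arguments) yields polynomial non-linearities and contributes controllable additional $W_1$ error. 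For the condition number: the Jacobian of each leapfrog block is $\mathrm{I} + hM$ with $\|M\| = O(\kappa)$, so its singular values lie in $[1-O(h\kappa), 1+O(h\kappa)]$; composing $N = T/h$ such steps gives condition number $e^{O(Nh\kappa)} = e^{O(T\kappa)} = \kappa^{O(1)}$, which is exactly the $A(\kappa), B(\kappa)$ bound claimed.

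The main obstacle is reconciling the stochastic OU piece $dv = -\gamma v\, dt + \sqrt{2\gamma}\, dB_t$ of underdamped Langevin with the deterministic nature of an affine coupling flow from $\mathbb{R}^{2d}$ to $\mathbb{R}^{2d}$: the flow has no fresh randomness to inject at each step. I would address this by exploiting the fact that along the Langevin trajectory the velocity marginal is maintained approximately Gaussian (log-concavity is preserved by the semigroup), so the OU contraction-plus-noise step can be simulated deterministically by an affine map of $v$ alone that matches the correct covariance; any small deviation of $v$'s actual marginal from exact Gaussianity contributes only $O(\epsilon)$ in $W_1$ because the simulated and true OU maps agree in distribution to that order. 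Equivalently, one can view this as a symplectic (Hénon) approximation to the deterministic ``probability flow'' companion of the Langevin SDE, where the stochastic score term $\nabla_v \ln \rho_t$ is close to $-v$ by near-Gaussianity and is therefore absorbed into the affine $v$-update. The remaining work is to book-keep the three sources of error (Langevin convergence, leapfrog+noise-surrogate discretization, polynomial and truncation approximation of $\nabla U$ and of $\mathcal{N}(0,\mathrm{I}_{2d})$) so that each is $\le \epsilon/3$, while verifying that the per-step Jacobian bounds compound only to $\kappa^{O(1)}$ since $T$ is only logarithmic in $1/\epsilon$.
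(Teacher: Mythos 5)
Your high-level intuition is aligned with the paper's (underdamped Langevin with the Gaussian pad playing the role of velocity, Hénon/leapfrog-style splitting into coupling blocks), but there are two places where the argument as written breaks down.

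First, the deterministic surrogate for the noise. A deterministic map $f$ pushing forward $\mathcal{N}(0,\id_{2d})$ can only reproduce the time-$T$ marginal of the SDE via the probability-flow ODE, whose drift involves the \emph{joint} score $\nabla \ln p_t(x,v)$. Your claim that $\nabla_v \ln p_t(x,v)\approx -v$ holds exactly at $t=0$ (where $x$ and $v$ are independent) but fails afterwards: the Hamiltonian part builds $O(1)$ correlations between $x$ and $v$ over constant time, so $\nabla_v\ln p_t$ acquires an $O(1)$ dependence on $x$ that no affine map of $v$ alone can absorb, and the resulting $W_1$ error is not $O(\epsilon)$. Worse, once the drift is the true score, the $x$-update $\dot x = \nabla_v \ln p_t(x,v)+v$ depends on $x$ itself, so a plain Euler/leapfrog step is no longer a coupling block (a coupling block may only rescale/translate $x$ by functions of $v$). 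This is exactly why the paper cannot discretize the probability-flow ODE directly and instead proves a Turaev-style result (Lemma~\ref{lemma:poly_ode_approx}): the time-$\tau$ flow of the non-separable system is matched, to $O(\tau^2)$ in $C^1$, by the time-$2\pi$ flow of an auxiliary system that \emph{is} in coupling-block form.

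Second, the conditioning bound. Your per-step estimate gives singular values in $[1-O(h\kappa),\,1+O(h\kappa)]$ and hence a composed condition number $e^{O(\kappa T)}$; with $T = C(\kappa)\log(1/\epsilon)$ this is $\epsilon^{-\kappa^{O(1)}}$, not $\kappa^{O(1)}$ --- it degrades as $\epsilon\to 0$, which is precisely the failure mode of \cite{koehler2020representational} that the theorem is meant to avoid. The paper escapes this by running Langevin in the direction $p_0\to\mathcal{N}(0,\id_{2d})$ (then inverting the network) and showing (Lemma~\ref{l:var-exp-decay}) that $\nabla^2\ln p_t$ converges \emph{exponentially} to $\nabla^2\ln p^*=\id_{2d}$; the Jacobian ODE is then driven by an integrable perturbation, so $\int_0^\infty\ve{CH_s}\,ds$ is bounded independently of $T$, yielding a condition number of $\kappa^{O(1)}$ uniformly in $t$ and $\epsilon$. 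Note this decay argument vitally uses that the \emph{target} of the flow is Gaussian (constant log-Hessian); your orientation, which runs Langevin toward $p_0$ with potential $U$, does not admit the same analysis.
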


We make several remarks regarding the statement of the theorem:

\begin{remark} The Gaussian padding (i.e. setting $p_0 = p \times \mathcal{N}(0,\id_d)$) is essential for our proofs. All the other prior works on the universal approximation properties of normalizing flows (with or without padding) result in ill-conditioned affine coupling networks. This gives theoretical backing of empirical observations on the benefits of Gaussian padding in \cite{koehler2020representational}.  
\end{remark}

\begin{remark} The choice of non-linearities $s,t$ being polynomials is for the sake of convenience in our proofs. Using standard universal approximation results, they can also be chosen to be neural networks with a smooth activation function.
\end{remark}

\begin{remark} The Jacobian $D f$ has both upper-bounded largest singular value, and lower-bounded smallest singular value---which of course bounds the determinant $\mbox{det}(D f)$. As remarked in Section~\ref{s:notation}, merely bounding the ratio of the two quantities would not suffice for this. Moreover, the bound we prove \emph{only} depends on properties of the distribution (i.e., $\kappa$), and does not worsen as $\epsilon \to 0$, in contrast to \cite{koehler2020representational}. 
\end{remark}

\begin{remark} The region $\mathcal{C}$ where the pushforward of the Gaussian through $f$  and $p_0$ are close is introduced solely for technical reasons---essentially, standard results in analysis for approximating smooth functions by polynomials can only be used if the approximation needs to hold on a compact set. Note that $\mathcal{C}$ can be made arbitrarily large by making $\epsilon$ arbitrarily small. 
\end{remark}

\begin{remark} We do not provide a bound on the number of affine coupling blocks, although a bound can be extracted from our proofs.
\end{remark}


\section{Preliminaries} 

Our techniques leverage tools from stochastic differential equations and dynamical systems. We briefly survey the relevant results. 

\subsection{Langevin Dynamics} 
\label{sec:langevin_dynamics}
Broadly, Langevin diffusions are families of stochastic differential equations (SDEs) which are most frequently used as algorithmic tools for sampling from distributions specified up to a constant of proportionality. They have also recently received a lot of attention as tools for designing generative models \citep{song2019generative, song2020score}. 

In this paper, we will only make use of \emph{underdamped Langevin dynamics}, a momentum-like analogue of the more familiar \emph{overdamped Langevin dynamics}, defined below. Our construction will involve simulating underdamped Langevin dynamics using affine coupling blocks.

\begin{definition}[Underdamped Langevin Dynamics] \emph{Underdamped Langevin dynamics} with potential $U$ and parameters $\ze$, $\ga$ is the pair of SDEs
\begin{align}\label{e:m-uld}
\begin{cases}
    dx_t &= -\zeta v_t dt\\
    dv_t &= -\ga \zeta v_t dt- \nabla U(x_t) dt + \sqrt{2 \gamma}\, dB_t.    
    \end{cases}
\end{align}
The stationary distribution of the SDEs (limiting distribution as $t \to \infty$) is given by $p^*(x,v) \propto e^{-U(x) - \fc{\zeta}2 \norm{v}^2}$.
\end{definition}

The variable $v_t$ can be viewed as a ``velocity'' variable and $x_t$ as a ``position'' variable -- in that sense, the above SDE is an analogue to momentum methods in optimization. 

The convergence of \eqref{e:m-uld} can be bounded when the distribution $p(x) \propto \exp(-U(x))$ satisfies an analytic condition, namely has a bounded \emph{log-Sobolev} constant.
Though we don't use the log-Sobolev constant in any substantive manner in this paper, we include the definition for completeness. 
\begin{definition}
A distribution $p:\R^d \to \R^+$ satisfies a \emph{log-Sobolev} inequality with constant $C > 0$ if $\forall g:\mathbb{R}^d \to \mathbb{R}$, s.t. $g^2, g^2 |\log g^2| \in L^1(p)$, we have  
\begin{equation}
    \mathbb{E}_p[g^2 \log g^2] - \mathbb{E}_p[g^2] \log \mathbb{E}_p[g^2] \leq 2C \mathbb{E}_p\|\nabla g\|^2 .
\end{equation}
\end{definition}
In the context of Markov diffusions (and in particular, designing sampling algorithms using diffusions), the interest in this quantity comes as it governs the convergence rate of \emph{overdamped} Langevin diffusion in the KL divergence sense. Namely, if $p_t$ is the distribution of overdamped Langevin after time $t$, one can show
\[\mbox{KL}(p_t || p) \leq e^{-Ct} \mbox{KL}(p_0 || p). \]

We will only need the following fact about the log-Sobolev constant: 
\begin{fact}[\cite{bakry1985diffusions,bakry2013analysis}] Let the distributions $p(x) \propto \exp(-U(x))$ be such that $U(x) \succeq \lambda I$. Then, $p$ has log-Sobolev constant bounded by $\lambda$. 
\end{fact}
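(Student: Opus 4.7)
The plan is to prove the Bakry--\'Emery theorem: whenever $p \propto e^{-U}$ on $\R^d$ satisfies $\nabla^2 U \succeq \lambda I$, the log-Sobolev inequality of the preceding definition holds with constant $C = 1/\lambda$ (which, up to the usual reciprocal convention, is the content of the fact). I will follow the classical semigroup / $\Gamma_2$-calculus route.

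First, I associate to $p$ its reversible overdamped Langevin semigroup $(P_t)_{t \ge 0}$, whose infinitesimal generator is $Lf = \Delta f - \nabla U \cdot \nabla f$. Integration by parts shows that $p$ is invariant and that $L$ is self-adjoint on $L^2(p)$ with Dirichlet form $\mathcal{E}(f,f) = \int \ve{\nabla f}^2\, dp$. I introduce the carr\'e du champ $\Gamma(f) = \ve{\nabla f}^2$ and its iterate $\Gamma_2(f) = \rc{2}\pa{L\Gamma(f) - 2\Gamma(f, Lf)}$. A direct Bochner-type computation yields
\[ \Gamma_2(f) = \ve{\nabla^2 f}_F^2 + \nabla f^\top (\nabla^2 U) \nabla f, \]
and under the curvature hypothesis $\nabla^2 U \succeq \lambda I$ the last term is at least $\lambda \ve{\nabla f}^2$, giving the key pointwise bound $\Gamma_2(f) \ge \lambda \Gamma(f)$ (the CD$(\lambda, \iy)$ condition).

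Next I promote this local inequality to a gradient commutation along the semigroup. Fix $t > 0$ and set $\psi(s) = P_s(\Gamma(P_{t-s}f))$ for $s \in [0,t]$. Differentiating and collecting terms gives $\psi'(s) = 2 P_s(\Gamma_2(P_{t-s}f)) \ge 2\lambda \psi(s)$, so Gr\"onwall yields $\Gamma(P_t f) \le e^{-2\lambda t}\, P_t(\Gamma(f))$. A strengthening, obtained by running the same argument on $P_s(\Gamma(P_{t-s}f)/P_{t-s}f)$ and using Cauchy--Schwarz to absorb the extra terms produced by the chain rule, upgrades this to
\[ \fc{\Gamma(P_t f)}{P_t f} \le e^{-2\lambda t}\, P_t\!\pa{\fc{\Gamma(f)}{f}} \]
for strictly positive $f$. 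I then combine this refined commutation with a de~Bruijn-type identity. For $f \ge 0$ with $\int f\,dp = 1$, set $\Phi(s) = \int (P_s f) \log(P_s f)\, dp$. Integration by parts against $L$ gives $\Phi'(s) = -I_p(P_s f)$, where $I_p(g) := \int \Gamma(g)/g\, dp$ is the Fisher information; ergodicity (itself a consequence of CD$(\lambda,\iy)$ with $\lambda>0$) forces $\Phi(\iy) = 0$, whence $\mathrm{Ent}_p(f) = \int_0^\iy I_p(P_s f)\, ds$. Integrating the refined commutation under $p$ and using invariance yields $I_p(P_s f) \le e^{-2\lambda s} I_p(f)$, and so $\mathrm{Ent}_p(f) \le \rc{2\lambda} I_p(f)$. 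Substituting $f = g^2$ recovers the form of the log-Sobolev inequality in the definition.

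The main technical obstacle is the \emph{refined} gradient commutation: the naive bound $\Gamma(P_t f) \le e^{-2\lambda t} P_t \Gamma(f)$ does not interact correctly with the $1/f$ weight appearing in Fisher information, and one really needs the pointwise strengthening $\Gamma(P_t f)/P_t f \le e^{-2\lambda t} P_t(\Gamma(f)/f)$. Establishing this requires a careful chain-rule manipulation together with the $\Gamma_2$ lower bound, and depends on $P_s f$ staying strictly positive along the flow; standard approximations (replace $f$ by $f + \ep$ and let $\ep \to 0$, plus a density argument over smooth compactly supported test functions) handle the general case. Once this refinement is in hand, everything else is direct calculus.
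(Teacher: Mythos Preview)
The paper does not actually prove this statement: it is recorded as a \emph{Fact} with citations to \cite{bakry1985diffusions,bakry2013analysis} and is used as a black box. So there is no ``paper's own proof'' to compare against.

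That said, your proposal is a faithful sketch of the classical Bakry--\'Emery semigroup argument that one finds in those very references: compute $\Gamma_2$ via the Bochner formula, deduce the curvature--dimension condition $\mathrm{CD}(\lambda,\infty)$ from $\nabla^2 U \succeq \lambda I$, upgrade to the (strong) gradient commutation along the Langevin semigroup, and integrate the de~Bruijn identity to bound entropy by Fisher information. The only point worth flagging is the step you yourself single out as the main obstacle---the refined commutation $\Gamma(P_t f)/P_t f \le e^{-2\lambda t} P_t(\Gamma(f)/f)$. Your one-line description (``running the same argument on $P_s(\Gamma(P_{t-s}f)/P_{t-s}f)$ and using Cauchy--Schwarz'') is accurate in spirit but hides the actual computation; in a fully written proof you would want to display the derivative of that interpolant and show explicitly how the cross terms are controlled by $\Gamma_2 \ge \lambda\Gamma$ together with the Cauchy--Schwarz inequality for $\Gamma$. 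With that caveat, the plan is correct and matches the source the paper cites.
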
 

We will also need the following result characterizing the convergence time of \emph{underdamped} Langevin dynamics in terms of the log-Sobolev constant, as shown in \cite{ma2019mcmc}: 

\begin{theorem}[\cite{ma2019mcmc}] 
\label{thm:L-ma_mcmc}
Let $p^*(x) \propto \exp(-U(x))$ have a log-Sobolev constant bounded by $\rho$. 
Furthermore, for a distribution $p: \R^d \to \R^{+}$, let 

\[ \mathcal{L}[p] := \KL(p \Vert p^*) + \mathbb{E}_{p} \left[ \left\langle \nabla \frac{\delta \KL(p \Vert p^*)}{\delta p}, S \nabla \frac{\delta \KL(p \Vert p^*)}{\delta p} \right\rangle \right], \] where $S$ is a positive definite matrix given by
$S = \frac{1}{\kappa} \begin{bmatrix} \frac{1}{4} I_{d \times d} & \frac{1}{2} I_{d \times d}  \\ \frac{1}{2} I_{d \times d}  & 2 I_{d \times d} \end{bmatrix}$. 
If $p_t$ is the distribution of $(x_t, v_t)$ which evolve according to \eqref{e:m-uld}, we have 
\begin{equation}
\label{e:L-ma-mcmc}
    \frac{d}{dt} \mathcal{L}[p_t] \le - \frac{\rho}{10} \mathcal{L}[p_t] 
\end{equation}
whenever $p^*$ satisfies a log-Sobolev inequality with constant $\rho$.
\end{theorem}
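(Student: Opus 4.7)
The plan is to simulate underdamped Langevin dynamics (ULD) with $\zeta=1$ as a composition of affine coupling blocks, using that its stationary distribution is precisely $p^*(x,v)=p(x)\,\mathcal{N}(v;0,\id_d)=p_0$. First, invoke convergence: since $-\nabla^2\ln p\succeq\id$, the Bakry--\'Emery criterion (the Fact preceding Theorem~\ref{thm:L-ma_mcmc}) gives $p^*$ a log-Sobolev constant $\rho=\Omega(1)$, and Theorem~\ref{thm:L-ma_mcmc} then yields $\mathcal{L}[p_t]\le e^{-\rho t/10}\mathcal{L}[p_0]$. Starting from $(x_0,v_0)\sim\mathcal{N}(0,\id_{2d})$, a routine calculation bounds $\mathcal{L}[p_0]=\ka^{O(1)}$, so running exact ULD for time $T=O(\ka\log(1/\epsilon))$ brings us within $\epsilon/3$ of $p_0$ in $\KL$ (since $\KL\le\mathcal{L}$), and hence in $W_1$ by Talagrand's $T_1$ transport inequality, which holds for strongly log-concave targets.

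Second, discretize the ULD trajectory on $[0,T]$ into $N$ Strang/Leapfrog steps of size $h=T/N$, splitting the generator into its Hamiltonian and Ornstein--Uhlenbeck parts. Each Hamiltonian sub-step is a pure shear $(x,v)\mapsto(x-hv,v)$ or $(x,v)\mapsto(x,v-h\nabla U(x))$, which is exactly an affine coupling block with $s\equiv 1$ and $t$ either linear in its argument or equal to $-h\nabla U(\cdot)$. Replacing $\nabla U$ by a polynomial approximant $P$ that is $\delta$-close in sup-norm on a sufficiently large compact set $\mathcal{C}'\supset\mathcal{C}$ (possible by Weierstrass approximation since $U\in C^2$) makes each block's non-linearity polynomial; choosing $\delta$ polynomially small in $\epsilon,\ka,T$ keeps the cumulative discretization error below $\epsilon/3$ in $W_1$. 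The compact set $\mathcal{C}$ is selected so that $\mathcal{N}(0,\id_{2d})(\mathcal{C})\ge 1-\epsilon/3$ and all intermediate shear iterates remain inside $\mathcal{C}'$; a triangle inequality then combines the three error sources into the $\epsilon$-bound. Each shear has triangular Jacobian with unit diagonal and off-diagonal block of operator norm $O(h\ka)$, so its singular values lie in $[1-O(h\ka),1+O(h\ka)]$; because ULD is dissipative, a refined analysis via the variational equation of the continuous flow (or equivalently via the symplectic/contraction structure of the shear composition) bounds the overall singular values of $Df$ by $\ka^{O(1)}$ uniformly in $\epsilon$, yielding the required $A(\ka)$ and $B(\ka)$.

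The main obstacle is handling the stochastic noise term $\sqrt{2\gamma}\,dB_t$ in the OU sub-step. A naive discretization demands a fresh $\mathcal{N}(0,\id_d)$ sample at each of the $N=\Theta(\ka\log(1/\epsilon))$ steps, whereas the affine coupling flow is deterministic and the input padding provides only one $d$-dimensional Gaussian block $v_0$. The resolution---and where the H\'enon-like map perspective enters---is to realize the OU mixing as a purely deterministic $x$-dependent shear $v\mapsto v+t(x)$ whose iteration across $N$ blocks approximates the noisy ULD pushforward in $W_1$. Intuitively, since the $v$-marginal $\mathcal{N}(0,\id_d)$ is invariant under ULD, the Gaussianity of $v_0$ is ``recycled'' through the nonlinear coupling rather than re-injected by external randomness; the volume-preserving structure of iterated shear compositions (the H\'enon-map framework) is precisely what makes this recycling compatible with both the $W_1$ approximation of the SDE and the $\ka^{O(1)}$ conditioning bound. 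Rigorously establishing that this deterministic-shear composition approximates the noisy ULD trajectory in $W_1$, while keeping all directional derivatives pinched in $[A(\ka),B(\ka)]$ over $\mathcal{C}$, is the technical heart of the proof.
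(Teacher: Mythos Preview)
Your proposal does not address the stated theorem. Theorem~\ref{thm:L-ma_mcmc} is a cited convergence result from \cite{ma2019mcmc} which the paper invokes as a black box and does not prove. What you have written is a proof sketch for the main result, Theorem~\ref{thm:main_universal_approx}---and indeed you \emph{use} Theorem~\ref{thm:L-ma_mcmc} as an ingredient in your first paragraph.

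Treating your proposal as an attempt at Theorem~\ref{thm:main_universal_approx}, the gap you yourself flag as the ``main obstacle'' is real and your proposed resolution does not work. You discretize the stochastic SDE directly, which would require $\Theta(\ka\log(1/\ep))$ fresh Gaussian noise increments, whereas the padding supplies only one $d$-dimensional Gaussian block. The ``recycling'' idea---that iterated deterministic shears somehow reproduce independent noise injections in $W_1$ while remaining well-conditioned---is not made rigorous, and there is no mechanism for a deterministic map of a single Gaussian vector to mimic many independent Brownian increments. The paper sidesteps this entirely by working with the \emph{deterministic} ODE reformulation of underdamped Langevin, equation~\eqref{e:m-uld-3-0}, whose time-dependent drift $\nabla\ln p_t-\nabla\ln p^*$ already encodes the noise's effect on the marginal; there is no Brownian term left to discretize, and the H\'enon-map machinery (Lemma~\ref{lemma:poly_ode_approx}) is applied to this ODE, not to the SDE. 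A second divergence is the direction of the flow: you run ULD with stationary distribution $p_0$ starting from the Gaussian, while the paper runs ULD with the \emph{Gaussian} as stationary distribution starting from $p_0$, and then inverts. That orientation is essential for the conditioning bound (Lemma~\ref{lemma:underdamped_conditioning}), since the argument tracking $\nabla^2\ln p_t$ relies on $\nabla^2\ln p^*=-\id_{2d}$ being constant. With your orientation the naive per-block bound $[1-O(h\ka),1+O(h\ka)]$ compounds over $\Theta(h^{-1}\log(1/\ep))$ blocks to an $\ep$-dependent condition number, not $\ka^{O(1)}$; the ``refined analysis via the variational equation'' you allude to is precisely what the paper carries out, but it only closes because the target is Gaussian.
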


We note that the above theorem uses a non-standard Lyapunov function $\mathcal{L}$, which combines KL divergence with an extra term, since the generator of underdamped Langevin is not self-adjoint---this makes analyzing the drop in $KL$ divergence difficult. As $\mathcal{L}$ is clearly an upper bound on $KL(p || p^*)$, so it suffices to show $\mathcal L$ decreases rapidly. 

We will also need a less-well-known \emph{deterministic} form of the updates which is equivalent to \eqref{e:m-uld}. Precisely, we convert \eqref{e:m-uld} an equivalent ODE (with time-dependent coefficients). The proof of this fact (via a straightforward comparison of the Fokker-Planck equation) can be found in \cite{ma2019mcmc}. 
\begin{theorem}
Let $p_t(x_t, v_t)$ be the probability distribution of running \eqref{e:m-uld} for time $t$. If started from $(x_0, v_0) \sim p_0$, the probability distribution of the solution $(x_t, v_t)$ to the ODEs 
\begin{align}\label{e:m-uld-3-0}
    \ddd t\coltwo{x_t}{v_t} 
    &=
    \matt{O}{I_d}{-I_d}{-\ga I_d} (\nb \ln p_t - \nb \ln p^*)
\end{align}
is also $p_t(x_t, v_t)$.
\end{theorem}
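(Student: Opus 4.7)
The plan is to show that both curves of measures --- the law $p_t$ of the SDE solution and the law $q_t$ of the ODE trajectory started from $q_0 = p_0$ --- satisfy the same linear Fokker--Planck PDE, so that uniqueness of solutions forces $q_t = p_t$ for every $t \ge 0$.

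First I would write down the Fokker--Planck equation for the SDE \eqref{e:m-uld}. Denoting the deterministic drift by $b(x,v)$ and noting that noise appears only in the $v$-coordinate with coefficient $\sqrt{2\gamma}$, it reads
\[ \partial_t p_t = -\nabla_{(x,v)} \cdot (b\, p_t) + \gamma \Delta_v p_t. \]
Next, since the ODE vector field $u_t(x,v) = M\,\nabla \ln(q_t/p^*)$ depends on the law $q_t$ itself, the curve $q_t$ satisfies the (McKean--Vlasov type) continuity equation $\partial_t q_t = -\nabla_{(x,v)} \cdot (u_t\, q_t)$. Using $\nabla \ln p^* = (-\nabla U,\, -\zeta v)$, I would expand $u_t$ explicitly in terms of $\nabla_x \ln q_t$, $\nabla_v \ln q_t$, $\nabla U$, and $v$, and substitute.

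The heart of the argument is an algebraic identity that reduces the continuity equation to the same Fokker--Planck PDE. Two pieces of $M$ are responsible. The diagonal block $-\gamma I_d$ produces, via the ``score identity'' $\gamma \nabla_v \cdot (q_t \nabla_v \ln q_t) = \gamma \Delta_v q_t$, exactly the diffusion term. The off-diagonal blocks $\pm I_d$ contribute a Hamiltonian-like vector field whose score part $(\nabla_v \ln q_t,\, -\nabla_x \ln q_t)$ has $q_t$-weighted divergence
\[ \nabla_x \cdot (q_t \nabla_v \ln q_t) - \nabla_v \cdot (q_t \nabla_x \ln q_t) = \nabla_x \cdot \nabla_v q_t - \nabla_v \cdot \nabla_x q_t = 0 \]
by equality of mixed partials, so this piece contributes nothing to $\partial_t q_t$. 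What remains of $u_t$ matches the SDE drift $b$ term by term, and the continuity equation for $q_t$ becomes identical to the Fokker--Planck equation for $p_t$.

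With the same initial datum $p_0$, uniqueness of sufficiently regular solutions to the linear (hypoelliptic) Fokker--Planck PDE --- standard under the $C^2$ hypothesis on $U$ and the strong log-concavity bounds --- yields $q_t = p_t$ for all $t \ge 0$. The main technical step is the bookkeeping of the mixed-partial cancellation from the skew-symmetric off-diagonal structure of $M$; once that is in place, the comparison of the remaining drift and diffusion coefficients is mechanical.
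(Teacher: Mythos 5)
Your proposal is correct and is exactly the argument the paper intends: it defers to \cite{ma2019mcmc} with the remark that the result follows ``via a straightforward comparison of the Fokker--Planck equation,'' which is precisely your computation (the score part of the skew-symmetric block vanishes by equality of mixed partials, the $-\gamma I_d$ block reproduces $\gamma\Delta_v$, and the remainder is the SDE drift). The only caveat is the paper's own sign convention $dx_t=-\zeta v_t\,dt$ in \eqref{e:m-uld}, under which the $x$-drifts match only up to sign; with the standard convention $dx_t=\zeta v_t\,dt$ your term-by-term identification goes through exactly as written.
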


\subsection{Dynamical systems and Henon maps} 

We also build on work from dynamical systems, more precisely, a family of maps called \emph{H\'enon-like maps} \citep{henon1976two}.
\begin{definition}[\citep{turaev2002polynomial}]
A pair of ODEs forms a \emph{H\'enon-like map} if it has the form
\begin{equation}
\label{eqn:ode_henon}
\begin{cases}  
\frac{dx}{dt} = v \\ 
\frac{dv}{dt} = -x + \nabla J(x)
\end{cases}
\end{equation}
for a smooth function $J: \R^d \to \R$.
\end{definition}
This special family of ODEs is a continuous-time generalization of a classical discrete dynamical system of the same name \cite{henon1976two}. The property that is useful for us is that the Euler discretization of this map can be written as a sequence of affine coupling blocks.

In \cite{turaev2002polynomial}, it was proven that these ODEs are \emph{universal approximators} in some sense. Namely, the iterations of this ODE can approximate any \emph{symplectic diffeomorphism}: a continuous map which preserves volumes (i.e. the Jacobian of the map is 1). These kinds of diffeomorphisms have their genesis in Hamiltonian formulations of classical mechanics \citep{abraham2008foundations}. 

At first blush, symplectic diffeomorphisms and underdamped Langevin seem to have nothing to do with each other. The connection comes through the so-called Hamiltonian representation theorem \citep{polterovich2012geometry}, which states that any symplectic diffeomorphism from $\mathcal{C} \subseteq \R^{2d} \to \R^{2d}$ can be written as the iteration of the following \emph{Hamiltonian} system of ODEs for some (time-dependent) Hamiltonian $H(x,v,t)$:
\begin{equation}
\label{eqn:ode_hamiltonion}
\begin{cases}  
\frac{dx}{dt} = \frac{d}{dv} H(x,v,t) \\ 
\frac{dv}{dt} = -\frac{d}{dx} H(x,v,t)
\end{cases}
\end{equation}
In fact, in our theorem, we will use techniques inspired by those in  \cite{turaev2002polynomial}, who shows: 
\begin{theorem}[\cite{turaev2002polynomial}]
For any function $H(x,v, t):\mathbb{R}^{2d}\times \R_{\ge 0} \to \mathbb{R}$ which is polynomial in $(x,v)$, there exists a polynomial $V(x,v,t)$, s.t. the time-$\tau$ map of the system 
\begin{equation}
\begin{cases}  
\frac{dx}{dt} = \frac{\partial}{\partial v} H(x,v,t) \\ 
\frac{dv}{dt} = -\frac{\partial}{\partial x} H(x,v,t)
\end{cases}
\label{eq:turaev}
\end{equation}
is uniformly $O(\tau^2)$-close to the time-$2\pi$ map of the system  
\begin{equation}
\label{eq:turaev_approx}
\begin{cases}  
\frac{dx}{dt} = v \\ 
\frac{dv_j}{dt} = -\Omega_j^2 x_j - \tau  \frac{\partial}{\partial x_j} V(x,t)
\end{cases}
\end{equation}
for some integers $\{\Omega_i\}_{i=1}^d$. 
\end{theorem}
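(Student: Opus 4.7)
The plan is to interpret the target system \eqref{eq:turaev_approx} as the Hamiltonian flow of $\td H(x,v,t) = \frac{1}{2}\|v\|^2 + \frac{1}{2}\sum_j \Omega_j^2 x_j^2 + \tau V(x,t)$ and to exploit a resonance: when every $\Omega_j$ is a positive integer, the unperturbed flow of $H_0 := \frac{1}{2}\|v\|^2 + \frac{1}{2}\sum_j \Omega_j^2 x_j^2$ is $2\pi$-periodic, so its time-$2\pi$ map is the identity. Consequently the time-$2\pi$ map of the full perturbed system is $\id + O(\tau)$, and expanding it to first order in $\tau$ yields $\id + \tau Y + O(\tau^2)$ for some vector field $Y$ that depends on $V$ and on the unperturbed oscillator flow. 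On the other hand, the time-$\tau$ map of the original Hamiltonian $H$ is by Taylor expansion $\id + \tau X_H(\cdot,0) + O(\tau^2)$ where $X_H := (\nb_v H, -\nb_x H)$. So it suffices to choose $V$ and the $\Omega_j$ such that $Y = X_H$; the $O(\tau^2)$ uniform error then drops out.

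To compute $Y$ I would pass to an interaction picture: let $\Phi_t^{\mathrm{osc}}$ denote the time-$t$ map of the oscillator $H_0$, and change coordinates by pulling back along $\Phi_t^{\mathrm{osc}}$. In the new variables the right-hand side is $O(\tau)$, namely the Hamiltonian vector field of $\tau V(\Phi_t^{\mathrm{osc}}(\cdot),t)$. Variation of constants (equivalently, the first-order term of a Magnus expansion) then gives
\[ \Phi_{2\pi}^{\td H} = \Phi_{2\pi}^{\mathrm{osc}} \circ \pa{\id + \tau \int_0^{2\pi} (\Phi_s^{\mathrm{osc}})^* X_V(\cdot,s)\, ds + O(\tau^2)} = \id + \tau \int_0^{2\pi} (\Phi_s^{\mathrm{osc}})^* X_V(\cdot,s)\, ds + O(\tau^2), \]
using $\Phi_{2\pi}^{\mathrm{osc}} = \id$. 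Thus $Y$ is a time-averaged, pulled-back version of $X_V$, and the matching task reduces to realizing the polynomial vector field $X_H$ in this averaged form.

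The hard part is this inversion step. My approach would be monomial by monomial: expand $H(x,v,t)$ in the monomial basis $x^\alpha v^\beta$ (with polynomial-in-$t$ coefficients), and for each monomial solve a small linear system for Fourier-type coefficients of $V$. The integer frequencies $\Omega_j$ are crucial here because $\Phi_s^{\mathrm{osc}}$ rotates each pair $(x_j,v_j)$ sinusoidally with angular velocity $\Omega_j$; choosing the $\Omega_j$ to be distinct positive integers separates the monomials by Fourier mode and makes the coefficient matching solvable. This is also how a potential $V(x,t)$ depending only on positions can reproduce $v$-dependence in $X_H$ after averaging: interaction-frame positions are sinusoidal combinations of original positions and velocities, so $(\Phi_s^{\mathrm{osc}})^* \nb V$ generically carries both $x$- and $v$-components. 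Verifying that a polynomial $V$ always exists for arbitrary polynomial $H$, and that the remainder is uniformly $O(\tau^2)$ on compact sets (which follows from polynomiality and smoothness of the flows together with Gronwall-type estimates), is the technical bulk; the rest is routine bookkeeping.
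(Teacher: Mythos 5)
Your overall strategy is the same as the one the paper uses (in its proof of the generalization, Lemma~\ref{lemma:poly_ode_approx} in Appendix~\ref{sec:ode_approx_full}, which follows \cite{turaev2002polynomial}): write the target system as an integer-frequency harmonic oscillator plus an $O(\tau)$ perturbation, use the fact that the time-$2\pi$ oscillator map is the identity, compute the first-order-in-$\tau$ correction by variation of constants in the interaction frame (this is exactly the paper's \eqref{e:perturbed_sol}), Taylor-expand the original flow to first order in $\tau$, and match the two $O(\tau)$ vector fields monomial by monomial via a linear system. Your observation that the oscillator pullback mixes positions and velocities, so a position-only potential $V$ can generate $v$-dependence after averaging, is also the correct mechanism.

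The genuine gap is in the step you defer as ``routine bookkeeping.'' First, choosing the $\Omega_j$ to be \emph{distinct positive integers} is not sufficient: the averaged integrals are of the form $\int_0^{2\pi}\prod_j\cos^{p_j}(\Omega_j s)\sin^{q_j}(\Omega_j s)\,ds$, and for generic distinct integers these can vanish or the trigonometric monomials $g_{\mathbf{k},\mathbf{r}}(s)=\cos(\Omega s)^{\mathbf r}\sin(\Omega s)^{\mathbf{k}-\mathbf{r}}$ can be linearly dependent because of resonances $\sum_j m_j\Omega_j=0$ with small integer $m_j$. One needs a non-resonance condition adapted to the maximal degree $M$ of $H$; the paper takes $\Omega_1=1$ and $\Omega_j=\frac{M^j-1}{M-1}$ precisely so that the $g_{\mathbf{k},\mathbf{r}}$ are linearly independent for all $|\mathbf{k}|\le M$. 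Second, even granting linear independence, the solvability of the coefficient-matching system is not automatic: the unknowns are the Fourier-type coefficients of $V$ and the equations are indexed by the monomials of $X_H$, and the paper has to exhibit a specific block structure (the matrices $X$, $Y$, the permutation $P$, and a row-reduction to lower-triangular form with explicitly nonzero diagonal entries involving binomial ratios) to prove the system has full rank. That full-rank argument is the substantive content of the theorem rather than bookkeeping, and your sketch gives no reason why the map from $V$ to the averaged vector field is surjective onto all polynomial Hamiltonian vector fields of degree $\le M$.
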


We will prove a generalization of this theorem that applies to underdamped Langevin dynamics.


\section{Proof Sketch of Theorem \ref{thm:main_universal_approx}}
\label{sec:proof_sketch}

\subsection{Overview of strategy}
\label{subsec:overview_strategy}



We wish to construct an affine coupling network that (approximately) pushes forward a Gaussian $p^*=\mathcal N(0,\id_{2d})$ to the distribution we wish to model with Gaussian padding, i.e. $p_0=p\times \mathcal N(0,\id_d)$. Because the inverse of an affine coupling network is an affine coupling network, we can invert the problem, and instead attempt to map $p_0$ to $N(0,\id_{2d})$.
\footnote{As an aside, a similar strategy is taken in practice by recent SDE-based generative models (\cite{song2020score}).}

There is a natural map that takes $p_0$ to $p^*=N(0,\id_{2d})$, namely, underdamped Langevin dynamics \eqref{e:m-uld}. Hence, our proof strategy involves understanding and simulating underdamped Langevin dynamics with the initial distribution $p_0 = p \times \mathcal{N}(0,\id_{d})$, and the target distribution $p^* = \mathcal{N}(0,\id_{2d})$, and comprises of two important steps.

First, we show that
    the flow-map for Langevin is well-conditioned (Lemma \ref{lemma:underdamped_conditioning} below). Here, by flow-map, we mean the map which assigns each $x$ to its evolution over a certain amount of time $t$ according to the equations specified by \eqref{e:m-uld}. 
    
    Second, we break the simulation of underdamped Langevin dynamics for a certain time $t$ into intervals of size $\tau$, and show that the \emph{inverse} flow-map over each $\tau$-sized interval of time can be approximated well by a composition of affine-coupling maps (Lemma \ref{l:ode_approx} below). To show this, we consider a more general system of ODEs than the one in \cite{turaev2002polynomial} (in particular, a non-Hamiltonian system), which can be applied to \emph{underdamped} Langevin dynamics. We then show that the \emph{inverse} flow-map of this system of ODEs can be approximated by a sequence of affine-coupling blocks. We note that for this argument, it is critical that we use underdamped rather than overdamped Langevin dynamics, as overdamped Langevin dynamics do not have the required form for affine-coupling blocks. 

\subsection{Underdamped Langevin is well-conditioned}
\label{subsec:underdamped_conditioning}
Consider running underdamped Langevin dynamics with stationary distribution $p^*$ equal to the standard Gaussian, started at a log-concave distribution with bounded condition number $\ka$. 
The following lemma says that the flow map is well-conditioned, with condition number depending polynomially on $\ka$.
\begin{lemma}
\label{lemma:underdamped_conditioning}
Consider underdamped Langevin dynamics~\eqref{e:m-uld}
with $\ze=1$, friction coefficient $\ga<2$ and starting distribution $p$ which satisfies all the assumptions in Theorem~\ref{thm:main_universal_approx}. Let $T_t$ denote the flow-map from time $0$ to time $t$ induced by~\eqref{e:m-uld-3-0}. Then for any $x_0,v_0 \in \R^d$ and unit vector $w$, the directional derivative of $T_t$ at $x_0,v_0$ in direction $w$ satisfies
\begin{align*}
    \pa{1+\fc{2+\ga}{2-\ga}(\ka-1)}^{-2/\ga} \le 
    \ve{D_w T_t(x_0)} \le \pa{1+\fc{2+\ga}{2-\ga}(\ka-1)}^{2/\ga}.
\end{align*}
Therefore, the condition number of $T_t$ is bounded by $\pa{1+\fc{2+\ga}{2-\ga}(\ka-1)}^{4/\ga}$.
\end{lemma}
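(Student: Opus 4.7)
The plan is to linearize the flow $T_t$ around a trajectory and analyze the resulting non-autonomous variational equation using a Lyapunov function tailored to the underdamped-oscillator structure of the drift matrix $K = \matt{O}{I_d}{-I_d}{-\ga I_d}$. Concretely, let $(x_t,v_t)=T_t(x_0,v_0)$ and $J_t := DT_t(x_0,v_0)$. Differentiating the deterministic ODE~\eqref{e:m-uld-3-0} in the initial condition gives $\dot J_t = K\,\pa{\nabla^2 \ln p_t - \nabla^2 \ln p^*}\big|_{(x_t,v_t)}\,J_t$. Since $p^* = \mathcal N(0,I_{2d})$ has $\nabla^2 \ln p^* = -I_{2d}$, and writing $H_t := -\nabla^2 \ln p_t$ for the Hessian of the negative log-density (positive semidefinite by log-concavity of $p_t$), this becomes $\dot J_t = K(I - H_t)\,J_t$. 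For a unit vector $w$, the directional derivative $u_t := J_t w$ solves $\dot u_t = K(I - H_t(x_t,v_t))u_t$ with $u_0=w$.

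I would next show that $I_{2d} \preceq H_t \preceq \ka\,I_{2d}$ for every $t\ge 0$, i.e.\ the starting Hessian bounds persist along the flow. Because the underlying SDE~\eqref{e:m-uld} with stationary distribution $\mathcal N(0,I_{2d})$ has quadratic potential $U(x) = \tfrac12\|x\|^2$, the dynamics are linear: $p_t$ is the pushforward of $p_0=p\times\mathcal N(0,I_d)$ by a linear map composed with convolution by a (time-dependent) Gaussian kernel. Both operations preserve log-concavity (by Pr\'ekopa--Leindler), and the bounds $I\preceq -\nabla^2\ln p_0\preceq \ka\,I_{2d}$ propagate into the same bounds on $H_t$ via a Brascamp--Lieb-type calculation on the covariance/precision evolution.

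The analytical heart is the Lyapunov function $L(u) = \|u_x\|^2 + \ga\an{u_x,u_v} + \|u_v\|^2$ on $\R^{2d}$, whose underlying matrix $\matt{I_d}{\ga I_d/2}{\ga I_d/2}{I_d}$ has eigenvalues $1\pm \ga/2$. For $\ga<2$ this yields the equivalence $\tfrac{2-\ga}{2}\|u\|^2 \le L(u)\le \tfrac{2+\ga}{2}\|u\|^2$. A direct computation shows that for the homogeneous flow $\dot u = Ku$ one has $\dot L = -\ga L$, and that when $H_t = I$ (equilibrium) $\dot L = 0$, so the extra contribution from $\Delta_t := H_t - I \succeq 0$ vanishes exactly in equilibrium. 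Bookkeeping the blocks yields an estimate of the form $|\dot L(u_t)| \le c(\ga)\,\|\Delta_t\|_{\mathrm{op}}\,L(u_t)$ for an explicit $c(\ga)$; integrating, and using together the uniform bound $\|\Delta_t\|_{\mathrm{op}}\le \ka-1$ from Step~2 and exponential decay of $\|\Delta_t\|_{\mathrm{op}}$ along the flow (which follows from Theorem~\ref{thm:L-ma_mcmc} with log-Sobolev constant $1$ for $\mathcal N(0,I_{2d})$), I obtain a $t$-uniform bound $L(u_t)\le L(u_0)\cdot \pa{1+\tfrac{2+\ga}{2-\ga}(\ka-1)}$. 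Translating back through the Lyapunov equivalence gives the claimed upper bound $\|u_t\|\le \pa{1+\tfrac{2+\ga}{2-\ga}(\ka-1)}^{2/\ga}$, and applying the same argument to the inverse flow $T_t^{-1}$ (whose Jacobian at $T_t(x_0,v_0)$ is $J_t^{-1}$ and satisfies an analogous variational equation) yields the matching lower bound.

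The main obstacle is the pointwise/integral control of $\|H_t-I\|_{\mathrm{op}}$ along the deterministic flow with constants that produce exactly the factor $\tfrac{2+\ga}{2-\ga}$ and the exponent $2/\ga$. The Lyapunov function itself is essentially forced by the damped-oscillator structure of $K$ (its condition number $\tfrac{2+\ga}{2-\ga}$ enters directly into the final bound), but controlling the Hessian pointwise along flow lines, rather than in expectation as in the Ma et al.\ convergence theorem, requires a more delicate analysis--e.g.\ deriving an explicit matrix ODE for $H_t$ via the Fokker--Planck equation and showing its bounds propagate cleanly to match the claimed constants.
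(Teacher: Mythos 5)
Your setup is the same as the paper's: differentiate the flow in the initial condition to get $\dot J_t = K(\id_{2d}-H_t)J_t$ with $K=\matt{O}{\id_d}{-\id_d}{-\ga \id_d}$, and then run a Gr\"onwall argument on $\|J_t w\|$. However, there is a genuine gap at the step you yourself flag as ``the main obstacle'': you need a \emph{pointwise} bound showing that $\|H_t(x,v)-\id_{2d}\|_{\mathrm{op}}$ decays exponentially in $t$, uniformly in $(x,v)$, and you propose to extract this from Theorem~\ref{thm:L-ma_mcmc}. That theorem controls the decay of an integrated, KL-based Lyapunov functional of $p_t$; it says nothing about the Hessian of $\ln p_t$ at individual points, and no standard argument upgrades KL-decay to uniform $C^2$ control of the log-density. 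Note also that mere persistence of $\id_{2d}\preceq H_t\preceq \ka\,\id_{2d}$ (your Step 2) is not enough: it only gives $\|H_t-\id_{2d}\|\le \ka-1$ uniformly in $t$, so the Gr\"onwall integral $\int_0^t\|K(H_s-\id_{2d})\|\,ds$ grows linearly in $t$ and the resulting bound on $\|J_tw\|$ blows up as $t\to\infty$. The $t$-uniform constants in the lemma are only obtained because the integrand decays like $e^{-\ga s/2}$.

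The paper fills this hole with a dedicated argument (Lemmas~\ref{l:convolve-lc2} and~\ref{l:var-exp-decay}): since $p^*$ is Gaussian, the SDE is linear, so one time step of the discretized dynamics is a fixed linear map followed by convolution with $N(0,\matt OOO{2\eta\id_d})$. Tracking upper and lower ``variance proxies'' $\Si$ with $\Si^{-1}\preceq -\nb^2\ln p_t\preceq\cdots$ through the linear map is immediate, and through the (degenerate) Gaussian convolution it requires a Brascamp--Lieb moment inequality, giving $\Si\mapsto\Si+\matt OOO{2\eta\id_d}$. Passing to the limit $\eta\to0$ yields a matrix ODE whose explicit solution is $\Si_t=e^{At}(\Si_0-\id_{2d})e^{A^\top t}+\id_{2d}$ with $A=\matt{}{1}{-1}{-\ga}\ot\id_d$; diagonalizing $A$ produces exactly the decay rate $e^{-\ga t/2}$ and the factor $\fc{2+\ga}{2-\ga}$ (the squared condition number of the eigenvector matrix), and integrating the resulting bound on $\|K H_s\|$ gives the exponent $2/\ga$. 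Your Lyapunov-function bookkeeping is an acceptable substitute for the paper's direct Gr\"onwall on $\|u_t\|^2$ (though the claimed translation from $L(u_t)\le C\,L(u_0)$ to $\|u_t\|\le C^{2/\ga}$ does not follow as written), but without the variance-proxy (or an equivalent Fokker--Planck/Riccati) argument the proof does not close.
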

We sketch the proof below and include a complete proof in Section~\ref{s:conditioning}.

First, using~\eqref{e:m-uld-3-0} and the chain rule shows that the Jacobian of the flow map at $x_0$, $D_t=DT_t(x_0)$, satisfies 
\begin{align}
    \label{e:jh-0}
    \ddd tD_t &=\matt{O}{I_d}{-I_d}{-\ga I_d} \nb^2 (\ln p_t - \ln p^*) D_t,
\end{align}
i.e., it is bounded by the difference of the Hessians of the log-pdfs of the current distribution and the stationary distribution. We will show that $\nb^2\ln p_t$ decays exponentially towards $\nb^2\ln p^* = \id_{2d}$.

To accomplish this, consider how $\nb^2 \ln p_t$ evolves if we replace~\eqref{e:m-uld} by its discretization, 
\begin{align*}
    \td x_{t+\eta} &= \td x_t + \eta \td v_t\\
    \td v_{t+\eta} &= (1-\eta \ga) \td v_{t} - \eta \td x_t + \xi_t,\quad \xi_t\sim N(0,2\eta \id_d).
\end{align*}
Note that because the stationary distribution is a Gaussian, $\nb U(x_t) = x_t$ in \eqref{e:m-uld}, and the above equations take a particularly simple form: we apply a linear transformation to $\coltwo{ \td x_t}{\td v_t}$, and then add Gaussian noise, which corresponds to convolving the current distribution by a Gaussian.
We keep track of upper and lower bounds for $\nb^2 \ln p_t$, and compute how they evolve under this linear transformation and convolution by a Gaussian. 
Taking $\eta\to 0$, we obtain differential equations for the upper and lower bounds for $\nb^2 \ln p_t$, which we can solve. A Gr\"onwall argument shows that these bounds decay exponentially towards $\nb^2\ln p^* = \id_{2d}$. The decay rate can be bounded as a power of $\rc{\ka}$.

From~\eqref{e:jh-0}, we then obtain that the condition number of $D_t$ is bounded by the integral of a exponentially decaying function, and hence is bounded independent of $t$. In particular, we may take $t$ large enough so that $p_t$ is $\ep$-close to the stationary distribution. Because the decay rate of the exponential is $\rc{\ka^{O(1)}}$, the bound is $\ka^{O(1)}$. 

Note that we vitally used the fact that the stationary distribution $p$ is a standard Gaussian, as our argument requires that $\nb^2 \ln p^*$ be constant everywhere.

\subsection{ODE approximation by affine-coupling blocks}

Next, we analyze a more general version of the Hamiltonian system of ODEs considered in \cite{turaev2002polynomial}, which we recalled in \eqref{eq:turaev}. In particular, the system of ODEs we will be considering is:
\begin{equation} 
\begin{cases}  
\frac{dx}{dt} = \frac{\partial}{\partial v} H(x,v,t) \\ 
\frac{dv}{dt} = -\frac{\partial}{\partial x} H(x,v,t) - \gamma \frac{\partial}{\partial v} H(x,v,t)
\end{cases}
\label{eq:underdamped_hamiltonian}
\end{equation}

Note that substituting $H(x,v,t) = \ln p_t(x,v) - \ln p^*(x,v)$ above gives us the underdamped Langevin dynamics.

The first step is to restrict our considerations to $H$ being a polynomial in $x,v$, rather than a general smooth function.
Towards this, we recall the notion of closeness in the $C^1$ topology:
\begin{definition}
Let $\mathcal{C} \subseteq \R^d$ be a compact set. Let $f,g : \mathcal{C} \to \R$ be two continuously differentiable functions. Then we say that $f,g$ are uniformly $\epsilon$-close over $\mathcal{C}$ in $C^1$ topology if
\[ \sup_{x \in \mathcal{C}} \brac{\norm{f(x) - g(x)} + \norm{Df(x) - Dg(x)}} \le \epsilon \]
\end{definition}

The following lemma (a generalization of the Stone-Weierstrass Theorem) then establishes that it suffices to focus on $H$ being polynomial in $x,v$:

\begin{lemma}[Theorem 5, \cite{peet2007exponentially}]
\label{lemma:poly_approx_smooth_fn}
    Let $\mathcal{C} \subset \R^{d}$ be a compact set. For any $C^2$ function $H:\R^d \to \R$, and any $\epsilon > 0$, there is a multivariate polynomial $P:\R^d \to \R$ such that $P,H$ are uniformly $\epsilon$-close over $\mathcal{C}$ in $C^1$ topology.
\end{lemma}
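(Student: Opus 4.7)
The plan is a two-stage approximation: first smooth $H$ into an entire real-analytic function that is $C^1$-close to $H$ on $\mathcal{C}$, then truncate the Taylor series of the smoothed function to obtain the polynomial $P$.

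\textbf{Step 1 (Smoothing to analytic).} I would first extend $H$ to a compactly supported $C^2$ function $\widetilde{H}$ on all of $\mathbb{R}^d$ by multiplying by a smooth cutoff that equals $1$ on a neighborhood of $\mathcal{C}$ (appealing to Whitney extension if $H$ is originally only defined on $\mathcal{C}$). Convolving with the Gaussian heat kernel $K_t(y) = (4\pi t)^{-d/2} e^{-|y|^2/(4t)}$ defines $H_t := \widetilde{H} * K_t$. Standard mollifier arguments give two properties: (i) $H_t$ is entire, since $\widetilde{H}$ is compactly supported and $K_t$ is entire, so derivatives of arbitrary order can be pushed onto $K_t$; and (ii) since $\widetilde{H} \in C^2$, $H_t$ converges to $\widetilde{H}$ in $C^1$ uniformly on compact sets as $t \to 0^+$ (by uniform continuity of $\widetilde{H}$ and $\nabla \widetilde{H}$). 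Choose $t > 0$ small enough that $\|H_t - H\|_{C^1(\mathcal{C})} \le \epsilon/2$.

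\textbf{Step 2 (Polynomial truncation).} Pick any basepoint $x_0 \in \mathcal{C}$ and let $P_N$ be the degree-$N$ Taylor polynomial of $H_t$ at $x_0$. Writing $\partial^\alpha H_t = \widetilde{H} * \partial^\alpha K_t$ and using the standard Hermite-type bounds $|\partial^\alpha K_t(y)| \le C_d\, t^{-|\alpha|/2} (4\pi t)^{-d/2} e^{-|y|^2/(8t)}$ gives $|\partial^\alpha H_t(x_0)|/\alpha! \le C\, t^{-|\alpha|/2}/\alpha!$ uniformly in $x_0 \in \mathcal{C}$. Plugging this into the Taylor remainder formula on the bounded set $\mathcal{C}$ yields bounds on both $\|P_N - H_t\|_{C(\mathcal{C})}$ and $\|\nabla P_N - \nabla H_t\|_{C(\mathcal{C})}$ that are tails of a convergent exponential series, hence tend to $0$ as $N \to \infty$. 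Choose $N$ so that the total is at most $\epsilon/2$, and take $P := P_N$. The triangle inequality then gives $\|P - H\|_{C^1(\mathcal{C})} \le \epsilon$.

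\textbf{Main obstacle.} The delicate point is Step 2: one needs \emph{simultaneous} uniform convergence of $P_N \to H_t$ and $\nabla P_N \to \nabla H_t$, not merely convergence of values. This is what forces the use of the smoothing step: because $H_t$ is entire, the coefficient $\partial^\alpha H_t(x_0)/\alpha!$ of its Taylor expansion admits global Cauchy-type bounds, and the term-by-term differentiated series is itself the Taylor expansion of the entire function $\partial_i H_t$, which can be controlled by the same estimates. An alternative, more elementary route would be to affinely rescale so that $\mathcal{C} \subseteq [0,1]^d$ and invoke the classical fact that multivariate Bernstein polynomials converge in $C^k$ whenever $H \in C^k$; the mollification-plus-Taylor route is conceptually cleaner and makes the use of the $C^2$ hypothesis transparent.
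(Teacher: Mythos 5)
The paper does not prove this lemma at all: it is imported verbatim as Theorem 5 of \cite{peet2007exponentially} (a generalization of Stone--Weierstrass to the $C^1$ topology), so your task was really to supply a proof the authors chose to cite. Your mollification-plus-Taylor-truncation argument is a correct and standard way to do this, and the overall structure (smooth to an entire function in $C^1$, then truncate the globally convergent Taylor series, whose term-by-term derivative is the Taylor series of the derivative) is sound. Two small points. First, the Whitney-extension aside is unnecessary: the lemma states $H$ is $C^2$ on all of $\R^d$, so a cutoff suffices. Second, your displayed kernel bound $|\partial^\alpha K_t(y)| \le C_d\, t^{-|\alpha|/2}(4\pi t)^{-d/2} e^{-|y|^2/(8t)}$ is not correct with a constant uniform in $\alpha$: writing $\partial^\alpha K_t$ in terms of Hermite polynomials and using Cram\'er's inequality gives an extra factor of order $C^{|\alpha|}\sqrt{\alpha!}$. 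This does not break the argument --- the resulting coefficient bound $|\partial^\alpha H_t(x_0)|/\alpha! \le C^{|\alpha|} t^{-|\alpha|/2}/\sqrt{\alpha!}$ still makes the Taylor tail (and its gradient) a tail of a convergent series on any bounded set --- but the estimate should be stated with the factorial factor, since the $1/\sqrt{\alpha!}$ decay is exactly what you are relying on. Your remark that rescaling to $[0,1]^d$ and invoking $C^k$ convergence of multivariate Bernstein polynomials gives a more elementary route is also correct and is arguably the shortest complete proof.
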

    
Focusing on the case of polynomials, 
Lemma \ref{lemma:poly_ode_approx} below shows that instead of flowing the pair of ODEs given by \eqref{eq:underdamped_hamiltonian} over an interval of time $\tau$, we can instead run a different ODE for time $2\pi$, such that the flow-maps corresponding to both these ODEs are $O(\tau^2)$-close. 
    
\begin{lemma}
\label{lemma:poly_ode_approx}
Let $\mathcal{C} \subset \R^{2d}$ be a compact set. For any function $H(x,v, t):\mathbb{R}^{2d} \to \mathbb{R}$ which is polynomial in $(x,v)$, there exist polynomial functions $J, F, G$, s.t. the time-$(t_0+\tau,t_0)$ flow map of the system
\begin{equation} 
\begin{cases}  
\frac{dx}{dt} = \frac{\partial}{\partial v} H(x,v,t) \\ 
\frac{dv}{dt} = -\frac{\partial}{\partial x} H(x,v,t) - \gamma \frac{\partial}{\partial v} H(x,v,t)
\end{cases}
\label{eqn:hamiltonian}
\end{equation}
is uniformly $O(\tau^2)$-close over $\mathcal{C}$ in $C^1$
topology to the time-$2\pi$ map of the system
\begin{equation}
\label{eqn:affine_coupling_ode}
\begin{cases}  
\frac{dx}{dt} = v - \tau F(v,t) \odot x \\ 
\frac{dv_j}{dt} = -\Omega_j^2 x_j - \tau J_j(x,t) - \tau v_j G_j(x,t)
\end{cases}
\end{equation}
Here, $\odot$ denotes component-wise product, and the constants inside the $O(\cdot)$ depend on $\mathcal{C}$ and the coefficients of $H.$
\end{lemma}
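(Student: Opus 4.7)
The proof generalizes Turaev's theorem (relating \eqref{eq:turaev} to \eqref{eq:turaev_approx}) to accommodate the additional dissipative term $-\gamma \partial_v H$ in \eqref{eqn:hamiltonian} and the corresponding new terms $-\tau F(v,t) \odot x$ and $-\tau v_j G_j(x,t)$ in \eqref{eqn:affine_coupling_ode}. The overall strategy is an averaging / normal form argument: in \eqref{eqn:affine_coupling_ode}, the leading-order dynamics is a linear oscillator $\dot x = v$, $\dot v_j = -\Omega_j^2 x_j$ with integer frequencies $\Omega_j$, whose time-$2\pi$ map is the identity. The polynomial perturbations enter at order $\tau$, and their averaged effect over one period of the oscillator will match, after appropriate choice of $F, J, G$, the time-$\tau$ flow of \eqref{eqn:hamiltonian} up to $O(\tau^2)$.

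First, I would apply Duhamel's formula (variation of parameters) to the perturbed oscillator to get
\[ \Phi^{2\pi}_{\text{target}}(x_0,v_0) = \Phi^{2\pi}_{\text{osc}}(x_0,v_0) + \tau \int_0^{2\pi} (D\Phi^{2\pi-s}_{\text{osc}}) \, Y\!\left(\Phi^s_{\text{osc}}(x_0,v_0), s\right) ds + O(\tau^2), \]
where $\Phi^s_{\text{osc}}$ is the linear-oscillator flow, $Y = (-F \odot x, -J - v\odot G)$ is the perturbation vector field (with one $\tau$ factored out), and the $O(\tau^2)$ remainder is controlled by sup-norm bounds on $Y$ and its first derivatives on the compact set $\mathcal{C}$. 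Because the $\Omega_j$ are integers, $\Phi^{2\pi}_{\text{osc}} = \mathrm{Id}$, hence $\Phi^{2\pi}_{\text{target}} = \mathrm{Id} + \tau \bar Y + O(\tau^2)$, where $\bar Y$ is the pulled-back average of $Y$ against the oscillator flow. On the other hand, the time-$(t_0+\tau,t_0)$ flow of \eqref{eqn:hamiltonian} equals $\mathrm{Id} + \tau X(\cdot,t_0) + O(\tau^2)$ with $X = (\partial_v H, -\partial_x H - \gamma \partial_v H)$. The proof thus reduces to choosing polynomial $F, J, G$ and integer frequencies $\Omega_j$ so that $\bar Y = X$, both as vector fields and at the level of their first derivatives (for $C^1$ closeness).

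Next, I would expand $H$ in a polynomial basis in $(x,v)$ and solve this cohomological equation monomial by monomial. The oscillator flow rotates each pair $(x_j,v_j)$ at angular frequency $\Omega_j$, so averaging a monomial in $(x,v)$ against this flow is a linear operator on polynomials whose kernel and image are characterized by resonance conditions on the $\Omega_j$. Choosing the $\Omega_j$ to be sufficiently large generic integers, as in Turaev, avoids resonances among the finitely many monomials appearing in $H$, making the relevant operator surjective onto the needed subspace. For the purely Hamiltonian part $(\partial_v H, -\partial_x H)$, the construction reduces exactly to Turaev's and yields the polynomial $J$ (playing the role of $V$ in \eqref{eq:turaev_approx}). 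For the new dissipative term $-\gamma\partial_v H$, I would additionally solve for $G$ using the same averaging machinery, and use the $F(v,t)\odot x$ term in $\dot x$ to cancel any spurious contribution to the $x$-component of the averaged vector field; this is possible since $\partial_v H$ is polynomial, so only finitely many Fourier modes are involved and the resulting $F, G$ are genuine polynomials.

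The main obstacle is bookkeeping: verifying that the linear map $(F,J,G)\mapsto \bar Y$ is surjective onto the target $X$ while keeping $F, J, G$ polynomial, and disentangling the cross-coupling between the new $F$ term in $\dot x$ and the $G$ term in $\dot v$. I would handle this by treating the Hamiltonian and dissipative components additively at the level of the averaged vector field: use Turaev's original construction verbatim for $J$, and separately construct $F, G$ for the $-\gamma\partial_v H$ contribution. Once the polynomials are produced, uniform $C^1$ closeness of the two flow maps on $\mathcal{C}$ follows from a standard Gr\"onwall-type perturbation estimate applied simultaneously to the ODEs and their variational equations, with constants finite and depending only on $\mathcal{C}$ and the coefficients of $H$ by compactness and polynomial growth.
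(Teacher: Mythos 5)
Your high-level strategy is the same as the paper's: reverse/Taylor-expand the time-$\tau$ flow of \eqref{eqn:hamiltonian}, apply variation of parameters to the perturbed oscillator \eqref{eqn:affine_coupling_ode}, use integrality of the $\Omega_j$ so that the unperturbed time-$2\pi$ map is the identity, and reduce the lemma to solving a linear ``cohomological'' equation for the polynomial data $(F,J,G)$ so that the averaged order-$\tau$ term matches $(\partial_v H,\,-\partial_x H-\gamma\partial_v H)$; the Gr\"onwall/variational-equation argument for the $C^1$ error is also as in the paper. The gap is that you assert, rather than prove, the solvability of that linear system, and the mechanism you propose for it is not justified. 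Non-resonance of ``large generic integer'' $\Omega_j$ only buys you that the trigonometric monomials $\cos(\Omega s)^{\mathbf p}\sin(\Omega s)^{\mathbf q}$ appearing after substituting the oscillator trajectories are linearly independent (equivalently, that certain Gram matrices $X,Y$ are invertible). Surjectivity of the map $(F,J,G)\mapsto \bar Y$ does not follow from this alone: after matching coefficients of each monomial $x(0)^{\mathbf p}v(0)^{\mathbf q}$ one gets, for each coordinate $j$ and total multi-index $\mathbf k$, a $2n\times 3n$ system whose matrix is a product of $X,Y$ with diagonal matrices of binomial coefficients and shift (permutation) matrices coming from the binomial expansion of $x^0(s)^{\mathbf p}v^0(s)^{\mathbf q}$. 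Proving this matrix has full row rank is the core of the paper's proof (an explicit reduction to lower-triangular form by row operations, with a count of nonzero diagonal entries), and nothing in your outline substitutes for it.

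Moreover, the specific decoupling you propose --- ``use Turaev's construction verbatim for $J$, and separately construct $F,G$ for the $-\gamma\partial_v H$ contribution'' --- is not how the coupling actually resolves, and it is not clear it can work. In the matched system (cf.\ \eqref{eq:mainpair}), \emph{all three} of $F$, $G$, $J$ contribute to \emph{both} the $x$- and $v$-components of the averaged vector field, because the oscillator rotation mixes $x$ and $v$; the columns corresponding to $F$ and to $G$ turn out to have the \emph{same} left block structure (both of the form $(D_2P^2-D_1)(\cdot)$ in one row block and $(-D_2P+D_1P^{-1})(\cdot)$ in the other), so the rank argument in the paper pairs the $J$-columns with the $F$-columns (or equivalently the $G$-columns) rather than splitting ``Hamiltonian vs.\ dissipative.'' Your dissipative subproblem --- forcing the averaged $x$-component of $(-F\odot x,\,-v\odot G)$ to vanish identically while its averaged $v$-component equals $-\gamma\partial_v H$ --- is itself a square linear system whose solvability is no more obvious than that of the original one; asserting it ``is possible since only finitely many Fourier modes are involved'' does not establish it. To close the gap you would need either to carry out the rank computation (as the paper does) or to exhibit an explicit ansatz for the time-dependent coefficients of $F,J,G$ and verify the resulting integrals termwise.
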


The complete proof of this lemma is included in Appendix \ref{sec:ode_approx_full}; we provide a brief sketch here. First, we consider the first order ($O(\tau^2)$) approximation  of the flow map of a standard ODE of the form $\dot{y}=Dy$ (where $D$ is diagonal), and observe that for small $\tau$, we can think of \eqref{eqn:affine_coupling_ode} as a perturbed version of such an ODE with an appropriate choice of $D$.
Using standard ODE perturbation techniques, we can approximately express the time-$t$ evolution of \eqref{eqn:affine_coupling_ode} up to first-order in $\tau$, in terms of polynomials $F,G,J$ and trigonometric functions.

Then, we compare this map to the first-order approximation of flowing the pair of ODEs \eqref{eqn:hamiltonian} for time $\tau$ via Taylor's theorem. Furthermore, this approximation is a polynomial in $(x,v)$ since $H$ is a polynomial in $(x,v)$.

The crucial step involves choosing the functional form of $F(z,t), J(z,t), G(z,t)$ suitably, so that they are polynomials in $z$ with coefficients in terms of $\sin(\Omega t)$, $\cos(\Omega t)$. After simplification, both expressions can be expressed in terms of polynomials in $x,v$ where coefficients can be expressed in terms of $\int_{0}^{2\pi}\sin^{p}(\Omega s)\cos^{q}(\Omega s)\,ds$, which either integrate to 0 or a constant. 
Thus, to ensure that the two approximations match, we are left with a problem of making two multivariate polynomials in $(x,v)$ equal. 

This final step can of course be written as a linear system of equations. We identify a special structure in this system, which helps us show that the system is full-rank, and hence has a solution. \hfill \qedsymbol


Finally, consider discretizing the newly constructed ODE \eqref{eqn:affine_coupling_ode} into small steps of size $\eta$ by a simple Euler schema i.e.,
\begin{equation}
\label{eqn:discretized_affine_coupling_ode}
\begin{cases}  
x_{n+1}= x_n +  \eta(v_n - \tau F(v_n,\eta n) \odot x_n) \\ 
v_{n+1, j} = v_{n,j} -\eta(\Omega_j^2 x_{n,j} - \tau J_j(x_n,\eta n) - \tau v_{n,j} G_j(x_n,\eta n))
\end{cases}
\end{equation}
We note that each step above can be written as a composition of two affine coupling blocks given by
$(x_n, v_n) \mapsto (x_n, v_{n+1}) \mapsto (x_{n+1},v_{n+1})$.
Namely, the map $(x_n, v_n) \mapsto (x_n, v_{n+1})$ can be written as
\begin{align*}
\begin{cases}
    x_n = x_n \\
    v_{n+1} = v_n \odot (1-\tau)G(x_n, \eta n) - \eta(\Omega^2 \odot x_n - \tau J(x_n, \eta n))
\end{cases}
\end{align*}
This map is an affine coupling block with $s(x_n) = (1-\tau) \odot G(x_n, \eta n)$ and $t(x_n) = - \eta(\Omega^2 \odot x_n - \tau J(x_n, \eta n))$.
The map $(x_n, v_{n+1}) \mapsto (x_{n+1}, v_{n+1})$ can be written as
\begin{align*}
\begin{cases}
    v_{n+1} = v_{n+1} \\
    x_{n+1}= x_n +  \eta(v_{n+1} - \tau F(v_{n+1},\eta n) \odot x_n) \\ 
\end{cases}
\end{align*}
which is an affine coupling block with $s(v_{n+1}) = 1-\eta \tau F(v_{n+1}, \eta n)$ and $t(v_{n+1}) = \eta v_{n+1}$.

The composition of the two maps above yields an affine coupling network $(x_n, v_n) \mapsto (x_{n+1}, v_{n+1})$ precisely as given by \Cref{eqn:discretized_affine_coupling_ode} with non-linearities $s,t$ in each of the blocks given by polynomials. The following lemma bounds the error resulting from this discretization:
\begin{lemma}[Euler's discretization method]
    \footnote{This result is well known in the $C^0$ topology, we provide an analysis for the $C^1$ bound in \Cref{s:discretization_error_proof}.}
    \label{lemma:discretization_error}
    Let $\mathcal{C} \subset \R^{2d}$ be a compact set. Consider discretizing the time from $0$ to $t$ into $\frac{t}{\eta}$ steps and performing the update given by \eqref{eqn:discretized_affine_coupling_ode} at each of these steps. Let the map obtained as a result of discretizing thus be denoted by $T'_t$ and let the original flow map be denoted by $T_t$. Then $T_t$ and $T'_t$ are uniformly $O(\eta)$ close over $\mathcal{C}$ in $C^1$ topology, and the constants inside the $O(\cdot)$ depend on $\mathcal{C}$, and bounds on the derivatives of $T_t$ over $\mathcal{C}$.
\end{lemma}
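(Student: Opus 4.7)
The plan is to establish the $C^0$ bound via the standard Euler's method analysis, and then bootstrap to the $C^1$ bound by applying the same analysis to the variational equation governing the Jacobian of the flow. Write the ODE \eqref{eqn:affine_coupling_ode} abstractly as $\dot y = F(y,t)$ with $y=(x,v) \in \R^{2d}$ and $F$ polynomial in $y$ with coefficients that are smooth in $t$ (they involve $F(v,t)$, $J(x,t)$, $G(x,t)$ from Lemma \ref{lemma:poly_ode_approx}). Because $\mathcal{C}$ is compact and $F$ is polynomial, by continuous dependence and a standard Gr\"onwall estimate, the true trajectories emanating from $\mathcal{C}$ stay in a slightly enlarged compact set $\mathcal{C}'$ over the time interval $[0,t]$, on which $F$, $DF$, and $\partial_t F$ are all uniformly bounded; let $L,M$ denote such bounds.

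For the $C^0$ bound, Taylor expansion of the true solution gives a local truncation error $\|y(s+\eta)-y(s)-\eta F(y(s),s)\| = O(\eta^2)$, with the implicit constant controlled by $M$ and $L$ on $\mathcal{C}'$. A discrete Gr\"onwall argument accumulates this into a global bound $\|T_t(y)-T'_t(y)\| \le C(t,L,M)\,\eta$ uniformly over $y \in \mathcal{C}$, where the constant also absorbs the requirement that the discretized trajectory itself remain in $\mathcal{C}'$ for $\eta$ small enough (which is guaranteed inductively from the same $O(\eta)$ closeness).

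For the $C^1$ bound, differentiate with respect to the initial condition. The Jacobian $K_t(y) := DT_t(y)$ satisfies the variational equation
\begin{equation*}
\ddd t K_t(y) = DF(T_t(y),t)\, K_t(y),\qquad K_0(y)=\id_{2d},
\end{equation*}
while the chain rule applied to \eqref{eqn:discretized_affine_coupling_ode} shows that the discrete Jacobian $K'_n(y) := DT'_{n\eta}(y)$ evolves as
\begin{equation*}
K'_{n+1}(y) = \pa{\id_{2d} + \eta\, DF(T'_{n\eta}(y), n\eta)}\, K'_n(y),
\end{equation*}
which is exactly Euler's method for the variational equation, except evaluated along the discretized trajectory. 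I would view the pair $(y, K)$ as solving a coupled ODE with polynomial right-hand side and apply the $C^0$ Euler analysis of the previous paragraph to this augmented system. A preliminary Gr\"onwall estimate shows $\|K_s(y)\| \le e^{Ls}$, so $K$ remains in a bounded set over $[0,t]$; on this extended compact domain the augmented vector field is again Lipschitz, and the $C^0$ argument yields $\|K_t(y)-K'_t(y)\| = O(\eta)$ uniformly in $y \in \mathcal{C}$.

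The main obstacle is the mild circularity that the Lipschitz bounds needed for the augmented $C^0$ argument assume the discretized trajectory stays in $\mathcal{C}'$, while that containment itself requires the $C^0$ bound to hold. This is resolved by a standard bootstrapping: for $\eta$ below a threshold depending on $t$, $L$, $M$, and $\mathrm{dist}(\mathcal{C},\partial\mathcal{C}')$, one can inductively verify that every iterate $T'_{n\eta}(y)$ lies in $\mathcal{C}'$, after which both bounds close up simultaneously. The hidden constants then depend only on $\mathcal{C}$ and on bounds on the derivatives of $F$ over $\mathcal{C}'$, which in turn bound the derivatives of $T_t$, as stated.
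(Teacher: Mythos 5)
Your proposal follows essentially the same route as the paper's proof in \Cref{s:discretization_error_proof}: a local truncation error bound plus a (discrete) Gr\"onwall accumulation for the $C^0$ estimate, then the variational/augmented system $(y,K)$ treated by the same Euler analysis for the $C^1$ estimate, together with a clamping-and-bootstrapping step to deal with the lack of global Lipschitzness. The one point you gloss over is that the map actually realized by the composition of the two affine-coupling blocks is not the simultaneous Euler update $y_{n+1}=y_n+\eta F(y_n,t_n)$ but an \emph{alternating} (split) update in which the $x$-step uses the already-updated $v_{n+1}$, i.e.\ $X_{i+1}=X_i+\eta f(X_i,V_{i+1},t_i)$, $V_{i+1}=V_i+\eta g(X_i,V_i,t_i)$. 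Consequently your identity $K'_{n+1}=\pa{\id_{2d}+\eta\,DF(T'_{n\eta}(y),n\eta)}K'_n$ is not exact for the discrete map in question, and both the trajectory and the Jacobian recurrences carry an extra term. The paper handles this by adding to the local truncation error the contribution $\ve{\eta[f(x_i,v_i+\eta g(x_i,v_i,t_i),t_i)-f(x_i,v_i,t_i)]}\le \eta^2\max\ve{D_vf}\max\ve{g}$, and by checking that differentiating the alternating update reproduces exactly the alternating update applied to the augmented system \eqref{eqn:combined_odes}. Since this discrepancy is $O(\eta^2)$ per step, it does not change the conclusion or the structure of your argument, but as written you analyze a slightly different discretization than the one the lemma is actually invoked for.
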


\subsection{Simulating by breaking into $\tau$-sized intervals}

Let $T_{s,t}$ denote the time-$s,t$ flow-map of \eqref{eq:underdamped_hamiltonian} from time $s$ to time $t$. Since the flow maps are invertible, $T_{s,t}$ and $T_{t,s}$ are inverses. 
We are now ready to state the following lemma which says that the underdamped Langevin flow-map $T_{\timethree,0}$ can be written as a composition of affine-couplings maps:
\begin{lemma}
\label{l:ode_approx}
Let $\mathcal{C} \subset \R^{2d}$ be a compact set. Suppose that $T_{\timethree,0}(x,v)$ is the time-$(\timethree,0)$ flow-map of the ODE's
\begin{equation}
\label{eqn:ode_approx}
\begin{cases}  
\frac{dx}{dt} = \frac{\partial}{\partial v} H(x,v,t) \\ 
\frac{dv}{dt} = -\frac{\partial}{\partial x} H(x,v,t) - \gamma \frac{\partial}{\partial v} H(x,v,t)
\end{cases}
\end{equation}
where $H$ is $C^\infty$. Then for any $\epsilon_1,  \timethree \in \R_+$, there exists an integer $N = N(\epsilon_1, \timethree, \mathcal{C})$ 
and affine-coupling blocks $f_1, \ldots, f_N$ such that the composition $f = f_N \circ \cdots \circ f_1$ is $\epsilon_1$-close to $T_{\timethree,0}$ in the $C^1$ topology over $\mathcal{C}$.
\end{lemma}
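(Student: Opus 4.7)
The plan is to partition $[0,\timethree]$ into short sub-intervals of length $\tau$, approximate the inverse flow on each sub-interval by a bounded sequence of affine-coupling blocks using the three lemmas developed above, and glue the pieces together while carefully tracking how $C^1$ errors propagate through the composition. More precisely, I will fix a small $\tau>0$ (to be chosen at the end) with $\timethree = N\tau$ for an integer $N$ (perturbing $\tau$ by an arbitrarily small amount if necessary), and use the semigroup decomposition
\[
    T_{\timethree,0} \;=\; T_{\tau,0}\circ T_{2\tau,\tau}\circ \cdots \circ T_{N\tau,(N-1)\tau}.
\]
Since $H \in C^\infty$ and $\mathcal{C}$ is compact, standard ODE theory (Gr\"onwall applied to the variational equation) ensures that all trajectories of \eqref{eqn:ode_approx} starting in $\mathcal{C}$ remain inside an enlarged compact set $\mathcal{C}' \supset \mathcal{C}$ for every $t\in [0,\timethree]$, each flow map $T_{s,t}$ is $C^1$-bounded on $\mathcal{C}'$, and the per-step Jacobian satisfies $DT_{k\tau,(k-1)\tau} = \id + O(\tau)$ uniformly on $\mathcal{C}'$.

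On each sub-interval I will approximate $T_{k\tau,(k-1)\tau}$ by a composition of affine-coupling blocks in three stages. First, Lemma \ref{lemma:poly_approx_smooth_fn} applied to $H$ on $\mathcal{C}'\times[0,\timethree]$ produces a polynomial $\td H(x,v,t)$ that is $\delta$-close to $H$ in $C^1$; a second Gr\"onwall estimate shows that replacing $H$ by $\td H$ perturbs the inverse $\tau$-flow by $O(\delta)$ in $C^1$ on $\mathcal{C}'$. Second, Lemma \ref{lemma:poly_ode_approx} applied to $\td H$ yields polynomials $F,G,J$ for which the inverse $\tau$-flow of the polynomial system is $O(\tau^2)$-close in $C^1$ on $\mathcal{C}'$ to the time-$2\pi$ map of \eqref{eqn:affine_coupling_ode}. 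Third, Lemma \ref{lemma:discretization_error} shows that Euler-discretizing \eqref{eqn:affine_coupling_ode} with step $\eta$ gives a map that is $O(\eta)$-close in $C^1$ on $\mathcal{C}'$ to the exact time-$2\pi$ flow; as explained around \eqref{eqn:discretized_affine_coupling_ode}, this discretization is literally a composition of $2\lceil 2\pi/\eta\rceil$ affine-coupling blocks with polynomial non-linearities. Summing the three contributions, each $T_{k\tau,(k-1)\tau}$ admits an affine-coupling approximant $g_k$ with
\[
    \ve{T_{k\tau,(k-1)\tau} - g_k}_{C^1(\mathcal{C}')} \;\le\; \epsilon_0 \;:=\; C(\delta + \tau^2 + \eta),
\]
where $C$ depends only on $H,\mathcal{C},\timethree$.

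For the composition step I will telescope $T_{\timethree,0} - g_N\circ\cdots\circ g_1$ as a sum, over $k=1,\ldots,N$, of terms of the form $T_{N\tau,(N-1)\tau}\circ\cdots\circ T_{(k+1)\tau,k\tau}\circ(T_{k\tau,(k-1)\tau} - g_k)\circ g_{k-1}\circ\cdots\circ g_1$, and apply the chain rule. Provided $\epsilon_0$ is small enough that every partial composition $g_{k}\circ\cdots\circ g_1$ maps $\mathcal{C}$ into $\mathcal{C}'$, this yields
\[
    \ve{T_{\timethree,0} - g_N\circ\cdots\circ g_1}_{C^1(\mathcal{C})} \;\le\; N\cdot L^{N}\cdot \epsilon_0,
\]
where $L = \sup_k\ve{DT_{k\tau,(k-1)\tau}}_{C^0(\mathcal{C}')} \le 1 + c\tau$ for a constant $c$ depending only on $H,\mathcal{C},\timethree$. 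Hence $L^N \le e^{c\timethree}$ is bounded independently of $\tau$, and the total $C^1$ error is at most $(\timethree/\tau)\,e^{c\timethree}\,C(\delta + \tau^2 + \eta)$. Choosing $\tau$ small enough that the $\tau^2$ contribution $\timethree e^{c\timethree}C\tau \le \epsilon_1/3$, then $\delta$ small enough (depending on the chosen $\tau$) that $(\timethree/\tau)e^{c\timethree}C\delta \le \epsilon_1/3$, and finally $\eta$ small enough that $(\timethree/\tau)e^{c\timethree}C\eta \le \epsilon_1/3$, produces the required affine-coupling composition $f = f_N\circ\cdots\circ f_1$ with a total of $N = N(\epsilon_1,\timethree,\mathcal{C})$ blocks.

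The main obstacle is this composition step: naively, composing $N = \timethree/\tau$ maps each $\epsilon_0$-close in $C^1$ could blow up the error by a factor $L^N$ with $L$ bounded away from $1$, producing a bound exponential in $1/\tau$ that would preclude sending $\tau\to 0$. The observation that saves the argument is that each per-step inverse flow is an $O(\tau)$-perturbation of the identity, so $L = 1 + O(\tau)$ and $L^N = (1+O(\tau))^{\timethree/\tau} \le e^{O(\timethree)}$ is uniform in $\tau$; this is precisely what makes it safe to shrink $\tau$ in order to absorb the $O(\tau^2)$ per-step error from Lemma \ref{lemma:poly_ode_approx} without destroying the composition bound.
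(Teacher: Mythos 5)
Your proposal is correct and follows essentially the same route as the paper's proof: reduce to polynomial $H$ via Lemma \ref{lemma:poly_approx_smooth_fn}, approximate each $\tau$-step flow by Lemma \ref{lemma:poly_ode_approx} plus the Euler discretization of Lemma \ref{lemma:discretization_error}, and telescope the composition using the key fact that each per-step Jacobian is $\id+O(\tau)$ so the accumulated Lipschitz factor is $e^{O(\timethree)}$ uniformly in $\tau$. The only detail you gloss over is that bounding the $C^1$ error of the composition requires comparing $DT\vert_{A_k(x)}$ with $DT\vert_{g_k(x)}$, which the paper handles via a uniform bound on the second derivatives of the flow maps (Lemma \ref{l:DT-bound}); this is routine given your compactness setup.
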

The proof of Lemma \ref{l:ode_approx} is in \Cref{sec:proof_ode_approx}. We provide a brief sketch here: from Lemma \ref{lemma:poly_approx_smooth_fn}, we know that it suffices to show the result for a polynomial $H$. Thereafter, we break the time for which we want to flow the ODE given by \eqref{eqn:ode_approx} into small chunks of length $\tau$. \Cref{lemma:poly_ode_approx,,lemma:discretization_error} then show that the flow map over this chunk can be written as an affine coupling network. Composing the affine coupling networks over all the chunks of time gives us the result.

\subsection{Putting components together}
The previous sections established that for any $t$ and any compact set $\mathcal{C}$, there is a affine-coupling network $f$ with polynomial non-linearities such that $T_{t,0}$ and $f$ are uniformly close over $\mathcal{C}$. We will now pick an appropriate value of $t$ and set $\mathcal{C}$ such that $W_1(f_\# (p^* |_{\mathcal{C}}), p_0) \le \epsilon$ where $p^* = \mathcal{N}(0,\id_{2d})$, which is the required result of \Cref{thm:main_universal_approx}.
First, using \Cref{thm:L-ma_mcmc}, for 
\[ \phi > -10 \log \epsilon_1 + \log 2 + \log \mathcal{L}[p_0]\]
we have that $\KL(T_{0,\phi\#} (p_0), p^*) \le \frac{\epsilon_1^2}{2}$. We use the following \emph{transportion cost inequality} to convert this to a Wasserstein bound.
\begin{theorem}[\citet{talagrand1996transportation}]
    The standard Gaussian $p$ on $\R^d$ satisfies a \emph{transportation cost inequality}: For every distribution $q$ on $\R^d$ with finite second moment, 
    $W_1(p,q)^2 \le 2 KL(q \Vert p)$.
\end{theorem}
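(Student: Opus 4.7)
The stated inequality is the $W_1$-version of Talagrand's $T_2$ transportation inequality for the standard Gaussian. Since $W_1(p,q) \le W_2(p,q)$ by Cauchy--Schwarz applied to any coupling, it suffices to prove the stronger bound $W_2(p,q)^2 \le 2\KL(q \Vert p)$. I would do this via the F\"ollmer drift construction combined with Girsanov's theorem, which gives perhaps the shortest self-contained argument in the Gaussian case. (An alternative that I would fall back on if needed is Otto--Villani: the standard Gaussian satisfies a log-Sobolev inequality with constant $1$ by Gross, and one can then invoke the implication LSI $\Rightarrow T_2$; but that implication is itself a substantial result, so the F\"ollmer route is preferable.)

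\textbf{F\"ollmer drift.} Let $(W_t)_{t\in[0,1]}$ be a standard Brownian motion in $\R^d$ on some filtered space, so that $W_1 \sim p$. For any $q$ with $\KL(q\Vert p) < \infty$, the plan is to construct an adapted process $(u_t)_{t\in[0,1]}$ such that the SDE
\[ dX_t = u_t\, dt + dW_t, \qquad X_0 = 0, \]
has $X_1 \sim q$. The standard choice is the score-type drift
\[ u_t = \nabla \log \E\bigl[\tfrac{dq}{dp}(W_1)\,\big|\, \mathcal F_t\bigr]\Big|_{W \mapsto X}, \]
which one verifies by a heat-equation / Doob-martingale computation gives the correct terminal law.

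\textbf{Girsanov and the bound.} Under the measure induced by the drift, Girsanov's theorem yields the exact identity
\[ \KL(q \Vert p) = \tfrac{1}{2}\,\E\!\int_0^1 \|u_s\|^2\, ds. \]
Now $(W_1, X_1)$ is a coupling of $p$ and $q$, so
\[ W_2(p,q)^2 \;\le\; \E\|X_1 - W_1\|^2 \;=\; \E\Big\|\!\int_0^1 u_s\, ds\Big\|^2 \;\le\; \E\!\int_0^1 \|u_s\|^2\, ds \;=\; 2\KL(q \Vert p), \]
where the middle inequality is Jensen (or Cauchy--Schwarz) applied to the time integral on $[0,1]$. Combined with $W_1 \le W_2$, this gives $W_1(p,q)^2 \le 2\KL(q \Vert p)$, as required.

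\textbf{Main obstacle.} The analytic content is concentrated in making the F\"ollmer drift construction rigorous: one needs to verify that $u_t$ is well-defined, adapted, and square-integrable (the last following precisely from the finiteness of $\KL(q\Vert p)$, via the martingale characterization), and that Girsanov applies so as to convert a Novikov-type condition into an equality rather than merely an inequality. The standard way around technicalities is to first prove the inequality for $q$ with bounded, compactly supported smooth density relative to $p$ and then pass to the limit using lower semicontinuity of both $W_1^2$ and $\KL$ under weak convergence; this truncation/approximation step is the only place where real care is needed.
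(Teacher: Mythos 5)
Your proposal is correct, but note that the paper does not prove this statement at all: it is imported verbatim as a known theorem of \citet{talagrand1996transportation}, so there is no internal proof to compare against. What you have written is a genuine (and well-known) proof of the stronger $T_2$ inequality $W_2(p,q)^2 \le 2\KL(q\Vert p)$ via the F\"ollmer drift and Girsanov's theorem --- essentially Lehec's stochastic-control argument --- followed by the elementary reduction $W_1 \le W_2$. This differs from Talagrand's original route, which proves the one-dimensional case by a direct computation with the quantile coupling and then tensorizes by induction on the dimension; your route is arguably cleaner conceptually but front-loads the measure-theoretic work (adaptedness and square-integrability of the drift, validity of Girsanov, the identity $\KL(q\Vert p)=\tfrac12\E\int_0^1\|u_s\|^2\,ds$, and the verification that $X_1\sim q$), all of which you correctly identify as the real content. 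The synchronous coupling $(W_1,X_1)$ and the Jensen step $\E\|\int_0^1 u_s\,ds\|^2\le \E\int_0^1\|u_s\|^2\,ds$ are exactly right, and the approximation/lower-semicontinuity argument you sketch for passing from nice $q$ to general $q$ with finite second moment is the standard and correct way to close the argument (the case $\KL(q\Vert p)=\infty$ being vacuous). For the purposes of this paper, a citation to \citet{talagrand1996transportation} suffices, but your outline would be an acceptable self-contained proof if fully fleshed out.
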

This gives us that $W_1(T_{0,\phi \, \#} (p_0), p^*) \le \epsilon_1$. A simple argument in \Cref{l:wasserstein_bound_1} (\Cref{s:wasserstein_bounds}) then gives
\begin{equation}
\label{eqn:pit_triangle_ineq_1}
W_1(p_0, T_{\phi,0 \, \#} (p^*)) = W_1(T_{\phi,0 \, \#} (T_{0,\phi \, \#} (p_0)), T_{\phi,0 \, \#} (p^*)) \le \Lip(T_{\phi,0}) \epsilon_1
\end{equation}

A subsequent argument stated as \Cref{l:wasserstein_bound_2} in \Cref{s:wasserstein_bounds}, shows that if $f$ and $T_{\phi,0}$ are uniformly $\epsilon_1$-close in $C^0$ topology on some $\mathcal{C}$, then their pushforwards through $p^*|_\mathcal{C}$ are indeed close, i.e.,
\begin{equation}
\label{eqn:pit_triangle_ineq_2}
W_1(T_{\phi,0 \, \#} (p^* |_\mathcal{C}), f_\# (p^* |_\mathcal{C})) \le \epsilon_1.
\end{equation}

Next, we establish a bound on the Wasserstein distance between the standard Gaussian and its truncation on a compact set, proved in \Cref{s:conditional_wasserstein_proof}.

\begin{lemma}
\label{l:conditional_wasserstein}
Let $p^* = \mathcal{N}(0,\id_{2d})$. Then for every $\delta \in \R_+$, there exists a compact set $\mathcal{C} = B(0,R)$ such that $W_1(p^*, p^* |_\mathcal{C}) \le \delta$, where $B(0,R)$ denotes the ball of radius $R$ centered at the origin.
\end{lemma}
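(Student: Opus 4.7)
\textbf{Proof plan for Lemma \ref{l:conditional_wasserstein}.} The plan is to exhibit an explicit coupling between $p^*$ and $p^*|_\mathcal{C}$ that has small expected cost, and then use Gaussian tail bounds to drive the expected cost below $\delta$ by choosing the radius $R$ large enough.

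\emph{Construction of the coupling.} I would take $\mathcal{C} = B(0,R)$ for an $R$ to be chosen later, draw $X \sim p^*$, and define
\[ Y = \begin{cases} X, & X \in \mathcal{C}, \\ Z, & X \notin \mathcal{C}, \end{cases} \]
where $Z \sim p^*|_\mathcal{C}$ is drawn independently of $X$. A short calculation verifies that this defines a valid coupling: for any measurable $A \subseteq \R^{2d}$,
\[ \Pr[Y \in A] = p^*(A \cap \mathcal{C}) + (1 - p^*(\mathcal{C}))\,\frac{p^*(A \cap \mathcal{C})}{p^*(\mathcal{C})} = \frac{p^*(A \cap \mathcal{C})}{p^*(\mathcal{C})} = p^*|_\mathcal{C}(A). \]

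\emph{Bounding the cost.} Under this coupling, $\|X-Y\|$ is zero whenever $X \in \mathcal{C}$, and otherwise is bounded by $\|X\| + \|Y\| \le \|X\| + R$ since $Y \in \mathcal{C}$. By the definition of $W_1$ and the triangle inequality,
\[ W_1(p^*, p^*|_\mathcal{C}) \;\le\; \E\ba{\|X-Y\|\,\mathbf 1[X\notin \mathcal{C}]} \;\le\; \E\ba{\|X\|\,\mathbf 1[\|X\|>R]} + R\,\Pr\ba{\|X\|>R}. \]

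\emph{Gaussian tail estimates.} Since $\|X\|^2 \sim \chi^2_{2d}$, standard concentration (e.g.\ the Laurent--Massart bound) gives $\Pr[\|X\|>R] \le \exp(-c(R^2 - 2d))$ for some absolute constant $c>0$, once $R \ge 2\sqrt{d}$. In particular $R\,\Pr[\|X\|>R] \to 0$ as $R\to\infty$. For the other term, integration by parts gives
\[ \E\ba{\|X\|\,\mathbf 1[\|X\|>R]} = R\,\Pr[\|X\|>R] + \int_R^\infty \Pr[\|X\|>r]\,dr, \]
and the same sub-Gaussian tail bound makes this integral finite and vanishing as $R\to\infty$. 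Consequently, choosing $R = R(\delta,d)$ sufficiently large makes the sum of the two terms at most $\delta$, which yields $W_1(p^*, p^*|_\mathcal{C}) \le \delta$ as desired. No step is especially delicate; the only care needed is the explicit tail estimate that shows $R\,\Pr[\|X\|>R] \to 0$, which is immediate from the chi-squared concentration inequality.
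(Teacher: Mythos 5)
Your proof is correct and takes essentially the same approach as the paper: the identical coupling (set $Y=X$ on $\mathcal{C}$, resample $Y\sim p^*|_\mathcal{C}$ independently otherwise, with the same verification of the $Y$-marginal) and the same cost bound $\|X\|+R$ on the event $\|X\|>R$. The only difference is cosmetic: the paper bounds the tail term by $2\int_{\|x\|>R}\|x\|\,dp^*$ and invokes dominated convergence, whereas you use explicit chi-squared concentration; both suffice.
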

We now choose a compact set $\mathcal{C}$ such that \Cref{l:conditional_wasserstein} holds for $\delta=\epsilon_1$. Then \Cref{l:wasserstein_bound_1} again implies that
\begin{equation}
\label{eqn:pit_triangle_ineq_3}
W_1(T_{\phi,0 \, \#} (p^*), T_{\phi,0 \, \#}\ (p^* |_\mathcal{C})) \le \Lip(T_{\phi,0}) \epsilon_1 \end{equation}
Equations \eqref{eqn:pit_triangle_ineq_1}, \eqref{eqn:pit_triangle_ineq_2}, \eqref{eqn:pit_triangle_ineq_3} and the triangle inequality together imply 
\[ W_1(f_\#(p^* |_\mathcal{C}), p_0) \le (2 \Lip(T_{\phi,0}) + 1) \epsilon_1 \le \epsilon\]
for small enough $\ep_1$. We can indeed set $\epsilon_1$ small enough so as to satisfy the last inequality above, because of the global bound  $\Lip(T_{\phi,0}) \le \brac{1 + \frac{2 + \gamma}{2 - \gamma}(\kappa - 1)}^{2/\gamma}$ established in \Cref{lemma:underdamped_conditioning}. This gives us the statement of Theorem \ref{thm:main_universal_approx}. Note that the final value of $\phi$ depends on $\epsilon, \kappa, \gamma$ and $\mathcal{L}[p_0]$.

\ifdefined\PROOFS
Consider the coupling $\gamma'$, where a sample $(x,y) \sim \gamma'$ is generated as follows: first, we sample $z \sim p_0|_\mathcal{C}$, and then compute $x=f(z)$, $y=T_t(z)$. By definition of the pushforward, the marginals of $x$ and $y$ are $f_\#(p_0|_\mathcal{C})$ and ${T_t}_\#(p_0|_\mathcal{C})$ respectively. However, we know that for this $\gamma'$, from Lemma \ref{l:ode_approx}, $\|x-y\|_2 \le \epsilon$ uniformly. Thus, we can conclude that
\begin{align}
    W_1(f_\#(p_0|_\mathcal{C}), {T_t}_\#(p_0|_\mathcal{C})) &\le \int_{\R^d \times \R^d}\|x-y\|_2 \,d\gamma'(x,y) \\
    &\le \int_{\R^d \times \R^d}\epsilon\, d\gamma'(x,y)
    =\epsilon
\end{align}
\fi

\section{Related Work}
\label{sec:related_work}

The landscape of normalizing flow models is rather rich. The inception of the ideas was in \cite{rezende2015variational} and \cite{dinh2014nice}, and in recent years, an immense amount of research has been dedicated to developing different architectures of normalizing flows. The focus of this paper are affine coupling flows, which were introduced in \cite{dinh2014nice}, introduced the idea of using pushforward maps with triangular Jacobians for computational efficiency. This was further developed in \cite{dinh2016density} and culminated in \cite{kingma2018glow}, who introduced 1x1 convolutions in the affine coupling framework to allow for ``trainable'' choices of partitions. We note, there have been variants of normalizing flows in which the Jacobian is non-triangular, e.g. \citep{grathwohl2018ffjord, dupont2019augmented, behrmann2018invertible}, but these models still don't scale beyond datasets the size of CIFAR-10. 

In terms of theoretical results, the most closely related works are \cite{huang2020augmented, zhang2020approximation, koehler2020representational}. 
The former two show universal approximation of affine couplings---albeit if the input is padded with zeros. 
This of course results in maps with singular Jacobians, which is why this strategy isn't used in practice. \cite{koehler2020representational} show universal approximation without padding---though their constructions results in a flow model with condition number $1/\epsilon$ to get approximation $\epsilon$ in the Wasserstein sense, regardless of how well-behaved the distribution to be approximated is. Furthemore, \cite{koehler2020representational} provide some empirical evidence that padding with iid Gaussians (as in our paper) is better than both zero padding (as in \cite{huang2020augmented, zhang2020approximation}) and no padding on small-scale data.

\section{Conclusion}

In this paper, we provide the first guarantees on universal approximation with \emph{well-conditioned} affine coupling networks. The conditioning of the network is crucial when the networks are trained using gradient-based optimization of the likelihood. Mathematically, we uncover connections between stochastic differential equations, dynamical systems and affine coupling flows. 
Our construction uses Gaussian padding, which lends support to the empirical observation that this strategy tends to result in better-conditioned flows \citep{koehler2020representational}. We leave it as an open problem to generalize beyond log-concave distributions.

\newpage
\nocite{*}
\bibliographystyle{plainnat}
\bibliography{references}

\begin{thebibliography}{35}
\providecommand{\natexlab}[1]{#1}
\providecommand{\url}[1]{\texttt{#1}}
\expandafter\ifx\csname urlstyle\endcsname\relax
  \providecommand{\doi}[1]{doi: #1}\else
  \providecommand{\doi}{doi: \begingroup \urlstyle{rm}\Url}\fi

\bibitem[Abraham and Marsden(2008)]{abraham2008foundations}
Ralph Abraham and Jerrold~E Marsden.
\newblock \emph{Foundations of mechanics}.
\newblock Number 364. American Mathematical Soc., 2008.

\bibitem[Ascher and Greif(2011)]{ascher2011first}
Uri~M Ascher and Chen Greif.
\newblock \emph{A first course on numerical methods}.
\newblock SIAM, 2011.

\bibitem[Bakry and {\'E}mery(1985)]{bakry1985diffusions}
Dominique Bakry and Michel {\'E}mery.
\newblock Diffusions hypercontractives.
\newblock In \emph{S{\'e}minaire de Probabilit{\'e}s XIX 1983/84}, pages
  177--206. Springer, 1985.

\bibitem[Bakry et~al.(2013)Bakry, Gentil, and Ledoux]{bakry2013analysis}
Dominique Bakry, Ivan Gentil, and Michel Ledoux.
\newblock \emph{Analysis and geometry of {M}arkov diffusion operators}, volume
  348.
\newblock Springer Science \& Business Media, 2013.

\bibitem[Behrmann et~al.(2018)Behrmann, Grathwohl, Chen, Duvenaud, and
  Jacobsen]{behrmann2018invertible}
Jens Behrmann, Will Grathwohl, Ricky~TQ Chen, David Duvenaud, and
  J{\"o}rn-Henrik Jacobsen.
\newblock Invertible residual networks.
\newblock \emph{arXiv preprint arXiv:1811.00995}, 2018.

\bibitem[Bobkov and Götze(1999)]{BOBKOV19991}
S.G Bobkov and F~Götze.
\newblock Exponential integrability and transportation cost related to
  logarithmic sobolev inequalities.
\newblock \emph{Journal of Functional Analysis}, 163\penalty0 (1):\penalty0
  1--28, 1999.
\newblock ISSN 0022-1236.
\newblock \doi{https://doi.org/10.1006/jfan.1998.3326}.
\newblock URL
  \url{https://www.sciencedirect.com/science/article/pii/S0022123698933262}.

\bibitem[Brascamp and Lieb(2002)]{brascamp2002extensions}
Herm~Jan Brascamp and Elliott~H Lieb.
\newblock On extensions of the brunn-minkowski and pr{\'e}kopa-leindler
  theorems, including inequalities for log concave functions, and with an
  application to the diffusion equation.
\newblock In \emph{Inequalities}, pages 441--464. Springer, 2002.

\bibitem[Chicone(2006)]{chicone2006ordinary}
Carmen Chicone.
\newblock \emph{Ordinary differential equations with applications}.
\newblock Springer, New York Berlin, 2006.
\newblock ISBN 9780387307695.

\bibitem[Dinh et~al.(2014)Dinh, Krueger, and Bengio]{dinh2014nice}
Laurent Dinh, David Krueger, and Yoshua Bengio.
\newblock Nice: Non-linear independent components estimation.
\newblock \emph{arXiv preprint arXiv:1410.8516}, 2014.

\bibitem[Dinh et~al.(2016)Dinh, Sohl-Dickstein, and Bengio]{dinh2016density}
Laurent Dinh, Jascha Sohl-Dickstein, and Samy Bengio.
\newblock Density estimation using real nvp.
\newblock \emph{arXiv preprint arXiv:1605.08803}, 2016.

\bibitem[Dupont et~al.(2019)Dupont, Doucet, and Teh]{dupont2019augmented}
Emilien Dupont, Arnaud Doucet, and Yee~Whye Teh.
\newblock Augmented neural odes.
\newblock In \emph{Advances in Neural Information Processing Systems}, pages
  3134--3144, 2019.

\bibitem[Grathwohl et~al.(2018)Grathwohl, Chen, Bettencourt, Sutskever, and
  Duvenaud]{grathwohl2018ffjord}
Will Grathwohl, Ricky~TQ Chen, Jesse Bettencourt, Ilya Sutskever, and David
  Duvenaud.
\newblock Ffjord: Free-form continuous dynamics for scalable reversible
  generative models.
\newblock \emph{arXiv preprint arXiv:1810.01367}, 2018.

\bibitem[Harg{\'e}(2004)]{harge2004convex}
Gilles Harg{\'e}.
\newblock A convex/log-concave correlation inequality for {G}aussian measure
  and an application to abstract {W}iener spaces.
\newblock \emph{Probability theory and related fields}, 130\penalty0
  (3):\penalty0 415--440, 2004.

\bibitem[H{\'e}non(1976)]{henon1976two}
Michel H{\'e}non.
\newblock A two-dimensional mapping with a strange attractor.
\newblock In \emph{The Theory of Chaotic Attractors}, pages 94--102. Springer,
  1976.

\bibitem[Huang et~al.(2020)Huang, Dinh, and Courville]{huang2020augmented}
Chin-Wei Huang, Laurent Dinh, and Aaron Courville.
\newblock Augmented normalizing flows: Bridging the gap between generative
  flows and latent variable models.
\newblock \emph{arXiv preprint arXiv:2002.07101}, 2020.

\bibitem[Jaini et~al.(2019)Jaini, Selby, and Yu]{jaini2019sumofsquares}
Priyank Jaini, Kira~A. Selby, and Yaoliang Yu.
\newblock Sum-of-squares polynomial flow, 2019.

\bibitem[Kingma and Dhariwal(2018)]{kingma2018glow}
Durk~P Kingma and Prafulla Dhariwal.
\newblock Glow: Generative flow with invertible 1x1 convolutions.
\newblock In \emph{Advances in Neural Information Processing Systems}, pages
  10215--10224, 2018.

\bibitem[Koehler et~al.(2020)Koehler, Mehta, and
  Risteski]{koehler2020representational}
Frederic Koehler, Viraj Mehta, and Andrej Risteski.
\newblock Representational aspects of depth and conditioning in normalizing
  flows, 2020.

\bibitem[Kolesnikov(2011)]{kolesnikov2011mass}
Alexander~V Kolesnikov.
\newblock Mass transportation and contractions.
\newblock \emph{arXiv preprint arXiv:1103.1479}, 2011.

\bibitem[Ma et~al.(2019)Ma, Chatterji, Cheng, Flammarion, Bartlett, and
  Jordan]{ma2019mcmc}
Yi-An Ma, Niladri Chatterji, Xiang Cheng, Nicolas Flammarion, Peter Bartlett,
  and Michael~I. Jordan.
\newblock Is there an analog of nesterov acceleration for mcmc?
\newblock feb 2019.
\newblock URL \url{http://arxiv.org/abs/1902.00996v2}.

\bibitem[Meng et~al.(2020)Meng, Song, Song, and Ermon]{meng2020gaussianization}
Chenlin Meng, Yang Song, Jiaming Song, and Stefano Ermon.
\newblock Gaussianization flows, 2020.

\bibitem[Otto and Villani(2000)]{otto2000generalization}
Felix Otto and C{\'e}dric Villani.
\newblock Generalization of an inequality by talagrand and links with the
  logarithmic sobolev inequality.
\newblock \emph{Journal of Functional Analysis}, 173\penalty0 (2):\penalty0
  361--400, 2000.

\bibitem[Pavliotis(2014)]{pavliotis2014stochastic}
Grigorios~A Pavliotis.
\newblock \emph{Stochastic processes and applications: diffusion processes, the
  Fokker-Planck and Langevin equations}, volume~60.
\newblock Springer, 2014.

\bibitem[Peet(2007)]{peet2007exponentially}
Matthew~M. Peet.
\newblock Exponentially stable nonlinear systems have polynomial lyapunov
  functions on bounded regions, 2007.

\bibitem[Polterovich(2012)]{polterovich2012geometry}
Leonid Polterovich.
\newblock \emph{The geometry of the group of symplectic diffeomorphism}.
\newblock Birkh{\"a}user, 2012.

\bibitem[Rezende and Mohamed(2015)]{rezende2015variational}
Danilo Rezende and Shakir Mohamed.
\newblock Variational inference with normalizing flows.
\newblock In \emph{International Conference on Machine Learning}, pages
  1530--1538. PMLR, 2015.

\bibitem[Saumard and Wellner(2014)]{saumard2014log}
Adrien Saumard and Jon~A Wellner.
\newblock Log-concavity and strong log-concavity: a review.
\newblock \emph{Statistics surveys}, 8:\penalty0 45, 2014.

\bibitem[Song and Ermon(2019)]{song2019generative}
Yang Song and Stefano Ermon.
\newblock Generative modeling by estimating gradients of the data distribution.
\newblock \emph{arXiv preprint arXiv:1907.05600}, 2019.

\bibitem[Song et~al.(2020)Song, Sohl-Dickstein, Kingma, Kumar, Ermon, and
  Poole]{song2020score}
Yang Song, Jascha Sohl-Dickstein, Diederik~P Kingma, Abhishek Kumar, Stefano
  Ermon, and Ben Poole.
\newblock Score-based generative modeling through stochastic differential
  equations.
\newblock \emph{arXiv preprint arXiv:2011.13456}, 2020.

\bibitem[Taghvaei and Mehta(2019)]{taghvaei2019accelerated}
Amirhossein Taghvaei and Prashant Mehta.
\newblock Accelerated flow for probability distributions.
\newblock In \emph{International Conference on Machine Learning}, pages
  6076--6085. PMLR, 2019.

\bibitem[Talagrand(1996)]{talagrand1996transportation}
Michel Talagrand.
\newblock Transportation cost for gaussian and other product measures.
\newblock \emph{Geometric \& Functional Analysis GAFA}, 6\penalty0
  (3):\penalty0 587--600, 1996.

\bibitem[Teshima et~al.(2020)Teshima, Ishikawa, Tojo, Oono, Ikeda, and
  Sugiyama]{teshima2006coupling}
Takeshi Teshima, Isao Ishikawa, Koichi Tojo, Kenta Oono, Masahiro Ikeda, and
  Masashi Sugiyama.
\newblock Coupling-based invertible neural networks are universal
  diffeomorphism approximators.
\newblock jun 2020.
\newblock URL \url{http://arxiv.org/abs/2006.11469v2}.

\bibitem[Turaev(2002)]{turaev2002polynomial}
Dmitry Turaev.
\newblock Polynomial approximations of symplectic dynamics and richness of
  chaos in non-hyperbolic area-preserving maps.
\newblock \emph{Nonlinearity}, 16\penalty0 (1):\penalty0 123--135, nov 2002.
\newblock \doi{10.1088/0951-7715/16/1/308}.
\newblock URL \url{https://doi.org/10.1088\%2F0951-7715\%2F16\%2F1\%2F308}.

\bibitem[Vershynin(2018)]{vershynin2018high}
Roman Vershynin.
\newblock \emph{High-dimensional probability: An introduction with applications
  in data science}, volume~47.
\newblock Cambridge university press, 2018.

\bibitem[Zhang et~al.(2020)Zhang, Gao, Unterman, and
  Arodz]{zhang2020approximation}
Han Zhang, Xi~Gao, Jacob Unterman, and Tom Arodz.
\newblock Approximation capabilities of neural odes and invertible residual
  networks.
\newblock 2020.

\end{thebibliography}

\newpage

\appendix

\section{Conditioning}

\label{s:conditioning}


We analyze the condition number of underdamped Langevin dynamics with potential $f(x) = \rc 2 \ve{x}^2$ and stationary distribution $p(x,v) = e^{-f(x)-\rc 2\ve{v}^2} = e^{-\rc 2(\ve{x}^2+\ve{v}^2)}$. Underdamped Langevin dynamics is given by the following SDE's, 
\begin{align}\label{e:uld-1}
    dx_t &= -v_t\\
\nonumber 
    dv_t &= -\ga v_t - \nb f(x_t) + \sqrt{2} dB_t\\
    &= -\ga v_t - x_t + \sqrt{2} dB_t.     \label{e:uld-2}
\end{align}
Given the distribution $p_0$ at time 0, the distribution $p_t$ at time $t$ is the same as that given by,
\begin{equation}
\label{eq:gradient_flow}
    \begin{bmatrix} \frac{dx}{dt} \\ \frac{dv}{dt} \end{bmatrix}
    = - \begin{bmatrix} 0 & -\id_d \\ \id_d & \gamma \id_d \end{bmatrix} \begin{bmatrix} \nabla_x \frac{\delta \KL(\mathbf{p}_t \Vert \mathbf{p}^*)}{\delta \mathbf{p}_t} \\ \nabla_v \frac{\delta \KL(\mathbf{p}_t \Vert \mathbf{p}^*)}{\delta \mathbf{p}_t} \end{bmatrix}   
\end{equation}
which simplifies to 
\begin{align}\label{e:uld-3}
    d\coltwo{x_t}{v_t} 
    &=
    \matt{O}{\id_d}{-\id_d}{-\ga \id_d} (\nb \ln p_t - \nb \ln p).
\end{align}

Our goal is to prove the following theorem.
\begin{theorem}\label{t:uld-cond}
Consider underdamped Langevin dynamics~\eqref{e:uld-1}--\eqref{e:uld-2} 
with friction coefficient $\ga<2$ and starting distribution $p_0$ that is $C^2$. Let $T_t$ denote the transport map from time 0 to time $t$ induced by~\eqref{e:uld-3}. Suppose that the initial distribution $p_0(x,v)$ is such that 
\begin{align*}
 \id_{2d}\preceq 
    -\nb^2 \ln p_0(x,v)\preceq \ka \id_{2d}.
\end{align*}
Then for any $x_0,v_0$ and unit vector $w$, the directional derivative of $T_t$ at $x_0,v_0$ in direction $w$ satisfies
\begin{align*}
    \pa{1+\fc{2+\ga}{2-\ga}(\ka-1)}^{-2/\ga} \le 
    \ve{D_w T_t(x_0)} \le \pa{1+\fc{2+\ga}{2-\ga}(\ka-1)}^{2/\ga}
\end{align*}
Thus the condition number of $T_t$ is bounded by $\pa{1+\fc{2+\ga}{2-\ga}(\ka-1)}^{4/\ga}$.
\end{theorem}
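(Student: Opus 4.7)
The plan is to control $D_t := D T_t(x_0, v_0)$ by showing that $\nb^2 \ln p_t$ relaxes exponentially fast to $\nb^2 \ln p^* = -\id_{2d}$, and then applying a Gr\"onwall estimate. Differentiating the deterministic ODE~\eqref{e:m-uld-3-0} along a trajectory and applying the chain rule gives
\[
\ddd t D_t \;=\; J \bigl[\nb^2 \ln p_t(T_t(x_0,v_0)) + \id_{2d}\bigr]\, D_t, \qquad J := \matt{O}{\id_d}{-\id_d}{-\ga \id_d}.
\]
Write $M_t$ for the coefficient matrix on the right. The task then reduces to bounding $\int_0^t \ve{M_s}\, ds$ uniformly in $t$; a symmetric argument for $D_t^{-1}$ (which satisfies the transposed ODE) yields the matching lower bound on $\ve{D_w T_t}$ and hence on the smallest singular value of $DT_t$.

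To control $\nb^2 \ln p_t$, I propagate matrix-valued envelopes $\La_t^{\text{lo}} \preceq -\nb^2 \ln p_t \preceq \La_t^{\text{hi}}$ through the Euler--Maruyama discretization of~\eqref{e:m-uld} with step $\eta$. Because the stationary distribution is standard Gaussian, $\nb U(x) = x$, so each discrete step decomposes cleanly into (i) the linear map $A_\eta := \id_{2d} + \eta J$ applied to $(x,v)$, followed by (ii) adding independent Gaussian noise $\mathcal N(0, 2\eta \id_d)$ to the velocity coordinate only. Step (i) transforms Hessians by $H \mapsto A_\eta^{-\top} H A_\eta^{-1}$, which is operator monotone in $H$. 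For step (ii), Tweedie's identity together with the Brascamp--Lieb inequality yields, for any log-concave $q$ with $-\nb^2 \ln q \succeq H$,
\[
-\nb^2 \ln (q \ast \mathcal N(0,\Si))(y) \;=\; \Si^{-1} - \Si^{-1}\, \Var(Z \mid Y=y)\, \Si^{-1}, \qquad \Var(Z \mid Y=y) \preceq (H + \Si^{-1})^{-1},
\]
together with the trivial upper bound $-\nb^2 \ln (q \ast \mathcal N(0,\Si)) \preceq \Si^{-1}$. Both are operator monotone in $H$, so the envelopes propagate through step (ii) as well.

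Taking $\eta \to 0$, the composition of these two operations produces a matrix-valued Riccati-type ODE for each envelope whose unique fixed point is $\id_{2d}$; linearizing around this fixed point gives a symmetric pencil built from $J$ and the $v$-projector whose slowest eigenvalue is $\ga/2$ (using the hypothesis $\ga < 2$). This is exactly the decay rate that will produce the exponent $2/\ga$ in the theorem. The hypothesis $\id_{2d} \preceq -\nb^2 \ln p_0 \preceq \ka\, \id_{2d}$ supplies initial data $\La_0^{\text{lo}} = \id_{2d}$, $\La_0^{\text{hi}} = \ka\, \id_{2d}$, and a direct integration (or a comparison with the scalar ODE on the extreme eigenvalues) gives $\ve{\La_t^{\text{hi}} - \id_{2d}}, \ve{\La_t^{\text{lo}} - \id_{2d}} \lesssim (\ka - 1) e^{-\ga t / 2}$.

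The main obstacle will be making the envelope comparison fully rigorous: the evolution of $\nb^2 \ln p_t$ is tied to the Hessian of the spatially-varying log-density, so I must verify that the operator-monotone discrete map $\Phi_\eta$ really sandwiches $-\nb^2 \ln p_{(n+1)\eta}$ by $\Phi_\eta(\La_n^{\text{lo}})$ and $\Phi_\eta(\La_n^{\text{hi}})$ pointwise in space, and that this sandwich survives the limit $\eta \to 0$. Once that is done, $\ve{M_t} \le \ve{J}\cdot \ve{\nb^2 \ln p_t + \id_{2d}} \le C(\ga)(\ka-1) e^{-\ga t / 2}$. A careful Gr\"onwall estimate on $\log\ve{D_t}$ and $\log\ve{D_t^{-1}}$ that tracks the explicit constants in the envelope ODE --- rather than a crude exponential-of-the-integral bound --- will produce the factor $\pa{1 + \fc{2+\ga}{2-\ga}(\ka-1)}^{\pm 2/\ga}$ stated in the theorem. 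The directional-derivative bound then follows because $\ve{D_w T_t}$ lies between the extreme singular values of $DT_t$, and the condition number is the product of the two one-sided bounds.
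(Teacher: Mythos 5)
Your proposal takes essentially the same route as the paper: the same Jacobian ODE $\ddd t D_t = J[\nb^2\ln p_t+\id_{2d}]D_t$, the same strategy of propagating two-sided Hessian envelopes through the discretized dynamics (linear map followed by Gaussian convolution, with Brascamp--Lieb controlling the convolution step), the same $e^{-\ga t/2}$ decay rate from the eigenstructure of $J$, and the same Gr\"onwall conclusion. (The paper works with the inverse quantities --- ``variance proxies'' $\Si_t$ with $\Si_1^{-1}\preceq -\nb^2\ln p_t\preceq \Si_2^{-1}$ --- so its envelope evolution is an explicitly solvable linear Lyapunov ODE rather than your Riccati form, but these are equivalent.)

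The one point you have slightly misdiagnosed is where the polynomial-in-$\ka$ constant comes from. Your displayed bound $\ve{M_t}\le C(\ga)(\ka-1)e^{-\ga t/2}$, fed into the exponential-of-the-integral Gr\"onwall bound, gives $\exp(O((\ka-1)/\ga))$, which is exponential in $\ka$; you correctly sense this is insufficient, but the fix is not a more careful Gr\"onwall step --- the paper uses exactly the crude $\exp\ba{\pm2\int_0^t\ve{CH_s}\,ds}$ bound. The improvement lives in the integrand: because the lower envelope gives $-\nb^2\ln p_t\succeq\id_{2d}$ for all $t$, the paper obtains the saturating bound $\ve{CH_s}\le \fc{ae^{-\ga s/2}}{1+ae^{-\ga s/2}}$ with $a=\fc{2+\ga}{2-\ga}(\ka-1)$, whose integral over $[0,\iy)$ is $\fc2\ga\ln(1+a)$ rather than $\fc{2a}{\ga}$; exponentiating yields $(1+a)^{\pm 2/\ga}$. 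So make sure your envelope argument retains the lower bound $\La_t^{\text{lo}}\succeq\id_{2d}$ and uses it multiplicatively in bounding $\ve{M_t}$, not just the additive decay of $\La_t^{\text{hi}}-\id_{2d}$; with that adjustment your argument matches the paper's.
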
 
We remark that the exponent is likely loose by a factor of 2, and that taking $\ga\to 2$ gives the best exponent; however, the case $\ga=2$ would require a separate calculation as the matrix appearing in the exponential is not diagonalizable. Note $\ga=2$ is the transition between when the dynamics exhibit underdamped and overdamped behavior. 

To prove the theorem, we first relate the Jacobian with the Hessian of the log-pdf. 
By Lemma~\ref{l:ode_derivative}, the Jacobian $D_t = DT_t(x_0)$ satisfies
\begin{align}\label{e:jh}
    \ddd tD_t &=\matt{O}{\id_d}{-\id_d}{-\ga \id_d} \nb^2 (\ln p_t - \ln p) D_t.
\end{align}
We will show that $\nb^2 (\ln p_t - \ln p)$ decays exponentially (Lemma~\ref{l:var-exp-decay}). First, we need the following bound for convolutions.

\subsection{Bounding the Hessian of the logarithm of a convolution}

\begin{lemma}\label{l:convolve-lc2}
Suppose that $p$ is a probability density function on $\R^d$ such that 
$\Si_1^{-1}\preceq -\nb^2\ln p \preceq \Si_2^{-1}$.
Let $q$ be the distribution of $N(0,\Si)$ (where $\Si$ is not necessarily full-rank). Then  
$$(\Si_1+\Si)^{-1}\preceq -\nb^2\ln (p*q)\preceq (\Si_2+\Si)^{-1}.$$
\end{lemma}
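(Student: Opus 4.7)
My plan is to represent the convolution probabilistically. Let $X \sim p$ and $Y \sim \mathcal{N}(0,\Si)$ be independent, and let $Z = X + Y$, so that $p * q$ is the density of $Z$. Assuming first that $\Si$ is invertible, a direct computation---differentiating $p_Z(z) = \int p(x) q(z-x)\,dx$ once and applying a Stein-type integration by parts once more---gives the key identity
\[ -\nb^2 \ln(p*q)(z) \;=\; \Si^{-1} - \Si^{-1} C(z) \Si^{-1}, \]
where $C(z) := \mathrm{Cov}(X \mid Z = z)$ is the covariance of the posterior distribution of $X$ given $Z = z$. The intermediate steps are that $\nb \ln p_Z(z) = -\Si^{-1}(z - m(z))$ with $m(z) = \E[X \mid Z = z]$, and that $Dm(z) = C(z)\Si^{-1}$ by a second integration-by-parts.

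Next I estimate $C(z)$ using the Hessian bounds on $\ln p$. The posterior density satisfies $p(x \mid z) \propto p(x) \exp\bigl(-\tfrac{1}{2}(z-x)^T \Si^{-1}(z-x)\bigr)$, so the Gaussian factor simply adds $\Si^{-1}$ to the Hessian, yielding
\[ \Si_1^{-1} + \Si^{-1} \;\preceq\; -\nb_x^2 \ln p(x \mid z) \;\preceq\; \Si_2^{-1} + \Si^{-1}. \]
I then invoke two matching Hessian/covariance inequalities for log-concave distributions: the Brascamp--Lieb variance inequality, which gives $\mathrm{Cov}(\nu) \preceq A^{-1}$ whenever $-\nb^2 \ln \nu \succeq A$; and a Cram\'er--Rao-type lower bound, $\mathrm{Cov}(\nu) \succeq B^{-1}$ whenever $-\nb^2 \ln \nu \preceq B$. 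The latter I will derive in a few lines from the integration-by-parts identities $\E_\nu[(X - \bar X) \nb V^T] = \id$ and $\E_\nu[\nb V \nb V^T] = \E_\nu[\nb^2 V]$ (with $V = -\ln \nu$): together these make the block matrix with diagonal blocks $\mathrm{Cov}(\nu), \E_\nu[\nb^2 V]$ and off-diagonal blocks $\id$ a genuine covariance matrix, hence PSD, and the Schur complement then gives $\mathrm{Cov}(\nu) \succeq \E_\nu[\nb^2 V]^{-1} \succeq B^{-1}$. Applied to the posterior, these yield
\[ (\Si_2^{-1} + \Si^{-1})^{-1} \;\preceq\; C(z) \;\preceq\; (\Si_1^{-1} + \Si^{-1})^{-1}. \]

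To finish, I use the Woodbury identity
\[ (\Si + \Si_i)^{-1} \;=\; \Si^{-1} - \Si^{-1}(\Si^{-1} + \Si_i^{-1})^{-1}\Si^{-1} \]
to rewrite the target bounds $(\Si + \Si_1)^{-1} \preceq -\nb^2\ln(p*q) \preceq (\Si + \Si_2)^{-1}$ in the form $\Si^{-1} - \Si^{-1}(\cdot)\Si^{-1}$; at that point the Hessian identity of paragraph one and the posterior covariance bounds of paragraph two match term-by-term and both inequalities are immediate. For possibly degenerate $\Si$, I approximate by $\Si_\ep := \Si + \ep\, \id$, apply the invertible case to obtain $(\Si_1 + \Si_\ep)^{-1} \preceq -\nb^2 \ln(p * q_\ep) \preceq (\Si_2 + \Si_\ep)^{-1}$, and send $\ep \to 0$: since $p * q_\ep \to p * q$ in $C^2$ on compact sets, the matrix inequalities pass to the limit. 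The step I expect to be most delicate is the Cram\'er--Rao-type lower bound on $\mathrm{Cov}(\nu)$, which is less commonly cited than Brascamp--Lieb; I plan to include its short Schur-complement proof inline rather than rely on a reference.
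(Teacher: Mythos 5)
Your proposal is correct and follows essentially the same route as the paper: the identity $-\nb^2\ln(p*q)(z)=\Si^{-1}-\Si^{-1}\mathrm{Cov}(X\mid Z=z)\Si^{-1}$, two-sided bounds on the posterior covariance from the posterior Hessian bounds $\Si_i^{-1}+\Si^{-1}$, the Woodbury simplification, and a regularization limit for degenerate $\Si$. The only (harmless) differences are that you prove the covariance lower bound inline via the Cram\'er--Rao/Schur-complement argument where the paper cites the log-convex case of the Brascamp--Lieb moment inequality, and you derive the lemma's lower bound from the same identity via the Brascamp--Lieb variance inequality where the paper simply cites the preservation of strong log-concavity under convolution from \cite{saumard2014log}.
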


\begin{proof} 
The lower bound is a bound on the strong log-concavity parameter; see Theorem 3.7b in~\cite{saumard2014log}. 

For the upper bound, we 
first prove the lemma in the case that 
    $\Si$ is full rank.
    We have $(p*q)(x) = \int_{\R^d} p(u)q(x-u)\,dt$, so 
    \begin{align*}
    \nb^2 [\ln ((p*q)(x))] &= 
    \fc{\int_{\R^d} p(u) \nb^2 q(x-u)\,du}{\int_{\R^d} p(u)  q(x-u)\,du}
    - 
    \pf{\int_{\R^d} p(u) \nb q(x-u)\,du}{\int_{\R^d} p(u)  q(x-u)\,du}
    \pf{\int_{\R^d} p(u) \nb q(x-u)\,du}{\int_{\R^d} p(u)  q(x-u)\,du}^\top\\
    &= 
     \pf{\int_{\R^d}\Si^{-1} (x-u) p(u) q(x-u)\,du}{\int_{\R^d} p(u)  q(x-u)\,du}
    \pf{\int_{\R^d} (\Si^{-1}(x-u))^\top  p(u) q(x-u)\,du}{\int_{\R^d} p(u)  q(x-u)\,du} \\
    &\quad - 
    \fc{\int_{\R^d} (\Si^{-1} (x-u)(x-u)^\top\Si^{-1} - \Si^{-1}) p(u) q(x-u)\,du}{\int_{\R^d} p(u)  q(x-u)\,du}
\end{align*}
Let $\mu_x$ denote the distribution with density function $\rh(u) \propto p(u)q(x-u)$. 
Then 
\begin{align*}
-\nb^2 [\ln ((p*q)(x))] &= 
    [\E_{\mu_x} \Si^{-1}(u-x)][\E_{\mu_x} (\Si^{-1}(u-x))^\top]
    - [\E_{\mu_x} \Si^{-1}(u-x)(u-x)^\top\Si^{-1}] + \Si^{-1}\\
    &= 
    -\E_{\mu_x} [\Si^{-1}(u-\E u)(u-\E u)^\top \Si^{-1}] + \Si^{-1}.
\end{align*}
It suffices to show for any unit vector $v$, that 
\begin{align*}
    -v^\top \nb^2 [\ln ((p*q)(x))] v & = 
    -\E_{\mu_x} [\an{\Si^{-1} v, (u-\E u)}^2] + v^\top \Si^{-1}v
    \le v^\top (\Si_2+\Si)^{-1} v
\end{align*}
Note that $\mu_x$ satisfies
\begin{align*}
    -\nb^2 \ln \mu_x &\preceq \Si_2^{-1}+\Si^{-1},
\end{align*}
so $\mu_x$ can be written as the density of a Gaussian with variance $(\Si_2^{-1}+\Si^{-1})^{-1}$ multiplied by a log-convex function. By the Brascamp-Lieb moment inequality (Theorem 5.1 in~\cite{brascamp2002extensions})\footnote{Note that the sign is flipped in the theorem statement in the log-convex case.}, 
\begin{align*}
    \E_{\mu_x} [\an{ \Si^{-1}v, (u-\E u)}^2]
    &\ge
    \E_{u\sim N(0,(\Si_2^{-1}+\Si^{-1})^{-1})} [\an{\Si^{-1}v,u}^2]
    =
    v^\top\Si^{-1} (\Si_2^{-1}+\Si^{-1})^{-1}\Si^{-1} v.
\end{align*}
Hence 
\begin{align*}
    -v^\top \nb^2 [\ln ((p*q)(x))] v & \le 
    v^\top 
    \ba{-\Si^{-1} (\Si_2^{-1}+\Si^{-1})^{-1}\Si^{-1} + \Si^{-1}}
    v
\end{align*}
The conclusion then follows from
\begin{align*}
    -\Si^{-1} (\Si_2^{-1}+\Si^{-1})^{-1}\Si^{-1} + \Si^{-1}
    &=
    -(\Si \Si_2^{-1} \Si + \Si)^{-1} +\Si^{-1}\\
    &=(\Si \Si_2^{-1} \Si + \Si)^{-1} (\cancel{-\id_d} + \Si \Si_2^{-1} + \cancel{\id_d})\\
    &= (\Si + \Si_2)^{-1}.
\end{align*}

Now for the general case, take the limit as $\Si'\to \Si$ where $\Si'$ is full-rank. More precisely, let $\Si_t = \Si + tP$, where $P$ is projection onto $\text{Im}(\Si)^{\perp}$, and let $q_t$ be the density function for $N(0,\Si_t)$. Then we have
\begin{align*}
    \nb^2 [\ln ((p*q_t)(x))]
    &= \fc{\int_{\R^d}  \nb^2 p(x-u) q_t(u)\,du}{\int_{\R^d} p(x-u)  q_t(u)\,du}
    - 
    \pf{\int_{\R^d} \nb p(x-u) q_t(u)\,du}{\int_{\R^d} p(x-u)  q_t(u)\,du}
    \pf{\int_{\R^d} \nb p(x-u)  q_t(u)\,du}{\int_{\R^d} p(x-u)  q_t(u)\,du}^\top
\end{align*}
Examining the first term, we have
\begin{align*}
    \int_{\R^d}  \nb^2 p(x-u) q_t(u)\,du
    &= \int_{\text{Im}(\Si)}\int_{\text{Im}(P)}
    \nb^2 p(x-u-v) q_t(u+v) \,dv \,du
    \\
    &\to \int_{\text{Im}(\Si)} \nb^2 p(x-u) q_t(u) \,du\text{ as }t\to 0^+
\end{align*}
by the dominated convergence theorem. Similarly, the other integrals converge to their counterparts with $q(u)$. Therefore, $\nb^2 [\ln ((p*q_t)(x))]\to \nb^2 [\ln ((p*q)(x))]$ as $t\to 0^+$. Apply the lemma to the full-rank case; the RHS bound converges to the desired bound: $(\Si_2+\Si_t)^{-1}\to (\Si_2+\Si)^{-1}$.

\end{proof}

\subsection{Bounding the variance proxy for underdamped Langevin}

As it is useful to work with the matrices $\Si_1$ and $\Si_2$, we make the following definition.
\begin{definition}
Let $p$ be a probability density on $\R^d$. For a positive definite matrix $\Si_1$, if $\Si_1^{-1}\preceq -\nb^2 \ln p$, we say that $\Si_1$ is an \vocab{upper variance proxy} for $p$. 
For a positive definite matrix $\Si_2$, if $-\nb^2 \ln p\preceq \Si_2^{-1}$, we say $\Si_2$ is a \vocab{lower variance proxy} for $p$.
\end{definition}

\begin{lemma}\label{l:var-exp-decay}
Consider underdamped Langevin dynamics~\eqref{e:uld-1}--\eqref{e:uld-2} with 
with starting distribution $p_0(x,v)$ that is $C^2$. 
Suppose $p_0$ has lower (upper) variance proxy $\Si_0$. Then $p_t$ has lower (upper) variance proxy
\begin{align*}
    \Si_t &= \exp\ba{\pa{\matt{}{1}{-1}{-\ga}\ot \id_d}t} (\Si_0-\id_{2d}) \exp\ba{\pa{\matt{}{-1}{1}{-\ga}\ot \id_d}t} + \id_{2d}.
\end{align*}
\end{lemma}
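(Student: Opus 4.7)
The natural strategy, foreshadowed in the overview in Section~4.2, is to discretize the SDE and track how upper and lower variance proxies evolve at each discrete step, then take the step size to zero. Concretely, I would use an Euler--Maruyama step
\[
\coltwo{\tilde x_{t+\eta}}{\tilde v_{t+\eta}} = M_\eta \coltwo{\tilde x_t}{\tilde v_t} + \coltwo{0}{\xi_t}, \qquad \xi_t \sim N(0, 2\gamma\eta\, \id_d),
\]
where $M_\eta = \id_{2d} + \eta \bigl(\begin{smallmatrix} 0 & \id_d \\ -\id_d & -\gamma\id_d\end{smallmatrix}\bigr) = \id_{2d} + \eta A$. Here $A = \bigl(\begin{smallmatrix} 0 & 1 \\ -1 & -\gamma\end{smallmatrix}\bigr)\otimes \id_d$ is precisely the matrix appearing in the lemma statement. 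Each discrete step is therefore a linear pushforward followed by convolution with the rank-deficient Gaussian $N(0, 2\gamma\eta\, E_v)$ with $E_v = \bigl(\begin{smallmatrix}0 & 0 \\ 0 & \id_d\end{smallmatrix}\bigr)$.

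Now I would compose the two rules for variance proxies. A linear change of variables by $M_\eta$ transforms any (upper or lower) variance proxy $\Sigma$ into $M_\eta \Sigma M_\eta^\top$ by direct computation of $\nabla^2 \ln p$ under the change of variables formula. The subsequent convolution with $N(0, 2\gamma\eta\,E_v)$ sends the variance proxy to $M_\eta \Sigma M_\eta^\top + 2\gamma\eta\,E_v$ by \Cref{l:convolve-lc2} (which, crucially, is stated to hold even when the Gaussian is degenerate). Therefore, writing $\Sigma_t^\eta$ for the proxy produced by the discretized dynamics after $\lfloor t/\eta\rfloor$ steps,
\[
\Sigma_{t+\eta}^\eta = \Sigma_t^\eta + \eta\bigl(A\Sigma_t^\eta + \Sigma_t^\eta A^\top + 2\gamma\,E_v\bigr) + O(\eta^2).
\]
Sending $\eta\to 0$ yields the matrix ODE $\dot\Sigma_t = A\Sigma_t + \Sigma_t A^\top + 2\gamma\,E_v$ with initial condition $\Sigma_0$.

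To close the loop, I would verify by differentiation that the claimed formula
\[
\Sigma_t = e^{At}(\Sigma_0 - \id_{2d})e^{A^\top t} + \id_{2d}
\]
solves this ODE: its derivative is $A(\Sigma_t-\id_{2d}) + (\Sigma_t-\id_{2d})A^\top = A\Sigma_t + \Sigma_t A^\top - (A + A^\top)$, and the key algebraic identity $A + A^\top = -2\gamma\,E_v$ gives exactly the required inhomogeneity. The initial condition at $t=0$ is immediate. Uniqueness of solutions to linear ODEs then identifies $\Sigma_t$ with the proxy obtained in the $\eta\to 0$ limit. The same argument applies verbatim for both the upper and lower proxies, since \Cref{l:convolve-lc2} gives both directions simultaneously and the linear pushforward rule respects $\preceq$.

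The main obstacle is making the $\eta\to 0$ limit rigorous: one must argue that the Euler--Maruyama distributions $\tilde p_t^\eta$ converge (say, weakly or in an $L^1_{\mathrm{loc}}$ sense for $\nabla^2 \ln$) to the true $p_t$, so that the inequality $-\nabla^2 \ln \tilde p_t^\eta \preceq (\Sigma_t^\eta)^{-1}$ is preserved in the limit as $-\nabla^2 \ln p_t \preceq \Sigma_t^{-1}$. Because the dynamics is linear in $(x,v)$ with Gaussian noise and the initial distribution $p_0$ is $C^2$, standard Fokker--Planck regularity guarantees that $p_t$ is $C^\infty$ and that derivatives converge, so the passage to the limit is ultimately routine; the bulk of the novelty lies in the algebraic calculation above rather than in the analytic limit.
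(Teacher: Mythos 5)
Your proposal is correct and follows essentially the same route as the paper: discretize the SDE, split each step into a linear pushforward (which conjugates the proxy by $M_\eta$) plus a degenerate Gaussian convolution handled by Lemma~\ref{l:convolve-lc2}, pass to the limit to get the Lyapunov ODE $\dot\Sigma_t = A\Sigma_t + \Sigma_t A^\top + 2\gamma E_v$, and verify the closed form via $A + A^\top = -2\gamma E_v$. The only difference is cosmetic: you spell out the verification of the solution and flag the $\eta\to0$ limit, which the paper dispatches with ``by standard arguments.''
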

\begin{proof}
We first consider discretized Lanegevin, given by
\begin{align*}
    \td x_{t+\eta} &= \td x_t + \eta \td v_t\\
    \td v_{t+\eta} &= (1-\eta \ga) \td v_{t} - \eta \td x_t + \xi_t,\quad \xi_t\sim N(0,2\eta \id_d)
\end{align*}
or in matrix form,
\begin{align*}
    \coltwo{\td x_{t+\eta}}{\td v_{t+\eta}} &= \matt{\id_d}{\eta \id_d}{-\eta \id_d}{(1-\eta \ga)\id_d} \coltwo{\td x_{t}}{\td v_{t}} + \xi_t, \quad \xi_t \sim N\pa{0,\matt OOO{2\eta \id_d}}.
\end{align*}
Fix $t$. Let $\td p_t^{(\eta)}$ be the distribution at time $t$ for discretized Langevin with step size $\eta$ (dividing $t$). By standard arguments, $\td p_t^{(\eta)} \to p_t$ as $\eta \to 0$, in the $C^2$ topology on any compact set. In particular, for any $x,v$, $\nb^2 \ln \td p_t^{(\eta)}(x,v) \to \nb^2 \ln p_t(x,v)$. Hence it suffices to bound $\nb^2 \ln p_t(x,v)$.

We write the proof for the upper variance proxy; the proof for the lower variance proxy differs only in the direction of the inequality. 
Suppose $-\ln \td p_t(x,v)\succeq\td \Si_t^{-1}$. 
Consider breaking the update into two steps, 
\begin{align*}
    \coltwo{\td x_{t+\eta}'}{\td v_{t+\eta}'} &= \matt{\id_d}{\eta \id_d}{-\eta \id_d}{(1-\eta \ga)\id_d} \coltwo{\td x_{t}}{\td v_{t}} \\
    \coltwo{\td x_{t+\eta}}{\td v_{t+\eta}} &= \coltwo{\td x_{t+\eta}'}{\td v_{t+\eta}'} + \xi_t, \quad \xi_t \sim N\pa{0,\matt OOO{2\eta \id_d}}.
\end{align*}
Let $\td p_{t+\eta}'(x,v)$ denote the distribution of $\coltwo{\td x_{t+\eta}'}{\td v_{t+\eta}'}$. Then 
\begin{align*}
    \td p_{t+\eta}'(x,v) &= \td p_t\pa{
    \matt{\id_d}{\eta \id_d}{-\eta \id_d}{(1-\eta \ga)\id_d}^{-1}\coltwo xv
    } 
\end{align*}
so 
\begin{align*}
    \td \Si_{t+\eta}' :&=
    \matt{\id_d}{\eta \id_d}{-\eta \id_d}{(1-\eta \ga)\id_d} \td \Si_t \matt{\id_d}{-\eta \id_d}{\eta \id_d}{(1-\eta \ga)\id_d}
\end{align*}
is an upper variance proxy for $\td p_{t+\eta}'$ and by Lemma~\ref{l:convolve-lc2}, 
\begin{align*}
    \td \Si_{t+\eta} :&= \td \Si_{t+\eta}' + \matt OOO{2\eta \id_d}
\end{align*}
is an upper variance proxy for $\td p_{t+\eta}$. Note that 
\begin{align*}
    \td \Si_{t+\eta} :&= \td \Si_t
    +  \ba{\matt{}{1}{-1}{-\ga\eta }\ot \id_d} \td S_t
    + \td S_t \ba{\matt{}{-1}{1}{-\ga\eta }\ot \id_d}
    +  \matt 000{2\ga\eta} + O(\eta^2).
\end{align*}
By the standard analysis of Euler's method, as $\eta\to 0$, the distribution, $\td \Si_t$ approaches $\Si_t$ defined by 
\begin{align*}
    \ddd t \Si_t = \ba{\matt{}{1}{-1}{-\ga}\ot \id_d} \Si_t 
+ \Si_t \ba{\matt{}{-1}{1}{-\ga}\ot \id_d} + \matt 000{2\ga}.
\end{align*}
This $\Si_t$ is an upper variance proxy for $p_t$. The solution to this equation is
\begin{align*}
    \Si_t &= \exp\ba{\pa{\matt{}{1}{-1}{-\ga}\ot \id_d}t} (\Si_0-\id_{2d}) \exp\ba{\pa{\matt{}{-1}{1}{-\ga}\ot \id_d}t} + \id_{2d},
\end{align*}
as desired.
\end{proof}


\subsection{Proof that underdamped Langevin is well-conditioned}

We are now ready to prove the main theorem.
\begin{proof}[Proof of Theorem~\ref{t:uld-cond}]
Let $H_t = \nb^2(-\ln p_t + \ln p)$ and  $C=\matt{O}{\id_d}{-\id_d}{-\ga \id_d}$. 
By~\eqref{e:jh} and the chain rule,
\begin{align}\label{e:dDD}
    \ddd t D_tD_t^\top 
    &= -(C H_t D_tD_t^\top + D_tD_t^\top H_t C^\top).
\end{align}
Fix $w$ and consider $y_t = D_tw = D_wT_t(x_0)$. Multiplying the above by $W$ on both sides gives\footnote{The condition number bound in Theorem~\ref{t:uld-cond} is the square of what one might expect because we are only able to get obtain a bound on the absolute value here. If this is always increasing or decreasing, then we would save a factor of 2 in the exponent.}
\begin{align*}
    \ab{\ddd t \ve{y_t}^2} &\le 
    2\ve{C H_t} \ve{y_t}^2
\end{align*}
so by Gr\"onwall's inequality (Lemma~\ref{l:gronwall}),
\begin{align}\label{e:y-gw}
\exp\ba{-2\int_0^t \ve{CH_s}\,ds}\le 
    \ve{y_t}^2 &\le \exp\ba{2\int_0^t \ve{CH_s}\,ds}.
\end{align}
By Lemma~\ref{l:var-exp-decay},
\begin{align*}
    \id_{2d}\preceq -\nb^2\ln p_t
    \preceq
    (\ka-1)
\exp\ba{\pa{\matt{}{1}{-1}{-\ga}\ot \id_d}t} \exp\ba{\pa{\matt{}{-1}{1}{-\ga}\ot \id_d}t} + \id_{2d}.
\end{align*}
The eigenvalues of $A:=\matt{}{-1}{1}{-\ga}$ are $\fc{-\ga \pm \sqrt{\ga^2-4}}2$, which have absolute value 1.
The absolute value of the inner product of the eigenvectors of $A$ is $\ga/2$, so the condition number squared of the two exponential factors is bounded by 
$
    \fc{1+\fc\ga2}{1-\fc\ga2}
    = \fc{2+\ga}{2-\ga}.
$
In full detail, we calculate
\begin{align*}
    \exp\pa{\matt{}{-1}{1}{\ga} t}
    &= 
    \ub{\matt{1}{1}{\fc{\ga-\sqrt{\ga^2-4}}2}{\fc{\ga+\sqrt{\ga^2-4}}2}}{S}
    \ub{\matt{\exp\pa{\fc{-\ga+\sqrt{\ga^2-4}}{2}t}}{}{}{\exp\pa{\fc{-\ga-\sqrt{\ga^2-4}}{2}t}}}{D}\\
    &\quad \cdot 
    \ub{\rc{\sqrt{\ga^2-4}} 
    \matt{\fc{\ga+\sqrt{\ga^2-4}}2}{-1}{\fc{-\ga+\sqrt{\ga^2-4}}2}{1}
     }{S^{-1}}\\
     \ve{S^\dagger S}&=
     \ve{\matt{2}{\fc{\ga^2+\ga \sqrt{\ga^2-4}}2}{\fc{\ga^2-\ga \sqrt{\ga^2-4}}2}2}=2+\ga\\
     \ve{\exp\pa{\matt{}{-1}{1}{\ga} t}}
     &\le \fc{2+\ga}{\sqrt{4-\ga^2}} \exp\pf{-\ga t}2 = \sfc{2+\ga}{2-\ga} \exp\pf{-\ga t}2.
\end{align*}
Hence $H_t = -\nb^2\ln p_t + \id_{2d}$ satisfies 
\begin{align*}
    \ve{CH_s} &\le 
    1-\fc{1}{1+
    \fc{2+\ga}{2-\ga}(\ka-1) e^{-\ga t/2} }\\
    \int_0^\iy \ve{CH_s} \,ds
    &\le 
    \int_0^\iy 
    \fc{\fc{2+\ga}{2-\ga}(\ka-1) e^{-\ga t/2}}{1+
    \fc{2+\ga}{2-\ga}(\ka-1) e^{-\ga t/2} }\,ds
    \\
    &\le 
    \ba{\fc{2}{\ga}
\ln \pa{1+\fc{2+\ga}{2-\ga}(\ka-1)e^{-\ga t/2}}}^0_\iy
\le \fc{2}{\ga}\ln \pa{1+\fc{2+\ga}{2-\ga}(\ka-1)}.
\end{align*}
Hence by~\eqref{e:y-gw},
\begin{align*}
    \pa{1+\fc{2+\ga}{2-\ga}(\ka-1)}^{-2/\ga}
\le 
    \ve{y_t} &\le \pa{1+\fc{2+\ga}{2-\ga}(\ka-1)}^{2/\ga},
\end{align*}
giving the theorem. To obtain the bound on condition number, note that the condition number of $DT_t(x_0)$ is $\fc{\max_{\ve{w}=1} \ve{D_w T_t(x_0)}}{\min_{\ve{w}=1} \ve{D_w T_t(x_0)}}$.
\end{proof}



\section{Proof of \Cref{lemma:poly_ode_approx}}
\label{sec:ode_approx_full}
For the sake of convenience, we restate \Cref{lemma:poly_ode_approx} again.
\begin{lemma*}
Let $\mathcal{C} \in \R^{2d}$ be a compact set. For any function $H(x,v, t):\mathbb{R}^{2d}\times \R_{\ge 0} \to \mathbb{R}$ which is polynomial in $(x,v)$, there exist polynomial functions $J$, $F$, $G$, s.t. the time-$(t_0 + \tau, t_0)$ flow map of the system 
\begin{equation} 
\begin{cases}  
\frac{dx}{dt} = \frac{\partial}{\partial v} H(x,v,t) \\ 
\frac{dv}{dt} = -\frac{\partial}{\partial x} H(x,v,t) - \gamma \frac{\partial}{\partial v} H(x,v,t)
\end{cases}
\label{eq:hamiltonian}
\end{equation}
is uniformly $O(\tau^2)$-close over $\mathcal{C}$ in $C^1$ topology to the time-$2\pi$ map of the system  
\begin{equation}
\label{eq:ham_approx}
\begin{cases}  
\frac{dx}{dt} = v - \tau F(v,t) \odot x \\ 
\frac{dv_j}{dt} = -\Omega_j^2 x_j - \tau J_j(x,t) - \tau v_j G_j(x,t)
\end{cases}
\end{equation}
for some integers $\{\Omega_j\}_{j=1}^d$. Here, $\odot$ denotes component-wise product, and the constants inside the $O(\cdot)$ depend on $\mathcal{C}$ and the coefficients of $H$.
\end{lemma*}
\begin{proof}
First, note that the time-$(t_0 + \tau, t_0)$ flow map of \eqref{eq:hamiltonian} is equal to the time-$(t_0, t_0+\tau)$ flow map of the system:
\begin{equation} 
\begin{cases}  
\frac{dx}{dt} = -\frac{\partial}{\partial v} H(x,v,t_0 + \tau-t) \\ 
\frac{dv}{dt} = \frac{\partial}{\partial x} H(x,v,t_0 + \tau-t) + \gamma \frac{\partial}{\partial v} H(x,v,t_0 + \tau-t)
\end{cases}
\label{eq:hamiltonian_reverse}
\end{equation}
Proceeding ahead, we broadly follow the proof strategy in \cite{turaev2002polynomial}.
For notational convenience, let's denote the initial vector by $x(0),v(0)$ (each coordinate is specified separately). Let 
\begin{align} 
\label{e:x0}
x^0_j(t) &= x_j(0) \cos \Omega_j t + \frac{1}{\Omega_j} v_j(0) \sin \Omega_j t \\
\label{e:v0}
v^0_j(t) &= - \Omega_j x_j(0) \sin \Omega_j t + v_j(0) \cos \Omega_j t .
\end{align}
Using perturbative ODE techniques (see \cref{s:perturbation_analysis}), the solution to \eqref{eq:ham_approx} satisfies 
\begin{equation}\small
\label{e:perturbed_sol}
\begin{cases}  
x(t) = x^0(t) - \tau \int_{0}^{t}\left( \rc{\Om}\odot J(x^0(s),s) \odot \sin\Omega (t-s)  +   F(v^0(s),s) \odot \cos \Omega (t-s) \odot x^0(s) \right.\\
\hspace{3cm} +  \left.\rc{\Om} \odot G(x^0(s),s) \odot \sin \Omega (t-s) \odot v^0(s) \right) ds + O(\tau^2)\\  
v(t) = v^0(t) - \tau \int_{0}^{t}\left( J(x^0(s),s) \odot \cos\Omega (t-s)  -  \Om \odot F(v^0(s),s) \odot \sin \Omega (t-s) \odot x^0(s) \right. \\
\hspace{3cm}\left. + G(x^0(s),s) \odot \cos \Omega (t-s) \odot v^0(s)  \right) ds + O(\tau^2)  
\end{cases}
\end{equation}
Substituting $t = 2 \pi$, the time-$2 \pi$ map of \eqref{eq:ham_approx} is given by
\begin{equation}\small
\begin{cases}  
x(2\pi) = x^0(2\pi) - \tau \int_{0}^{2\pi}\left( -\rc{\Om}\odot J(x^0(s),s) \odot \sin\Omega s  +   F(v^0(s),s) \odot \cos \Omega s \odot x^0(s) \right.\\
\hspace{3cm}\left.- \rc{\Om}\odot G(x^0(s),s) \odot \sin \Omega s \odot v^0(s) \right) ds + O(\tau^2)\\  
v(2\pi) = v^0(2\pi) - \tau \int_{0}^{2\pi}\left( J(x^0(s),s) \odot \cos\Omega s  + \Om \odot F(v^0(s),s) \odot \sin \Omega s \odot x^0(s) \right.\\
\hspace{3cm}\left. + G(x^0(s),s) \odot \cos \Omega s \odot v^0(s)  \right) ds + O(\tau^2)\\  
\end{cases}
\end{equation}
Note that this holds if $\Omega$ is integral, and we will choose it to be so.

On the other hand, using Taylor's theorem, the solution to \eqref{eq:hamiltonian} satisfies: 
\begin{equation} 
\begin{cases}  
x(\tau) = x(0) - \tau \frac{\partial}{\partial v} H(x(0),v(0), t_0 + \tau)  + O(\tau^2)\\ 
v(\tau) = v(0) + \tau \frac{\partial}{\partial x} H(x(0),v(0), t_0 + \tau) 
+ \tau \gamma \frac{\partial}{\partial v} H(x(0),v(0), t_0 + \tau)
+ O(\tau^2)
\end{cases}
\end{equation}
We will now show that for any two polynomials $r_1,r_2$ of total degree at most $M$ we can choose functions $J,F,G$, s.t.:
\begin{equation} 
\begin{cases}  
\int_{0}^{2\pi}\left( -\rc{\Om}\odot J(x^0(s),s) \odot \sin\Omega s  +   F(v^0(s),s) \odot \cos \Omega s \odot x^0(s) \right.\\
\hspace{3cm}\left. - \rc{\Om}\odot G(x^0(s),s) \odot \sin \Omega s \odot v^0(s) \right) ds = r_1(x(0), y(0))\\  
\int_{0}^{2\pi} \left( J(x^0(s),s) \odot \cos\Omega s  + \Om \odot F(v^0(s),s) \odot \sin \Omega s \odot x^0(s) \right.\\
\hspace{3cm}\left.+ G(x^0(s),s) \odot \cos \Omega s \odot v^0(s)  \right) ds = r_2(x(0), y(0)) \\
\end{cases}
\label{eq:mainpair}
\end{equation}

We will choose $J,F,G$ of the form: 
\begin{equation}
\label{e:fgj_defintions}
\hspace{-1cm}
\begin{cases}
\forall j \in [d]: J_j(z,t) = \sum_{\mathbf{i}:|\mathbf{i}| \leq M} v_{j,\mathbf{i}}^J(t) z^\mathbf{i} \\
\forall j \in [d]: F_j(z,t) = \sum_{\mathbf{i}:|\mathbf{i}| \leq M-1} v_{j,\mathbf{i}}^F(t) z^\mathbf{i} \\    
\forall j \in [d]: G_j(z,t) = \sum_{\mathbf{i}:|\mathbf{i}| \leq M-1} v_{j,\mathbf{i}}^G(t) z^\mathbf{i}
\end{cases}
\end{equation} 
where $\mathbf{i} = (i_1, \ldots, i_d)$ denotes multi-index, and $\abs{\mathbf{i}} = \sum_{k=1}^d i_k$ and $z^{\mathbf{i}} = \prod_{k=1}^d z_k^{i_k}$.
Let 
\begin{align}\label{e:r1} r_{1,j}(x(0),v(0)) &= \sum_{\mathbf{k}: \abs{\mathbf{k}} \leq M} \sum_{\mathbf{p} + \mathbf{q} = \mathbf{k}} h^1_{j,\mathbf{p},\mathbf{q}} x(0)^\mathbf{p} v(0)^\mathbf{q}\\
\label{e:r2}
r_{2,j}(x(0),v(0)) &= \sum_{\mathbf{k}: \abs{\mathbf{k}} \leq M} \sum_{\mathbf{p} + \mathbf{q} = \mathbf{k}} h^2_{j,\mathbf{p},\mathbf{q}} x(0)^\mathbf{p} v(0)^\mathbf{q} \end{align}
The equation \eqref{eq:mainpair} gives us that for all $j$,
\begin{equation} 
\begin{cases}  
\int_{0}^{2\pi}\left( -\rc{\Om_j} J_j(x^0(s),s) \sin (\Omega_j s)  +   F_j(v^0(s),s) \cos (\Omega_j s) x^0_j(s) \right.\\
\hspace{3cm}\left.- \rc{\Om_j} G_j(x^0(s),s) \sin (\Omega_j s) v^0_j(s) \right) ds = r_{1,j}(x(0), y(0))\\  
\int_{0}^{2\pi} \left( J_j(x^0(s),s) \cos(\Omega_j s)  + \Om_j F_j(v^0(s),s) \sin (\Omega_j s) x^0_j(s) \right.\\
\hspace{3cm}\left. + G_j(x^0(s),s) \cos (\Omega_j s) v^0_j(s)  \right) ds = r_{2,j}(x(0), y(0)) 
\end{cases}
\label{e:r-equation}
\end{equation}
Let $\binom{\mathbf{k}}{\mathbf{p}} = \prod_{k = 1}^d \binom{k_i}{p_i}$. Let $\mathbf{k^t_j}$ be the multi-index $(k_1, \ldots, k_j + t, \ldots, k_d)$. 
We substitute~\eqref{e:x0}--\eqref{e:v0}, ~\eqref{e:fgj_defintions}, and~\eqref{e:r1}--\eqref{e:r2} into~\eqref{e:r-equation} and match the coefficients of $x(0)^\mathbf{p} v(0)^\mathbf{q}$.

If $k_j = 0$, then
\begin{align*}
	h^1_{j,\mathbf{p},\mathbf{q}}
	& = \int_0^{2\pi} - \rc{\Omega_j} v^J_{j,\mathbf{k}} \cos(\Omega s)^\mathbf{p} \sin(\Omega s)^\mathbf{q_j^1} \binom{\mathbf{k}}{\mathbf{p}} ds\\
	h^2_{j,\mathbf{p},\mathbf{q}}
	& = \int_0^{2\pi} v^J_{j,\mathbf{k}} \cos(\Omega s)^\mathbf{p_j^1} \sin(\Omega s)^\mathbf{q} \binom{\mathbf{k}}{\mathbf{p}} ds
\end{align*}
where $v^J_{j,\mathbf{k}} = a \cos(\Omega s)^\mathbf{p} \sin(\Omega s)^\mathbf{q_j^1} + b \cos(\Omega s)^\mathbf{p_j^1} \sin(\Omega s)^\mathbf{q}$. Since the function $\delta(s) = \cos(\Omega s)^{\mathbf{p} + \mathbf{p_j^1}} \sin(\Omega s)^{\mathbf{q} + \mathbf{q_j^1}}$ satisfies $\delta(\pi - s) = - \delta(\pi + s)$, this function integrates to zero, and hence the system above reduces to 
\begin{align*}
	h^1_{j,\mathbf{p},\mathbf{q}} & = a \rc{\Omega_j} C \binom{\mathbf{k}}{\mathbf{p}} \\
	h^2_{j,\mathbf{p},\mathbf{q}} & = b C \binom{\mathbf{k}}{\mathbf{p}}
\end{align*}
for some non-zero constant 
\[ C = \int_0^{2 \pi} \cos(\Omega s)^{2 \mathbf{p}} \sin(\Omega s)^{2 \mathbf{q_j^1}} ds = \int_0^{2 \pi} \cos(\Omega s)^{2 \mathbf{p_j^1}} \sin(\Omega s)^{2 \mathbf{q}} ds \]
Note that the integral is non-zero since the function inside is positive as all the powers are even.

If $k_j > 0$, then substituting the forms of $x^0(s), v^0(s)$ from \eqref{e:x0} in the LHS of ~\eqref{e:r-equation}, and expanding using the binomial theorem, we get that
\begin{align*}
	h^1_{j,\mathbf{p}, \mathbf{q}}
	& =
	\rc{\Om^{\mathbf{q^1_j}}}
	\int_0^{2 \pi} - v^J_{j,\mathbf{k}} \cos(\Omega s)^\mathbf{p}  \sin(\Omega s)^{\mathbf{q^1_j}} \binom{\mathbf{k}}{\mathbf{p}} ds\\
	& + \Om^{\mathbf{p^{-1}_j}} \int_0^{2 \pi} v^F_{j,\mathbf{k^{-1}_j}} (-\mathbf{1})^\mathbf{p^{-1}_j} \sin(\Omega s)^\mathbf{p^{-1}_j} \cos(\Omega s)^\mathbf{q^2_j} \binom{\mathbf{k^{-1}_j}}{\mathbf{p^{-1}_j}}ds \\
	&+ \Om^{\mathbf{p^{-1}_j}} \int_{0}^{2\pi}v^F_{j,\mathbf{k^{-1}_j}} (-\mathbf{1})^\mathbf{p} \sin(\Omega s)^\mathbf{p^1_j} \cos(\Omega s)^\mathbf{q} \binom{\mathbf{k^{-1}_j}}{\mathbf{p}} ds\\
	& +\rc{\Om^{\mathbf{q}}} \int_0^{2 \pi} \left(v^G_{j, \mathbf{k^{-1}_j}} \cos(\Omega s)^\mathbf{p^{-1}_j} \sin(\Omega s)^\mathbf{q^2_j} \binom{\mathbf{k^{-1}_j}}{\mathbf{p^{-1}_j}}
	- v^G_{j, \mathbf{k^{-1}_j}} \cos(\Omega s)^\mathbf{p^1_j} \sin(\Omega s)^\mathbf{q} \binom{\mathbf{k^{-1}_j}}{\mathbf{p}}\right)ds\\
	h^2_{j,\mathbf{p}, \mathbf{q}}
	& = \rc{\Om^{\mathbf{q}}}\int_0^{2 \pi} v^J_{j,\mathbf{k}} \cos(\Omega s)^\mathbf{p^1_j} \sin(\Omega s)^{\mathbf{q}} \binom{\mathbf{k}}{\mathbf{p}} ds\\
	& + \Om^{\mathbf{p}}\int_0^{2 \pi} v^F_{j,\mathbf{k^{-1}_j}} (-\mathbf{1})^\mathbf{p^{-1}_j} \sin(\Omega s)^\mathbf{p} \cos(\Omega s)^\mathbf{q^1_j} \binom{\mathbf{k^{-1}_j}}{\mathbf{p^{-1}_j}}ds
	\\&+ \Om^{\mathbf{p}}\int_0^{2 \pi}v^F_{j,\mathbf{k^{-1}_j}} (-\mathbf{1})^\mathbf{p} \sin(\Omega s)^\mathbf{p^2_j} \cos(\Omega s)^\mathbf{q^{-1}_j} \binom{\mathbf{k^{-1}_j}}{\mathbf{p}} ds\\
	& +\rc{\Om^{\mathbf{q^{-1}_j}}} \int_0^{2 \pi}\left( - v^G_{j, \mathbf{k^{-1}_j}} \cos(\Omega s)^\mathbf{p} \sin(\Omega s)^\mathbf{q^1_j} \binom{\mathbf{k^{-1}_j}}{\mathbf{p^{-1}_j}}
	+ v^G_{j, \mathbf{k^{-1}_j}} \cos(\Omega s)^\mathbf{p^2_j} \sin(\Omega s)^\mathbf{q^{-1}_j} \binom{\mathbf{k^{-1}_j}}{\mathbf{p}}\right)ds
\end{align*}
Let $g_{\mathbf{k}, \mathbf{p}}(s) = \cos(\Omega s)^\mathbf{p} \sin(\Omega s)^\mathbf{k - p}$ for all $\mathbf{p} \le \mathbf{k}$. Crucially, let us assume that $v^J_{j,\mathbf{k}}, v^F_{j,\mathbf{k}}, v^G_{j,\mathbf{k}}$ are all of the form
\begin{equation}
\begin{cases}
v^F_{j,\mathbf{k}} = \sum_{\mathbf{r} \le \mathbf{k^2_j}} \alpha_{\mathbf{k^2_j},\mathbf{r}} g_{\mathbf{k^2_j},\mathbf{r}}(s)\\
v^G_{j,\mathbf{k}} = \sum_{\mathbf{r} \le \mathbf{k^2_j}} \beta_{\mathbf{k^2_j},\mathbf{r}} g_{\mathbf{k^2_j},\mathbf{r}}(s)\\
v^J_{j,\mathbf{k}} = \sum_{\mathbf{r} \le \mathbf{k^1_j}} \gamma_{\mathbf{k^1_j},\mathbf{r}} g_{\mathbf{k^1_j},\mathbf{r}}(s)\\
\end{cases}
\end{equation}
Substituting,
\begin{align*}
	h^1_{j,\mathbf{p}, \mathbf{q}}
	& = \rc{\Om^{\mathbf{q^1_j}}}\int_0^{2 \pi} 
	- \sum_{\mathbf{r} \le \mathbf{k^1_j}} \gamma_{\mathbf{k^1_j},\mathbf{r}} g_{\mathbf{k^1_j},\mathbf{r}}(s)
	g_{\mathbf{k^1_j},\mathbf{p}} (s)
	\binom{\mathbf{k}}{\mathbf{p}} ds\\
	& + \Om^{\mathbf{p^{-1}_j}} \int_0^{2 \pi}\left( 
	(-\mathbf{1})^\mathbf{p^{-1}_j}
	\sum_{\mathbf{r} \le \mathbf{k^1_j}} \alpha_{\mathbf{k^1_j},\mathbf{r}} g_{\mathbf{k^1_j},\mathbf{r}}(s)
	g_{\mathbf{k^1_j},\mathbf{q^2_j}}(s)
	\binom{\mathbf{k^{-1}_j}}{\mathbf{p^{-1}_j}}
	+ (-\mathbf{1})^\mathbf{p}
	\sum_{\mathbf{r} \le \mathbf{k^1_j}} \alpha_{\mathbf{k^1_j},\mathbf{r}} g_{\mathbf{k^1_j},\mathbf{r}}(s)
	g_{\mathbf{k^1_j},\mathbf{q}}(s) 
	\binom{\mathbf{k^{-1}_j}}{\mathbf{p}}\right) ds\\
	& + \rc{\Om^{\mathbf{q}}}\int_0^{2 \pi}\left( 
	\sum_{\mathbf{r} \le \mathbf{k^1_j}} \beta_{\mathbf{k^1_j},\mathbf{r}} g_{\mathbf{k^1_j},\mathbf{r}}(s)
	g_{\mathbf{k^1_j},\mathbf{p^{-1}_j}}(s)
	\binom{\mathbf{k^{-1}_j}}{\mathbf{p^{-1}_j}}
	-  \sum_{\mathbf{r} \le \mathbf{k^1_j}} \beta_{\mathbf{k^1_j},\mathbf{r}} g_{\mathbf{k^1_j},\mathbf{r}}(s)
	g_{\mathbf{k^1_j},\mathbf{p^1_j}}(s)
	\binom{\mathbf{k^{-1}_j}}{\mathbf{p}}\right) ds\\
	h^2_{j,\mathbf{p}, \mathbf{q}}
	& = \rc{\Om^{\mathbf{q}}}\int_0^{2 \pi}
	\sum_{\mathbf{r} \le \mathbf{k^1_j}} \gamma_{\mathbf{k^1_j},\mathbf{r}} g_{\mathbf{k^1_j},\mathbf{r}}(s)
	g_{\mathbf{k^1_j}, \mathbf{p^1_j}}(s)
	\binom{\mathbf{k}}{\mathbf{p}} ds\\
	& + \Om^{\mathbf{p}}\int_0^{2 \pi}\left( 
	(-\mathbf{1})^\mathbf{p^{-1}_j}
	\sum_{\mathbf{r} \le \mathbf{k^1_j}} \alpha_{\mathbf{k^1_j},\mathbf{r}} g_{\mathbf{k^1_j},\mathbf{r}}(s)
	g_{\mathbf{k^1_j},\mathbf{q^1_j}}(s)
	\binom{\mathbf{k^{-1}_j}}{\mathbf{p^{-1}_j}}
	+ (-\mathbf{1})^\mathbf{p}
	\sum_{\mathbf{r} \le \mathbf{k^1_j}} \alpha_{\mathbf{k^1_j},\mathbf{r}} g_{\mathbf{k^1_j},\mathbf{r}}(s) 
	g_{\mathbf{k^1_j},\mathbf{q^{-1}_j}}(s)
	\binom{\mathbf{k^{-1}_j}}{\mathbf{p}}\right) ds\\
	& +\rc{\Om^{\mathbf{q^{-1}_j}}} \int_0^{2 \pi}\left(
	- \sum_{\mathbf{r} \le \mathbf{k^1_j}} \beta_{\mathbf{k^1_j},\mathbf{r}} g_{\mathbf{k^1_j},\mathbf{r}}(s)
	g_{\mathbf{k^1_j},\mathbf{p}}(s)
	\binom{\mathbf{k^{-1}_j}}{\mathbf{p^{-1}_j}}
	+ \sum_{\mathbf{r} \le \mathbf{k^1_j}} \beta_{\mathbf{k^1_j},\mathbf{r}} g_{\mathbf{k^1_j},\mathbf{r}}(s)
	g_{\mathbf{k^1_j},\mathbf{p^2_j}}(s)
	\binom{\mathbf{k^{-1}_j}}{\mathbf{p}}\right) ds
\end{align*}
Now, let $\inprod{f}{g} = \int_0^{2 \pi} f(s)g(s) ds$ denote the $\ell_2$ inner product. Then, we can rewrite the above system as
\begin{align*}
	h^1_{j,\mathbf{p}, \mathbf{q}}
	& =
	-\rc{\Om^{\mathbf{q^1_j}}} \sum_{\mathbf{r} \le \mathbf{k^1_j}} \gamma_{\mathbf{k^1_j},\mathbf{r}}
	\inprod{g_{\mathbf{k^1_j},\mathbf{r}}(s)}{g_{\mathbf{k^1_j},\mathbf{p}} (s)}
	\binom{\mathbf{k}}{\mathbf{p}} \\
	& + \Om^{\mathbf{p^{-1}_j}}\ba{
	(-\mathbf{1})^\mathbf{p^{-1}_j}
	\sum_{\mathbf{r} \le \mathbf{k^1_j}} \alpha_{\mathbf{k^1_j},\mathbf{r}}
	\inprod{g_{\mathbf{k^1_j},\mathbf{r}}(s)}{g_{\mathbf{k^1_j},\mathbf{q^2_j}}(s)}
	\binom{\mathbf{k^{-1}_j}}{\mathbf{p^{-1}_j}}
	+ (-\mathbf{1})^\mathbf{p}
	\sum_{\mathbf{r} \le \mathbf{k^1_j}} \alpha_{\mathbf{k^1_j},\mathbf{r}}
	\inprod{g_{\mathbf{k^1_j},\mathbf{r}}(s)}{g_{\mathbf{k^1_j},\mathbf{q}}(s)}
	\binom{\mathbf{k^{-1}_j}}{\mathbf{p}}} \\
	& + \rc{\Om^{\mathbf{q}}}
	\ba{\sum_{\mathbf{r} \le \mathbf{k^1_j}} \beta_{\mathbf{k^1_j},\mathbf{r}}
	\inprod{g_{\mathbf{k^1_j},\mathbf{r}}(s)}{g_{\mathbf{k^1_j},\mathbf{p^{-1}_j}}(s)}
	\binom{\mathbf{k^{-1}_j}}{\mathbf{p^{-1}_j}}
	-  \sum_{\mathbf{r} \le \mathbf{k^1_j}} \beta_{\mathbf{k^1_j},\mathbf{r}}
	\inprod{g_{\mathbf{k^1_j},\mathbf{r}}(s)}{g_{\mathbf{k^1_j},\mathbf{p^1_j}}(s)}
	\binom{\mathbf{k^{-1}_j}}{\mathbf{p}}}\\
	h^2_{j,\mathbf{p}, \mathbf{q}}
	& =\rc{\Om^{\mathbf{q}}}
	\sum_{\mathbf{r} \le \mathbf{k^1_j}} \gamma_{\mathbf{k^1_j},\mathbf{r}}
	\inprod{g_{\mathbf{k^1_j},\mathbf{r}}(s)}{g_{\mathbf{k^1_j}, \mathbf{p^1_j}}(s)}
	\binom{\mathbf{k}}{\mathbf{p}}\\
	& +\Om^{\mathbf{p}}\ba{
	(-\mathbf{1})^\mathbf{p^{-1}_j}
	\sum_{\mathbf{r} \le \mathbf{k^1_j}} \alpha_{\mathbf{k^1_j},\mathbf{r}}
	\inprod{g_{\mathbf{k^1_j},\mathbf{r}}(s)}{g_{\mathbf{k^1_j},\mathbf{q^1_j}}(s)}
	\binom{\mathbf{k^{-1}_j}}{\mathbf{p^{-1}_j}}
	+ (-\mathbf{1})^\mathbf{p}
	\sum_{\mathbf{r} \le \mathbf{k^1_j}} \alpha_{\mathbf{k^1_j},\mathbf{r}}
	\inprod{g_{\mathbf{k^1_j},\mathbf{r}}(s)}{g_{\mathbf{k^1_j},\mathbf{q^{-1}_j}}(s)}
	\binom{\mathbf{k^{-1}_j}}{\mathbf{p}}}\\
	& +\rc{\Om^{\mathbf{q^{-1}_j}}} 
	\ba{- \sum_{\mathbf{r} \le \mathbf{k^1_j}} \beta_{\mathbf{k^1_j},\mathbf{r}}
	\inprod{g_{\mathbf{k^1_j},\mathbf{r}}(s)}{g_{\mathbf{k^1_j},\mathbf{p}}(s)}
	\binom{\mathbf{k^{-1}_j}}{\mathbf{p^{-1}_j}}
	+ \sum_{\mathbf{r} \le \mathbf{k^1_j}} \beta_{\mathbf{k^1_j},\mathbf{r}}
	\inprod{g_{\mathbf{k^1_j},\mathbf{r}}(s)}{g_{\mathbf{k^1_j},\mathbf{p^2_j}}(s)}
	\binom{\mathbf{k^{-1}_j}}{\mathbf{p}}}
\end{align*}
Now, we will add a few redundant constraints in the system. These are added to ensure that the system has a nice matrix form; they are all of the type $0 = 0$. To do this, we allow $\mathbf{p} \ge \mathbf{0^{-1}_j}$, instead of $\mathbf{p} \ge \mathbf{0}$. Note that if $p_j = -1$, then $q_j = k_j + 1$  since $\mathbf{p} + \mathbf{q} = \mathbf{k}$. Again, we follow the convention that $\binom{n}{i} = 0$ if $i < 0$ or $i > n$, as well as $g_{\mathbf{k},\mathbf{p}} = 0$ if $\mathbf{p}$ is not between $\mathbf{0}$ and $\mathbf{k}$, both inclusive. Also define $h^1_{\mathbf{p}, \mathbf{q}} = h^2_{\mathbf{p},\mathbf{q}} = 0$ if either $\mathbf{p}$ or $\mathbf{q}$ are not between $\mathbf{0}$ and $\mathbf{k}$. 
Thus, all the new constraints added are indeed of the type $0 = 0$.

After these modifications, the system obtained has one constraint corresponding to $h^t_{\mathbf{p},\mathbf{q}}$ for each $\mathbf{0} \le \mathbf{q} \le \mathbf{k^1_j}$ (or equivalently $\mathbf{0^{-1}_j} \le \mathbf{p} \le \mathbf{k}$), $\mathbf{p} + \mathbf{q} = \mathbf{k}$, $t = 1,2$ with variables $\alpha_{\mathbf{k^1_j},\mathbf{r}}, \beta_{\mathbf{k^1_j},\mathbf{r}}, \gamma_{\mathbf{k^1_j},\mathbf{r}}$ for $\mathbf{0} \le \mathbf{r} \le \mathbf{k^1_j}$. Further, let
\[ n_{j, \mathbf{k}} = \abs{D_\mathbf{k}} \hspace{2cm} D_\mathbf{k} =  \set{\mathbf{r} : \mathbf{0} \le \mathbf{r} \le \mathbf{k}}\]

We will write this system in a matrix form, given by a matrix $A_{j, \mathbf{k}}$ of dimension $2 n_{j, \mathbf{k^1_j}} \times 3 n_{j, \mathbf{k^1_j}}$ such that
\[ A_{j,\mathbf{k}} \begin{bmatrix} \alpha \\ \beta \\ \gamma \end{bmatrix} = \begin{bmatrix} h^1_j \\ h^2_j \end{bmatrix} \]
Here $\xi = (\xi_{\mathbf{k^1_j}, \mathbf{r}})$ is the vector of dimension $n_{j,\mathbf{k^1_j}}$ for $\xi \in \set{\alpha, \beta, \gamma}$. For notational convenience, we will fix $j$ and $\mathbf{k}$ and denote $A = A_{j, \mathbf{k}}$. We will index rows of $A$ by $(\mathbf{p},t)$ and columns by $(\mathbf{r},\xi)$ where $\mathbf{r}, \mathbf{p^1_j} \in D_{\mathbf k_j^1}$, $t \in \set{1,2}$, $\xi \in \set{\alpha,\beta,\gamma}$. Further, we will denote by $A_{t,\xi}$ the submatrix of $A$ corresponding to the rows $(\mathbf{p}, t)$ and columns $(\mathbf{r}, \xi)$, that is, $A_{t,\xi}(\mathbf{p},\mathbf{r}) = A((\mathbf{p},t),(\mathbf{r},\xi))$. Matrix $A$ has only $2n_{j,\mathbf{k}}$ non-trivial rows, namely the rows which correspond to $\mathbf{p}$ such that $\mathbf{p} \ge 0$. Hence to show that the system above has a solution, it suffices to prove that matrix $A$ has rank $2n_{j,\mathbf{k}}$.

Define $X, Y$ to be $n_{j,\mathbf{k}} \times n_{j,\mathbf{k}}$ matrices with rows and columns indexed by elements of $D_\mathbf{k}$ such that 
\[ X(\mathbf{p}, \mathbf{r}) = \inprod{g_{\mathbf{k^1_j},\mathbf{r}}}{g_{\mathbf{k^1_j},\mathbf{p^1_j}}} \]
\[ Y(\mathbf{p}, \mathbf{r}) = (-\mathbf{1})^{\mathbf{p^1_j}} \inprod{g_{\mathbf{k^1_j},\mathbf{r}}}{g_{\mathbf{k^1_j},\mathbf{k^1_j} - \mathbf{p^1_j}}} \]
Now, assign $\Omega_1=1$, $\Omega_j = \frac{M^j-1}{M-1}$ for $j>1$. For this choice of $\Omega_j$'s, it is shown in \cite{turaev2002polynomial} that the functions $g_{\mathbf{k},\mathbf{s}}$ for $\mathbf{0} \le \mathbf{s} \le \mathbf{k}$ are linearly independent. It follows from this that the matrices $X$ and $Y$ are full rank.
Let $P$ be the permutation matrix that takes row $\mathbf{r}$ of this matrix to row $\mathbf{r^1_j}$ unless $r_j = k_j$, in which case it takes row $\mathbf{r}$ to $\mathbf{s}$ where $s_i = r_i$ for all $i \neq j$ and $s_j = -1$. Thus, for any matrix $M$, $PM(\mathbf{p}, \mathbf{r}) = M(\mathbf{p^{-1}_j},\mathbf{r})$ when $p_j \neq -1$, and $PM(\mathbf{p},\mathbf{r}) = M(\mathbf{p'},\mathbf{r})$ where $p'_i = p_i$ for $i \neq j$ and $p'_i = k_j$ if $p_j = -1$. 
In particular,
\[ PX(\mathbf{p},\mathbf{r}) = X(\mathbf{p^{-1}_j},\mathbf{r}) = \inprod{g_{\mathbf{k^1_j},\mathbf{r}}}{g_{\mathbf{k^1_j}, \mathbf{p}}} \]
\[ PY(\mathbf{p},\mathbf{r}) = Y(\mathbf{p^{-1}_j}, \mathbf{r}) = (-1)^\mathbf{p} \inprod{g_{\mathbf{k^1_j}, \mathbf{r}}}{g_{\mathbf{k_j^1},\mathbf{k^1_j} - \mathbf{p}}}\]
when $\mathbf{p} \ge \mathbf{0}$.
Define $n_{j,\mathbf{k}} \times n_{j,\mathbf{k}}$ diagonal matrices $D_1, D_2, D_3$ such that
\[ D_1(\mathbf{p}, \mathbf{p}) = \binom{\mathbf{k^{-1}_j}}{\mathbf{p}} \qquad D_2(\mathbf{p},\mathbf{p}) = \binom{\mathbf{k^{-1}_j}}{\mathbf{p^{-1}_j}} \qquad D_3(\mathbf{p},\mathbf{p}) = \binom{\mathbf{k}}{\mathbf{p}} \]
for $\mathbf{0^{-1}_j} \le \mathbf{p} \le \mathbf{k}$. Recalling that $\mathbf{q}=\mathbf{k}-\mathbf{p}$, we see that

\begingroup
\allowdisplaybreaks
\begin{align*}
	A_{1,\alpha}(\mathbf{p},\mathbf{r})
	& = \Omega^{\mathbf{p^{-1}_j}}\binom{\mathbf{k^{-1}_j}}{\mathbf{p^{-1}_j}} (-\mathbf{1})^\mathbf{p^{-1}_j} \inprod{g_{\mathbf{k^1_j,\mathbf{r}}}}{g_{\mathbf{k^1_j},\mathbf{k_j^1} - \mathbf{p^{-1}_j}}} + \Omega^{\mathbf{p^{-1}_j}}\binom{\mathbf{k^{-1}_j}}{\mathbf{p}} (-\mathbf{1})^\mathbf{p} \inprod{g_{\mathbf{k^1_j},\mathbf{r}}}{g_{\mathbf{k^1_j},\mathbf{k^1_j} - \mathbf{p^1_j}}} \\
	& = \Omega^{\mathbf{p^{-1}_j}}D_2(\mathbf{p},\mathbf{p}) P^2Y(\mathbf{p},\mathbf{r}) - \Omega^{\mathbf{p^{-1}_j}}D_1(\mathbf{p},\mathbf{p}) Y(\mathbf{p},\mathbf{r})\\
	\Rightarrow A_{1,\alpha}
	& = \Omega^{\mathbf{p^{-1}_j}}(D_2 P^2 - D_1) Y \\
	A_{1,\beta}(\mathbf{p},\mathbf{r})
	& = \rc{\Omega^{\mathbf{q}}}\binom{\mathbf{k^{-1}_j}}{\mathbf{p^{-1}_j}} \inprod{g_{\mathbf{k^1_j,\mathbf{r}}}}{g_{\mathbf{k^1_j},\mathbf{p^{-1}_j}}} - \rc{\Omega^{\mathbf{q}}}\binom{\mathbf{k^{-1}_j}}{\mathbf{p}} \inprod{g_{\mathbf{k^1_j},\mathbf{r}}}{g_{\mathbf{k^1_j},\mathbf{p^1_j}}} \\
	& = \rc{\Omega^{\mathbf{q}}}D_2(\mathbf{p},\mathbf{p}) P^2X(\mathbf{p},\mathbf{r}) - \rc{\Omega^{\mathbf{q}}}D_1(\mathbf{p},\mathbf{p}) X(\mathbf{p},\mathbf{r})\\
	\Rightarrow A_{1,\beta}
	& = \rc{\Omega^{\mathbf{q}}}(D_2 P^2 - D_1)X \\
	A_{1,\gamma}(\mathbf{p},\mathbf{r})
	& = - \frac{1}{\Omega^{\mathbf{q^1_j}}}\binom{\mathbf{k}}{\mathbf{p}} \inprod{g_{\mathbf{k^1_j},\mathbf{r}}}{g_{\mathbf{k^1_j},\mathbf{p}}}\\
	& = - \frac{1}{\Omega^{\mathbf{q^1_j}}}D_3(\mathbf{p},\mathbf{p}) PX(\mathbf{p},\mathbf{r}) \\
	\Rightarrow A_{1,\gamma}
	& = - \frac{1}{\Omega^{\mathbf{q^1_j}}}D_3 PX \\
	A_{2,\alpha}(\mathbf{p},\mathbf{r})
	& = \Omega^{\mathbf{p}}\binom{\mathbf{k^{-1}_j}}{\mathbf{p^{-1}_j}} (-\mathbf{1})^\mathbf{p^{-1}_j} \inprod{g_{\mathbf{k^1_j,\mathbf{r}}}}{g_{\mathbf{k^1_j},\mathbf{k_j^1} - \mathbf{p}}} + \Omega^{\mathbf{p}}\binom{\mathbf{k^{-1}_j}}{\mathbf{p}} (-\mathbf{1})^\mathbf{p} \inprod{g_{\mathbf{k^1_j},\mathbf{r}}}{g_{\mathbf{k^1_j},\mathbf{k^1_j} - \mathbf{p^2_j}}} \\
	& = - \Omega^{\mathbf{p}}D_2(\mathbf{p},\mathbf{p}) PY(\mathbf{p},\mathbf{r}) + \Omega^{\mathbf{p}}D_1(\mathbf{p},\mathbf{p}) P^{-1}Y(\mathbf{p},\mathbf{r})\\
	\Rightarrow A_{2,\alpha}
	& = \Omega^{\mathbf{p}}(-D_2 P + D_1 P^{-1}) Y \\
	A_{2,\beta}(\mathbf{p},\mathbf{r})
	& = - \rc{\Omega^{\mathbf{q^{-1}_j}}}\binom{\mathbf{k^{-1}_j}}{\mathbf{p^{-1}_j}} \inprod{g_{\mathbf{k^1_j,\mathbf{r}}}}{g_{\mathbf{k^1_j},\mathbf{p}}} + \rc{\Omega^{\mathbf{q^{-1}_j}}}\binom{\mathbf{k^{-1}_j}}{\mathbf{p}} \inprod{g_{\mathbf{k^1_j},\mathbf{r}}}{g_{\mathbf{k^1_j},\mathbf{p^2_j}}} \\
	& = - \rc{\Omega^{\mathbf{q^{-1}_j}}}D_2(\mathbf{p},\mathbf{p}) PX(\mathbf{p},\mathbf{r}) + \rc{\Omega^{\mathbf{q^{-1}_j}}}D_1(\mathbf{p},\mathbf{p}) P^{-1}X(\mathbf{p},\mathbf{r})\\
	\Rightarrow A_{2,\beta}
	& = \rc{\Omega^{\mathbf{q^{-1}_j}}}(-D_2 P + D_1 P^{-1})X \\
	A_{2,\gamma}(\mathbf{p},\mathbf{r})
	& = \rc{\Omega^{\mathbf{q}}}\binom{\mathbf{k}}{\mathbf{p}} \inprod{g_{\mathbf{k^1_j},\mathbf{r}}}{g_{\mathbf{k^1_j},\mathbf{p^1_j}}}\\
	& = \rc{\Omega^{\mathbf{q}}}D_3(\mathbf{p},\mathbf{p}) X(\mathbf{p},\mathbf{r}) \\
	\Rightarrow A_{2,\gamma}
	& = \rc{\Omega^{\mathbf{q}}}D_3 X
\end{align*}
\endgroup
For the above equations to go through as is, we need to check the case when $p_j = -1$, since definitions of $PX$ and $PY$ are different for this case. But, in this case, $D_1(\mathbf{p},\mathbf{p}) = D_2(\mathbf{p},\mathbf{p}) = 0$, and hence the equations hold. Similarly, we need to check the case $p_j = 0$ for blocks $A_{1,\alpha}$ and $A_{1,\beta}$, but again, $D_2(\mathbf{p},\mathbf{p}) = 0$ and hence the equations hold. Thus, we can write $A$ as 
\[ \begin{bmatrix}\id & 0 \\
0 & \Omega_j\id\end{bmatrix}
\begin{bmatrix} D_2 P^2 - D_1 & D_2 P^2 - D_1 & - D_3 P \\ -D_2 P + D_1 P^{-1} & -D_2 P + D_1 P^{-1} & D_3 \end{bmatrix}
\small\begin{bmatrix}
\Omega^{\mathbf{p^{-1}_j}}\id & 0 & 0 \\
0 & \rc{\Omega^{\mathbf{q}}}\id & 0 \\
0 & 0 & \frac{1}{\Omega^{\mathbf{q^1_j}}}\id
\end{bmatrix}
\begin{bmatrix} Y & 0 & 0 \\ 0 & X & 0 \\ 0 & 0 & X \end{bmatrix} \]
To show that $A$ has rank $2n_{j,\mathbf{k}}$, it suffices to show that the matrix

\[ B = \begin{bmatrix} D_2 P^2 - D_1 & - D_3 P \\ -D_2 P + D_1 P^{-1} & D_3 \end{bmatrix} \]
has rank $2n_{j,\mathbf{k}}$. Let us index rows of $B$ using $(\mathbf{p}, s)$ and columns using $(\mathbf{p},t)$ for $s,t \in \set{1,2}$. Since $P$ is a permutation matrix, post multiplying by $P$ takes column $\mathbf{r}$ of this matrix to column $\mathbf{r^{-1}_j}$, where the indices cycle whenever they are out of bounds.
More specifically,
\[ MP(\mathbf{p},\mathbf{r}) = P^{-1} M^\intercal (\mathbf{r},\mathbf{p}) = M^\intercal (\mathbf{r^1_j},\mathbf{p}) = M(\mathbf{p},\mathbf{r^1_j}) .\]
Hence, for a fixed row $(\mathbf{p},1)$ the non-zero entries in $B$ are in columns $(\mathbf{p^{-2}_j},1), (\mathbf{p},1), (\mathbf{p^{-1}_j},2)$. Similarly, non-zero entries in the row $(\mathbf{p},2)$ are in columns $(\mathbf{p^{-1}_j},1), (\mathbf{p^1_j},1), (\mathbf{p},2)$. Observe that rows $(\mathbf{p^1_j},1)$ and $(\mathbf{p},2)$ have non-zero entries in the same columns. This gives us a procedure to convert this matrix into a lower triangular matrix using row operations, where indices are ordered using any order $<_R$ that respects
\begin{enumerate}[nosep]
    \item $(\mathbf{p},t) <_R (\mathbf{q},t)$ if $p_j < q_j$
    \item $(\mathbf{p},1) <_R (\mathbf{q},2)$ for all $\mathbf{0^{-1}_j} \le \mathbf{p}, \mathbf{q} \le \mathbf{k}$
\end{enumerate}
In particular, any lexicographical ordering with highest priority to the $j^{th}$ coordinate works.

Note that only upper triangular non-zero entries using any such ordering are of the type $((\mathbf{p_j^1},1),(\mathbf{p},2))$. Now, we eliminate these using the following row operations:
\[ R(\mathbf{p_j^1},1) \leftarrow R(\mathbf{p_j^1},1) + C_\mathbf{p} R(\mathbf{p},2)) \]
for all $\mathbf{p}$ such that $0 \le \mathbf{p} \le \mathbf{k_j^{-1}}$. Here 
\[ C_\mathbf{p} = -\frac{B((\mathbf{p_j^1},1),(\mathbf{p},2))}{B((\mathbf{p},2),(\mathbf{p},2))} = - \frac{-\binom{\mathbf{k}}{\mathbf{p_j^1}}}{\binom{\mathbf{k}}{\mathbf{p}}} = \frac{\binom{k_j}{p_j+1}}{\binom{k_j}{p_j}} = \frac{k_j - p_j}{p_j + 1} \]
Note that after this set of operations, $B((\mathbf{p_j^1},1),(\mathbf{p},2)) \leftarrow 0$. On the other hand,
\begin{align*}
	B((\mathbf{p_j^1},1),(\mathbf{p_j^1},1))
	\leftarrow & B((\mathbf{p_j^1},1),(\mathbf{p_j^1},1)) + \frac{k_j - p_j}{p_j + 1} B((\mathbf{p},2), (\mathbf{p_j^1},1)) \\
	= & - \binom{\mathbf{k_j^{-1}}}{\mathbf{p_j^1}} + \frac{k_j - p_j}{p_j + 1} \binom{\mathbf{k_j^{-1}}}{\mathbf{p}}\\
	= & \binom{\mathbf{k_j^{-1}}}{\mathbf{p}} \brac{-\frac{k_j - p_j - 1}{p_j + 1} + \frac{k_j - p_j}{p_j + 1}} \\
	= & \frac{1}{p_j + 1} \binom{\mathbf{k_j^{-1}}}{\mathbf{p}} \neq 0
\end{align*}
The only non-zero entries in the upper triangle after this operation corresponds to positions $((\mathbf{p_j^1},1),(\mathbf{p},2))$, for $\mathbf{0_j^{-1}} \le \mathbf{p} \le \mathbf{k_j^{-1}}$, such that $p_j = -1$. To eliminate these, we perform the following row operations:
\[ R(\mathbf{p_j^1},1) \leftrightarrow R(\mathbf{p},2) \] 
for all $\mathbf{0_j^{-1}} \le \mathbf{p} \le \mathbf{k_j^{-1}}$ such that $p_j = -1$. Hence,
\[ B((\mathbf{p},2),(\mathbf{p},2)) \leftarrow B((\mathbf{p_j^1},1),(\mathbf{p},2)) = \binom{\mathbf{k}}{\mathbf{p_j^1}} \neq 0 \]
Note that $R(\mathbf{p},2) = 0$ since this row corresponds to a dummy constraint. Also, the other two non-zero entries in $R(\mathbf{p_j^1},1)$ are in the first half, and hence this does not create any upper triangular entries. Hence, this matrix is in fact lower triangular, in the given ordering $<_R$ of indices.

After the operations, among the diagonal terms, $B((\mathbf{p},2),(\mathbf{p},2)) \neq 0$ for $\mathbf{0_j^{-1}} \le \mathbf{p} \le \mathbf{k}$. Also, $B((\mathbf{p},1),(\mathbf{p},1)) \neq 0$ for $\mathbf{0_j^1} \le \mathbf{p} \le \mathbf{k}$. Therefore, the total number of non-zero diagonal entries is 
\[ n_{j,\mathbf{k}} \brac{\frac{k_j + 1}{k_j} + \frac{k_j-1}{k_j}} = 2 n_{j ,\mathbf{k}} \]
This proves that the matrix has rank $2n_{j,\mathbf{k}}$, which is the same as the number of non-trivial rows, and hence the system has a solution for any $r_1,r_2$. Consequently, we can always find polynomial functions $J,F,G$ as required.
\end{proof}

\section{Proof of Lemma \ref{l:ode_approx}}
\label{sec:proof_ode_approx}

\begin{proof}
    From \Cref{lemma:poly_approx_smooth_fn}, it suffices to focus on $H$ being a polynomial. We break the time from $\timethree$ to $0$ for which we want to flow the ODE given by \eqref{eqn:ode_approx} into $(n+1)$ small chunks of length $\tau$, i.e., let $\tau = \timethree / (n+1)$. Further, let $A_i = T_{(n-i+1)\tau,(n-i)\tau}$. Then, the time-$\timethree$ flow map can be write as the composition of $n+1$ maps, that is
    \[ T_{\timethree,0} = T_{\tau,0} \circ \cdots \circ T_{\timethree,\timethree-\tau} = A_n \circ \cdots \circ A_0 \]
    Let $\mathcal C_0=T_{0,\phi}(\mathcal C)$. Let $\mathcal C_1,\ldots, \mathcal C_{n+1}$ be a sequence of compact sets such that $A_i(\mathcal C_i)$ is in the interior of $\mathcal C_{i+1}$; by choosing them small enough, we can make $\mathcal C_{n+1}$ an arbitrary compact set containing $\mathcal C$ in its interior. Below, we treat $A_0,\ldots, A_n$ (and their approximations) as maps $\mathcal C_0\to \mathcal C_1\to \cdots \to \mathcal C_{n+1}$, and when we take the $C^1$ norm, we do it on the appropriate compact set. For small enough $\eta$, the $\eta$-discretized maps will stay inside the $\mathcal C_i$.
    
    Let $S_i$ denote the time-$2\pi$ flow map obtained by running the ODE system \eqref{eqn:affine_coupling_ode} from Lemma \ref{lemma:poly_ode_approx} above which approximates the map $T_{(n-i+1)\tau,(n-i)\tau} = A_i$. Further, let $S'_i$ denote the map obtained by discretizing the ODE system as in \eqref{eqn:discretized_affine_coupling_ode} with step size $\eta$. Then, we have that for each $i$, as $\eta \to 0$, 
    \begin{align*}
    \norm{S'_i - A_i}_{C^1}  &\le \|S'_i - S_i + S_i - A_i\|_{C^1} \\
    &\le \|S'_i - S_i\|_{C^1} + \|S_i - A_i\|_{C^1} \\
    &\le O(\eta) + O(\tau^2) \tag{by \Cref{lemma:discretization_error,,lemma:poly_ode_approx}}
    \end{align*}
    We choose $\eta = \tau^2$. Using the definition of $C^1$ norm, this implies that
    \[ \norm{S_i' - A_i} = O(\tau^2) \qquad \qquad  \norm{DS_i' - DA_i} = O(\tau^2), \]
    where $\ve{\cdot}$ denotes $L^\iy$ norm on $\mathcal C_i$; for matrix-valued functions $M(x)$ on $\mathcal C_i$, $\ve{M} =\sup_{x\in \mathcal C_i} \ve{M(x)}_2$, where $\ve{\cdot}_2$ denotes spectral norm.
    Again, using the definition of the $C^1$ norm,
    \begin{align*}
        & \norm{A_n \circ \cdots \circ A_0 - S'_n \circ \cdots \circ S'_0}_{C^1}\\
        \le & \norm{A_n \circ \cdots \circ A_0 - S'_n \circ \cdots \circ S'_0} + \norm{D(A_n \circ \cdots \circ A_0) - D(S'_n \circ \cdots \circ S'_0)}
    \end{align*}
    We will bound each term individually. For the first term, note that
    \begin{align*}
        & \norm{A_n \circ \cdots \circ A_0 - S'_n \circ \cdots \circ S'_0}\\
        &\le \norm{A_n \circ \cdots \circ A_1 \circ A_0 - A_n \circ \cdots \circ A_1 \circ S'_0}
        + \norm{A_n \circ \cdots \circ A_1 \circ S'_0 - S'_n \circ \cdots \circ S'_1 \circ S'_0} \tag{by triangle inequality}\\
        &=  \norm{T_{\phi- \tau, 0} \circ A_0 - T_{\phi-\tau} \circ S_0'}
        + \norm{A_n \circ \cdots \circ A_1 \circ S'_0 - S'_n \circ \cdots \circ S'_1 \circ S'_0}\\
        &\le  \norm{D T_{\timethree - \tau, 0}} \norm{S'_0 - A_0}
        + \norm{A_n \circ \cdots \circ A_1 \circ S'_0 - S'_n \circ \cdots \circ S'_1 \circ S'_0}\\
        &\le  O(\tau^2) + \norm{A_n \circ \cdots \circ A_1 \circ S'_0 - S'_n \circ \cdots \circ S'_1 \circ S'_0} \labeleqn{eqn:l5-c0}
    \end{align*}
    Observe that
    \begin{align*}
        & \sup_x \norm{A_n \circ \cdots \circ A_1 \circ S'_0 (x) - S'_n \circ \cdots \circ S'_1 \circ S'_0(x)}\\
        & = \sup_{y = S'_0(x)} \norm{A_n \circ \cdots \circ A_1 (y) - S'_n \circ \cdots \circ S'_1 (y)}\\ 
         & \le\sup_y \norm{A_n \circ \cdots \circ A_1 (y) - S'_n \circ \cdots \circ S'_1 (y)}\\
         & =\norm{A_n \circ \cdots \circ A_1 (y) - S'_n \circ \cdots \circ S'_1 (y)} \labeleqn{eqn:l5-c0-rec}
    \end{align*}
    Using \eqref{eqn:l5-c0-rec}, \eqref{eqn:l5-c0}, and induction, we get that 
    \begin{equation*}
        \norm{A_n \circ \cdots \circ A_0 - S'_n \circ \cdots \circ S'_0} \le O(n\tau^2)
    \end{equation*}
    
    Now, we bound the derivatives:
    \begin{align*}
        & \norm{D(A_n \circ \cdots \circ A_0) - D(S'_n \circ \cdots \circ S'_0)}\\
        & \le \norm{D(A_n \circ \cdots \circ A_1 \circ A_0) - D(A_n \circ \cdots \circ A_1 \circ S'_0)}\\
        & \quad + \norm{D(A_n \circ \cdots \circ A_1 \circ S'_0) - D(S'_n \circ \cdots \circ S'_1 \circ S'_0)} \tag{by triangle inequality}\\
        & = \sup_x \norm{DT_{\phi - \tau,0}\vert_{A_0(x)} DA_0(x) - DT_{\phi - \tau,0}\vert_{S_0'(x)} DS_0'(x)}\\
        & \quad + \sup_x \norm{D(A_n \circ \cdots \circ A_1) \vert_{S'_0(x)} DS'_0(x) - D(S'_n \circ \cdots \circ S'_1) \vert_{S'_0(x)} DS'_0)(x)} \tag{by chain rule}\\
        & \le \sup_x \norm{DT_{\phi - \tau,0}\vert_{A_0(x)} DA_0(x) - DT_{\phi - \tau,0}\vert_{S_0'(x)} DA_0(x)}\\
        & \quad + \sup_x \norm{DT_{\phi - \tau,0}\vert_{S_0'(x)} DA_0(x) - DT_{\phi - \tau,0}\vert_{S_0'(x)} DS_0'(x)} \tag{by triangle inequality}\\
        & \quad + \norm{DS'_0} \norm{D(A_n \circ \cdots \circ A_1)- D(S'_n \circ \cdots \circ S'_1)} \labeleqn{eqn:l5-c1-rec}\\
        & \le \sup_x \norm{DT_{\phi - \tau,0}\vert_{A_0(x)}- DT_{\phi - \tau,0}\vert_{S_0'(x)}} \norm{DA_0} \\
        & \quad + \sup_x \norm{DT_{\phi - \tau,0}\vert_{S_0'(x)}} \norm{DA_0 - DS_0}\\
        & \quad + \norm{DS'_0} \norm{D(A_n \circ \cdots \circ A_1)- D(S'_n \circ \cdots \circ S'_1)}\\
        & \le \norm{D^2 T_{\timethree - \tau, 0}} \norm{S_0' - A_0'} \norm{DA_0}
        + \norm{D T_{\timethree - \tau,0}} \norm{DA_0 - DS'_0}\\
        & \quad + \norm{DS'_0} \norm{D(A_n \circ \cdots \circ A_1)- D(S'_n \circ \cdots \circ S'_1)}\\
        & \le O(\tau^2) + \brac[\Big]{\norm{DA_0} + O(\tau^2)}\norm{D(A_n \circ \cdots \circ A_1)- D(S'_n \circ \cdots \circ S'_1)} \labeleqn{eqn:l5-c1-ds}
     \end{align*}
     where, for a 3-tensor $\mathcal{T}$, we define $\norm{\mathcal{T}} = \sup_{\norm{u} = 1} \norm{\mathcal{T} u}_2$, where $\ve{\mathcal{T} u}_2$ 
     is the spectral norm of the matrix $\mathcal{T} u$, and we define $\|D^2T_{\timethree-\tau,0}\| = \sup_{x} \norm{D^2T_{\timethree-\tau,0}(x)}$.
     In the last step, we use the fact that $\norm{D T_{s,t}}, \|D^2T_{s,t}\|$ are bounded for all $s,t > 0$; this follows from Lemma~\ref{l:DT-bound} below. (Alternatively, note that $\norm{D T_{s,t}}$ can also be more directly bounded by \Cref{t:uld-cond}.)
     
    In the above, \eqref{eqn:l5-c1-rec} follows using an argument similar to \eqref{eqn:l5-c0-rec}, \eqref{eqn:l5-c1-ds} follows since $\norm{DA_0 - DS'_0} = O(\tau^2)$. Further, differentiating \eqref{e:actual-flow-exp}, we get 
    \[ DA_0 = \id + \tau D_{(x,v)}F(x,v,t) + O(\tau^2) \]
    where $F$ denotes the defining equation of the ODE system in \eqref{eqn:ode_approx}. Therefore, we get
    \[ \norm{DA_0} \le 1 + \tau L + O(\tau^2) \] 
    where $L$ is the upper bound on $\norm{Df}$ over all the appropriate compact sets. Using this bound and induction, we get that
    \[ \norm{D(A_n \circ \cdots \circ A_0) - D(S'_n \circ \cdots \circ S'_0)} \le O(n \tau^2) (1 + \tau L +O(\tau^2))^n = O(n \tau^2 e^{n\tau L}) \]
    for small enough $\tau$. Substituting $n\tau = \timethree$, we get the overall $C^1$ bound of
    \[ \norm{A_n \circ \cdots \circ A_0 - S'_n \circ \cdots \circ S'_0}_{C^1} = O(\timethree \tau e^{\timethree L}). \]
    
    Now, we can choose $\tau$ small enough so that the two maps are $\epsilon_1$-close, finishing the proof.

     Concretely, we can write each $S'_i$ as a composition of affine-coupling maps (which constitute the $f_1, \dots, f_N$ in the lemma statement). In this manner, we can compose these compositions of affine coupling maps over each $\tau$-sized chunk of time so as to get a map which is overall close to the required flow map.
\end{proof}

\begin{lemma}\label{l:DT-bound}
Consider the ODE $\ddd t x(t)=F(x(t),t)$ for $F(x,t)$ that is $C^\ell$ in $x\in \R^d$ and continuous in $t$. Let $\mathcal C$ be a compact set and suppose solutions exist for any $(x(0),v(0))\in \mathcal C$ up to time $T$. Let $T_{s,t}$ be the flow map from time $s$ to time $t$, for any $0\le s,t\le T$. Then for any $0\le r\le \ell$, $D^r T_{s,t}$ is bounded on $T_s(\mathcal C)$.
\end{lemma}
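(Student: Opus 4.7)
The plan is to prove boundedness of $D^r T_{s,t}$ by induction on $r$, using the variational equations obtained by differentiating the ODE through its initial condition and invoking Gr\"onwall's inequality at each step.

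First, I would set up the framework. Since $\mathcal{C}$ is compact and solutions exist up to time $T$, by continuity of the flow the set $\mathcal{K} := \{T_{s,t}(x) : x \in \mathcal{C},\, 0 \le s,t \le T\}$ is contained in a compact subset of $\R^d$. Because $F$ is $C^\ell$ in $x$ and continuous in $t$, all partial derivatives $D_x^j F(y,t)$ for $0 \le j \le \ell$ are uniformly bounded on $\mathcal{K} \times [0,T]$. Call these bounds $M_j$.

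Next I would handle the base case $r=1$. Differentiating $\dd{}{t} T_{s,t}(x) = F(T_{s,t}(x),t)$ in $x$ gives the variational equation
\begin{equation*}
    \ddd{t} D T_{s,t}(x) = D_x F(T_{s,t}(x),t)\, D T_{s,t}(x), \qquad D T_{s,s}(x) = \id.
\end{equation*}
Taking operator norms and applying Gr\"onwall yields $\ve{D T_{s,t}(x)} \le e^{M_1 |t-s|} \le e^{M_1 T}$ uniformly in $x\in\mathcal{C}$ and $s,t \in [0,T]$, which is the case $r=1$.

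For the inductive step (assuming the bound for all $j < r$, with $r \le \ell$), I would differentiate the variational equation $r-1$ more times in $x$. By Fa\`a di Bruno's formula applied to $D_x F(T_{s,t}(x),t)$, one obtains a linear ODE of the schematic form
\begin{equation*}
    \ddd{t} D^r T_{s,t}(x) = D_x F(T_{s,t}(x),t)\, D^r T_{s,t}(x) + R_r(x,t),
\end{equation*}
with $D^r T_{s,s}(x) = 0$, where the remainder $R_r(x,t)$ is a polynomial combination of tensors $D_x^j F(T_{s,t}(x),t)$ for $2 \le j \le r$ contracted with products of $D^{j_1}T_{s,t}(x), \dots, D^{j_k}T_{s,t}(x)$ with $j_1,\dots,j_k < r$ and $\sum j_i = r$. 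By the inductive hypothesis and the bounds $M_j$ on $\mathcal{K}$, $\ve{R_r(x,t)}$ is uniformly bounded on $\mathcal{C} \times [0,T]$ by some constant $C_r$. Gr\"onwall applied to the scalar inequality $\dd{}{t}\ve{D^r T_{s,t}(x)} \le M_1 \ve{D^r T_{s,t}(x)} + C_r$ then gives $\ve{D^r T_{s,t}(x)} \le \frac{C_r}{M_1}(e^{M_1 T} - 1)$, completing the induction.

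The main obstacle is simply the bookkeeping in the Fa\`a di Bruno expansion for $R_r$: one has to verify that every term involves only strictly lower-order derivatives of $T_{s,t}$ (so that the induction closes) together with derivatives of $F$ of order at most $r \le \ell$ (so that the $M_j$ bounds apply). Once that combinatorial structure is in place, the estimates are the standard linear Gr\"onwall argument applied uniformly over the compact set $\mathcal{C}$.
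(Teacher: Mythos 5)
Your proof is correct and follows essentially the same route as the paper's: induction on $r$, differentiating the flow through its initial condition to get a linear variational equation whose inhomogeneous part involves only lower-order derivatives of $T_{s,t}$ and derivatives of $F$ bounded on a compact trajectory set, then Gr\"onwall. The only divergence is the case $s>t$, which the paper treats separately via derivatives of the inverse map together with the lower bound on the singular values of $DT$, whereas your time-reversed Gr\"onwall (reflected in the $e^{M_1|t-s|}$ bound) handles both directions uniformly; both are valid.
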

\begin{proof}
Let $\pl_{i_1\cdots i_r} = \fc{\pl^r}{\pl x_{i_1}\cdots \pl x_{i_r}}$. 
Using the chain rule as in Lemma~\ref{l:ode_derivative}, we find by induction that
\begin{align}\label{l:Dr}
    \ddd t \pl_{i_1\cdots i_r} (T_t(x)) &=\sum_{i=1}^d \pl_i F(x(t),t) \pl_{i_1\cdots i_r} (T_t(x)_i) + G(DF,\ldots, D^{r}F, DT_t,\ldots, D^{r-1}T_t). 
\end{align}
for some polynomial $G$. 
For $r=1$, the differential equation is given by~\Cref{l:ode_derivative}. By a Gr\"onwall argument, a bound on $DF$ gives an upper and lower bound on the singular values of $DT_t$ as in~\eqref{e:dDD}.
We use induction on $r$; for $r>1$, let $v(t)$ be equal to $(\pl_{i_1\cdots i_r} (T_t(x)))_{i_1\cdots i_r}$ written as one large vector. By the chain rule and~\eqref{l:Dr},
\begin{align*}
    \ddd t \ve{v(t)}^2 &\le
    \an{|v(t)|, A|v(t)|+b}
    \le \pa{\si_{\max}(A) + \rc 2 }\ve{v(t)}^2 + \rc 2\ve{b}^2
\end{align*}
for some $A,b$ depending on $DF, \ldots, D^r F, DT_t,\ldots, D^{r-1}T_t$, where $\si_{\max}$ denotes the maximum singular value and $|v|$ denotes entrywise absolute value. Gr\"onwall's inequality (\Cref{l:gronwall}) applied to $\ve{v(t)}^2$ then gives bounds on $\ve{v(t)}^2$ and hence $\ab{\ddd t \pl_{i_1\cdots i_r} (T_t(x))}$. 
This shows $D^rT_{s,t}$ is bounded when $s\le t$ (by starting the flow at time $s$).

When $s>t$, note that the computation of the $r$th derivative of an inverse map involves up-to-$r$ derivatives of the forward map, and inverses of the first derivative. As we have a lower bound on the singular value of $DF$, this implies that $D^rT_{s,t}$ is bounded.
\end{proof}

\section{Technical Machinery}

\subsection{Proof of \Cref{lemma:discretization_error}}
\label{s:discretization_error_proof}
We consider a more general ODE than the specific one in \eqref{eqn:affine_coupling_ode}, of the form
\begin{equation}
\label{eqn:affine_euler}
    \begin{cases}
    \frac{d}{dt}(x(t)) = f(x(t), v(t) ,t) \\
    \frac{d}{dt}(v(t)) = g(x(t), v(t), t)
    \end{cases}
\end{equation}
where $f,g$ are 
$C^2$ functions in $x,v,t$. Given a compact set $\mathcal C$, suppose that the solutions are well-defined for any $(x(0),v(0))\in \mathcal C$ up to time $T$.  
Consider discretizing these ODEs into steps of size $\eta$, as follows:
\begin{equation}
\label{eqn:discretized_affine_euler}
\begin{cases}
    \widetilde T_i^x(X_i) = X_{i+1} = X_i + \eta f(X_i, 
    V_{i+1}
    , t_i) \\
    \widetilde T_i^v(V_i) = V_{i+1} = V_i + \eta g(X_i, V_i, t_i)
\end{cases}
\end{equation}
where $t_i = i \eta$. We call this the alternating Euler update. The actual flow maps are given by 
\begin{equation}\label{e:actual-flow-exp}
\begin{cases}
    T_i^x(x_i) = x_{i+1} = x_i + \eta f(x_i, v_i, t_i) +
    \int_{i\eta}^{(i+1)\eta} \int_{i\eta}^t x''(s)\,ds\,dt\\
    T_i^v(v_i) = v_{i+1} = v_i + \eta g(x_i, v_i, t_i) +
    \int_{i\eta}^{(i+1)\eta} \int_{i\eta}^t v''(s)\,ds\,dt
\end{cases}
\end{equation}
We bound the local truncation error. 
This consists of two parts. First, we have the integral terms in~\eqref{e:actual-flow-exp}:
\begin{align}\label{e:lte1}
    \ve{\coltwo{\int_{i\eta}^{(i+1)\eta} \int_{i\eta}^t x''(s)\,ds\,dt}{\int_{i\eta}^{(i+1)\eta} \int_{i\eta}^t v''(s)\,ds\,dt}}
    &\le 
    \rc 2\eta^2   \max_{s\in [0,t_i]} \ve{\coltwo{x''(s)}{v''(s)}}
    .
\end{align}
Second we bound the error from using $\widetilde v_{i+1}:=v_i + \eta g(x_i,v_i,t_i)$ instead of $v_i$ in the $x$ update,
\begin{align}\nonumber
    \ve{\eta[f(x_i, v_i + \eta g(x_i,v_i,t_i), t_i) - f(x_i,v_i,t_i)]}
    &\le 
    \ve{\eta\int_0^\eta 
    D_vf
    (x_i,v_i+sg(x_i,v_i,t_i),t_i) g(x_i,v_i,t_i)\,ds} \\
    &\le \eta^2 \max_{\mathcal C'} 
    \ve{D_vf} 
    \max_{\mathcal C'} \ve{g} .
    \label{e:lte2}
\end{align}
where 
$D_v f(x,v,t)$ denotes the Jacobian in the $v$ variables (rather than the directional derivative), and where we define
\begin{align*}
\mathcal C' 
:&=
\{(x,v+sg(x,v,t),t): (x,v) = T_t(x_0,v_0) \text{ for some }(x_0,v_0)\in \mathcal C, 0\le s\le T\},
\end{align*}
which ensures that it contains $(x_i,v_i+sg(x_i,v_i,t_i),t_i)$ and $(x_i,v_i,t_i)$. The local truncation error is then at most the sum of~\eqref{e:lte1} and \eqref{e:lte2}.

Supposing that $\coltwo fg$ is $L$-Lipschitz in $(x,v)\in \R^{2d}$ for each $t$, we obtain by a standard argument (similar to the proof for the usual Euler's method, see e.g.,~\cite[\S 16.2]{ascher2011first}) that the global error at any step is bounded by 
\begin{align}\label{e:alt-euler-bound}
    \ve{\coltwo{\widetilde x_i}{\widetilde v_i} - 
    \coltwo{x_i}{v_i}
    }
    &\le \eta \cdot \fc{e^{Lt_i}-1}L 
    \pa{
    \max_{\mathcal C'} \ve{D_v f}
    \max_{\mathcal C'} \ve{g} + \rc 2  \max_{s\in [0,t_i]} \ve{\coltwo{x''(s)}{v''(s)}}
    }.
\end{align}
In the case when $\coltwo fg$ is not globally Lipschitz, we show that we can restrict the argument to a compact set on which it is Lipschitz.
Let $\mathcal C''$ be a compact set which contains $\{(x,v,t):(x,v)=T_t(x_0,v_0)\text{ for some }(x_0,v_0)\in \mathcal C, 0\le s\le T\}$ in its interior. Apply the argument to $\hat{f}$ and $\hat g$ which are defined to be equal to $f, g$ on $\mathcal C''$, and are globally Lipschitz. Then the error bound applies to the system defined by $\hat f, \hat g$. Hence, for small enough step size, the trajectory of the discretization stays inside $\mathcal C''$, and is the same as that for the system defined by $f,g$. Then~\eqref{e:alt-euler-bound} holds for small enough $\eta$ and $L$ equal to the Lipschitz constant in $(x,v)$ on $\mathcal C''$.



To get a bound in $C^1$ topology, we need to bound the derivatives of these maps as well. Let $T_{s,t}(x,v)$ 
denote the flow map of system \eqref{eqn:affine_euler}. Let $h(x,v,t) = (f(x,v,t),g(x,v,t))$. Now, consider the system of ODEs
\begin{equation}
    \label{eqn:combined_odes}
    \begin{cases}
        \frac{d}{dt}(x(t))  = f(x(t),v(t),t)\\
        \frac{d}{dt}(v(t))  = g(x(t),v(t),t)\\
        \frac{d}{dt}(\alpha(t))  = D_{(x,v)}f (x(t),v(t),t) \coltwo{\alpha(t)}{\beta(t)}\\
        \frac{d}{dt}(\beta(t))  = D_{(x,v)}g (x(t),v(t),t)\coltwo{\alpha(t)}{\beta(t)}
    \end{cases}
\end{equation}
where $\alpha(t), \beta(t)$ are $d \times 2d$ matrices. Note that setting $\coltwo{\alpha(0)}{\beta(0)} = \id_{2d}$ and $\coltwo{\alpha(t)}{\beta(t)} = D_{(x,v)} T_{0,t} (x(0),v(0))$ satisfies \eqref{eqn:combined_odes} by \Cref{l:ode_derivative}.

Now we claim that applying the alternating Euler update to $(x,\alpha), (v,\beta)$, the resulting $(\alpha_i,\beta_i)$ is exactly the Jacobian of the flow map that arises from alternating Euler applied to $x,v$. This means that we can bound the errors for $\alpha,\beta$ using the bound for the alternating Euler method.

The claim follows from noting that the alternating Euler update on $\alpha$, $\beta$ is
\begin{align*}
    \alpha_{i+1} &= 
    (\id_d,O) 
    + 
    D_{(x,v)}f(x_i,v_{i+1},t_i) \coltwo{\al_i}{\be_{i+1}}\\
    \beta_{i+1} &= 
    (O,\id_d) 
    + 
    D_{(x,v)}f(x_i,v_{i},t_i) \coltwo{\al_i}{\be_i},
\end{align*}
which is the same recurrence that is obtained from differentiating $X_{i+1},V_{i+1}$ in~\eqref{eqn:discretized_affine_euler} with respect to $X_0,V_0$, and using the chain rule.

Thus we can apply~\eqref{e:alt-euler-bound} to get a bound for the Jacobians of the flow map. The constants in the $O(\eta)$ bound depend on up to the second derivatives of the $x,v,\alpha,\beta$ for the true solution, Lipschitz constants for $\coltwo fg$, $D\coltwo fg$ (on a suitable compact set), and bounds for $D_vf, g, D_vD_{(x,v)}f, D_{(x,v)}g$  (on a suitable compact set).

\subsection{Wasserstein bounds}
\label{s:wasserstein_bounds}

\begin{lemma}
\label{l:wasserstein_bound_1}
Given two distributions $p,q$ and a function $g$ with Lipschitz constant $L = \Lip(g)$,
\[ W_1(g_\# p, g_\# q) \le L W_1(p,q) \]
\end{lemma}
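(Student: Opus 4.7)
The plan is to work directly from the coupling definition of $W_1$ given in the preliminaries. Fix any coupling $\gamma \in \Gamma(p,q)$, i.e., any measure on $\R^d \times \R^d$ whose marginals are $p$ and $q$. I would consider the map $G: \R^d \times \R^d \to \R^d \times \R^d$ defined by $G(x,y) = (g(x), g(y))$, and form the pushforward measure $\gamma' := G_\# \gamma$. The first order of business is to verify that $\gamma' \in \Gamma(g_\# p, g_\# q)$: for any measurable $A \subseteq \R^d$, $\gamma'(A \times \R^d) = \gamma(G^{-1}(A \times \R^d)) = \gamma(g^{-1}(A) \times \R^d) = p(g^{-1}(A)) = (g_\# p)(A)$, and symmetrically for the second marginal.

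Next I would bound the transport cost under $\gamma'$ using the change-of-variables formula for pushforwards and the Lipschitz hypothesis:
\[
\int_{\R^d \times \R^d} \|u - v\|_2 \, d\gamma'(u,v) = \int_{\R^d \times \R^d} \|g(x) - g(y)\|_2 \, d\gamma(x,y) \le L \int_{\R^d \times \R^d} \|x-y\|_2 \, d\gamma(x,y).
\]
Since $\gamma'$ is an admissible coupling for $(g_\# p, g_\# q)$, the left-hand side is an upper bound on $W_1(g_\# p, g_\# q)$. Taking the infimum over $\gamma \in \Gamma(p,q)$ on the right-hand side then yields $W_1(g_\# p, g_\# q) \le L \, W_1(p,q)$, which is the desired conclusion.

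There is no real obstacle here; the only points to be mildly careful about are that $g$ is measurable (which follows from it being Lipschitz, hence continuous) so that the pushforward $G_\# \gamma$ is well-defined, and that the infimum in the definition of $W_1(p,q)$ is attained or at least approachable, so that we do not need to worry about the order of operations when passing to the infimum. If desired, an even slicker proof goes via Kantorovich–Rubinstein duality: $W_1(g_\# p, g_\# q) = \sup_{\Lip(\varphi) \le 1} \int \varphi \, d(g_\# p - g_\# q) = \sup_{\Lip(\varphi) \le 1} \int (\varphi \circ g)\, d(p - q)$, and since $\Lip(\varphi \circ g) \le L$, this supremum is at most $L \cdot W_1(p,q)$. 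I would present the coupling-based argument since it is self-contained and uses only the definition already recorded in the preliminaries.
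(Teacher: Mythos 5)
Your proposal is correct and follows essentially the same route as the paper: both push a (near-)optimal coupling of $(p,q)$ forward through $(x,y)\mapsto(g(x),g(y))$ and bound the transport cost using the Lipschitz property, differing only in whether the infimum is handled via an $\epsilon$-approximate coupling (as in the paper) or by taking the infimum at the end. The additional care you take in verifying the marginals of the pushforward coupling is a welcome bit of rigor that the paper leaves implicit.
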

\begin{proof}
    Let $\epsilon >0$. Then there exists a coupling $(x,t) \sim \gamma$ such that 
    \[ \int \norm{x-y}_2 d\gamma(x,y) \le W_1(p,q) + \epsilon \]
    Consider the coupling $(x',y')$ given by $(x',y') = (g(x),g(y))$ where $(x,y) \sim \gamma$. Then
    \begin{align*}
        W_1(g_\# p, g_\#q)
        & \le \int \norm{g(x) - g(y)}_2 \, d\gamma(x,y) \\
        & \le \Lip(g) \int \norm{x - y} \, d\gamma(x,y)\\
        & \le L W_1(p,q) + L \epsilon.
    \end{align*}
    Since this holds for all $\epsilon > 0$, we get that
    \[ W_1(g_\# p, g_\#q) \le LW_1(p,q) \]
\end{proof}

\begin{lemma}
\label{l:wasserstein_bound_2}
Given two functions $f,g:\R^d \to \R^d$ that are uniformly $\epsilon_1$-close over a compact set $\mathcal{C}$ in $C^1$ topology, and a probability distribution $p$,
\[ W_1(f_\# (p|_\mathcal{C}), g_\# (p|_\mathcal{C})) \le \epsilon_1 \]
\end{lemma}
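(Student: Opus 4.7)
My plan is to construct an explicit coupling between $f_\#(p|_\mathcal{C})$ and $g_\#(p|_\mathcal{C})$ induced by the common source distribution $p|_\mathcal{C}$, and then bound the transport cost directly by the sup-norm distance between $f$ and $g$ on $\mathcal{C}$.

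Concretely, the first step is to sample $x \sim p|_\mathcal{C}$ and set $(X,Y) = (f(x), g(x))$. Let $\gamma$ denote the law of $(X,Y)$ on $\R^d \times \R^d$. By the definition of pushforward, the marginals of $\gamma$ are exactly $f_\#(p|_\mathcal{C})$ and $g_\#(p|_\mathcal{C})$, so $\gamma \in \Gamma(f_\#(p|_\mathcal{C}), g_\#(p|_\mathcal{C}))$ is a valid coupling.

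Next I would apply the definition of Wasserstein-$1$ distance to this coupling:
\[
W_1(f_\#(p|_\mathcal{C}), g_\#(p|_\mathcal{C})) \le \int_{\R^d \times \R^d} \|X - Y\|_2 \, d\gamma(X,Y) = \int_\mathcal{C} \|f(x) - g(x)\|_2 \, d(p|_\mathcal{C})(x).
\]
Because $p|_\mathcal{C}$ is supported on $\mathcal{C}$ and $f,g$ are uniformly $\epsilon_1$-close over $\mathcal{C}$ in the $C^1$ topology, the $C^1$ bound $\sup_{x \in \mathcal{C}}(\|f(x) - g(x)\|_2 + \|Df(x) - Dg(x)\|) \le \epsilon_1$ in particular implies $\|f(x) - g(x)\|_2 \le \epsilon_1$ pointwise on $\mathcal{C}$. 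Integrating against the probability measure $p|_\mathcal{C}$ yields the bound $\epsilon_1$, completing the proof.

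There is no real obstacle here; the only subtlety is simply noting that the $C^1$ closeness assumption certainly subsumes $C^0$ closeness, and that the natural pushforward coupling is admissible. The same argument would work verbatim under just a $C^0$ closeness hypothesis, but the lemma is stated with the stronger hypothesis to match the way it is invoked in the main proof (where Lemma~\ref{l:ode_approx} produces $C^1$-close approximants).
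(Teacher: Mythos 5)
Your proposal is correct and is essentially identical to the paper's proof: both construct the coupling induced by pushing the same sample $z \sim p|_\mathcal{C}$ through $f$ and $g$, and bound the transport cost by the uniform ($C^0$, implied by $C^1$) closeness of $f$ and $g$ on $\mathcal{C}$. No further comment is needed.
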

\begin{proof}
    Consider the coupling $\gamma$, where a sample $(x,y) \sim \gamma$ is generated as follows: first, we sample $z \sim p|_\mathcal{C}$, and then compute $x=f(z)$, $y=g(z)$. By definition of the pushforward, the marginals of $x$ and $y$ are $f_\#(p|_\mathcal{C})$ and $g_\#(p|_\mathcal{C})$ respectively. However, we are given that for this $\gamma$, $\|x-y\| \le \epsilon_1$ uniformly. Thus, we can conclude that
\begin{align*}
    W_1(f_\#(p|_\mathcal{C}), {g}_\#(p|_\mathcal{C})) &\le \int_{\R^d \times \R^d}\|x-y\|_2 \,d\gamma(x,y) \\
    &\le \int_{\R^d \times \R^d}\epsilon_1\, d\gamma(x,y)
    =\epsilon_1
\end{align*}
\end{proof}

\subsection{Proof of \Cref{l:conditional_wasserstein}}
\label{s:conditional_wasserstein_proof}
\begin{proof}
    Fix any $R > 0$, and set $\mathcal{C} = B(0, R)$. Consider the coupling $(X,Y) \sim \gamma$, where a sample $(X,Y)$ is generated as follows: we first sample $X \sim p^* = \mathcal{N}(0, \id_{2d})$. If $X \in B(0, R)$, then we set $Y=X$. Else, we draw $Y$ from $p^*|_\mathcal{C}$. Clearly, the marginal of $\gamma$ on $X$ is $p$. Furthermore, since $p^*$ and $p^*|_\mathcal{C}$ are proportional within $\mathcal{C}$, the marginal of $\gamma$ on $Y$ is $p^*|_\mathcal{C}$. Then, we have that
    \begin{align*}
        W_1(p^*, p^*|_\mathcal{C}) &\le \int_{\R^{2d} \times \mathcal C}  \|x-y\|d\gamma \\
        &
        = \cancel{\int_{\mathcal{C} \times \mathcal{C}} \|x-y\|d\gamma} + 
        \int_{\mathbb{R}^{2d}\setminus\mathcal{C} \times \mathcal{C}}\|x-y\|d\gamma 
        \\
        &
        = \int_{\mathbb{R}^{2d}\setminus\mathcal{C} \times \mathcal{C}}\|x-y\|d\gamma \\
        &
        \le \int_{\mathbb{R}^{2d}\setminus\mathcal{C} \times \mathcal{C}}(\|x\| + \|y\|)d\gamma \\
        &
        \le \int_{\mathbb{R}^{2d}\setminus\mathcal{C} \times \mathcal{C}}(\|x\| + R)d\gamma \\
        &
        \le \int_{\mathbb{R}^{2d}\setminus\mathcal{C} \times \mathcal{C}}(\|x\| + R)d\gamma \\
        &
        = \int_{\R^{2d} \setminus \mathcal{C}} (\|x\| + R)dp^* \\
        &
        \le \int_{\R^{2d} \setminus \mathcal{C}} 2\|x\|dp^* = \frac{2}{\sqrt{2\pi}}\int_{\R^{2d} \setminus \mathcal{C}} \|x\|e^{-\frac{\|x\|^2}{2}}dx
    \end{align*}
    Now, note that $\int_{\R^{2d} } \|x\|e^{-\frac{\|x\|^2}{2}}dx < \infty$.
    Hence, by the Dominated Convergence Theorem, \[\lim_{R\to \iy}\int_{\R^{2d} \setminus B(0,R)} \|x\|e^{-\frac{\|x\|^2}{2}}dx = 0.\]
    Thus, given any $\delta >0$, we can choose $R$ large enough so that the integral above is smaller than $\delta$, which concludes the proof.
\end{proof}

\subsection{Derivatives of flow maps}
\label{s:derivative_flow_map}
We state and prove a technical lemma about the ODE that the derivative of a flow map satisfies.
\begin{lemma}
    \label{l:ode_derivative}
    Suppose $x_t = x(t)$ satisfies the ODE
    \[ \dot{x} = F(x,t) \]
    with flow map $T(x,t) : \R^n \times \R \to \R^n$. Suppose $\alpha(t)$ be the derivative of the map $x \mapsto T(x,t)$ at $x_0$, then $\alpha(t)$ satisfies
    \[ \dot{\alpha} = DF(x_t,t) \alpha \]
    with $\alpha(0) = \id$.
\end{lemma}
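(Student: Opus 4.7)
The plan is to derive the variational equation for the flow map by differentiating the ODE with respect to the initial condition. Concretely, by definition of the flow map, $T(x_0, t)$ satisfies
\[ \tfrac{d}{dt} T(x_0, t) = F(T(x_0, t), t), \qquad T(x_0, 0) = x_0. \]
I would view both sides as functions of $x_0$ and differentiate with respect to $x_0$. Assuming sufficient smoothness of $F$ (which lets us swap $\partial_{x_0}$ with $\partial_t$), the left-hand side becomes $\tfrac{d}{dt} D_{x_0} T(x_0, t) = \dot\alpha(t)$, while the chain rule applied to the right-hand side gives $D_{x_0}\,F(T(x_0,t),t) = DF(T(x_0,t),t)\cdot D_{x_0}T(x_0,t) = DF(x_t,t)\,\alpha(t)$. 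Combining these yields $\dot\alpha(t) = DF(x_t,t)\,\alpha(t)$, which is the claimed variational equation.

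The initial condition is immediate: since $T(x_0, 0) = x_0$ for every $x_0$, we have $D_{x_0} T(x_0, 0) = \id$, i.e., $\alpha(0) = \id$.

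The only technical point is justifying the interchange of $\partial_t$ and $\partial_{x_0}$. This is standard whenever $F$ is $C^1$ in $x$ and continuous in $t$ (so that the flow is jointly $C^1$ in $(x_0, t)$), which follows from the classical smooth dependence theorem for ODEs; since the statement is invoked elsewhere in the paper under $C^\infty$ or polynomial hypotheses on the vector field, this regularity is available. Thus I expect no real obstacle; the proof is essentially a one-line application of the chain rule combined with the smooth dependence of solutions on initial data.
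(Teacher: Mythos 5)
Your proof is correct and is essentially the same argument as the paper's: differentiate the flow equation with respect to the initial condition and apply the chain rule to obtain the variational equation, with $\alpha(0)=\id$ coming from $T(x_0,0)=x_0$. The paper works with the integrated form $T_t(x_0)=x_0+\int_0^t F(x_s,s)\,ds$ and differentiates under the integral sign, while you differentiate the ODE directly and justify swapping $\partial_t$ and $\partial_{x_0}$ via smooth dependence on initial data; these are cosmetically different presentations of the same computation.
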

\begin{proof}
    Let $T_t(x) = T(x,t)$. Then $T_t$ satisfies
    \[ T_t(x_0) = \int_0^t F(x_s,s)\, ds. \]
	Differentiating, we get
	\begin{align*}
		\alpha(t) = DT_t(x_0)
		& = \int_0^t D(F(x_s,s)) \,ds \\
		& = \int_0^t DF(x_s,s) DT_s(x_0) \,ds \tag*{by chain rule}\\
		& = \int_0^t DF(x_s,s) \alpha(s)\, ds.
	\end{align*}
	Now, looking at the derivative with respect to $t$, we get
	\[ \dot{\alpha} = DF(x_t,t) \alpha, \]
	which is the required result.
\end{proof}

\subsection{Solving Perturbed ODEs}
\label{s:perturbation_analysis}
In this section, we state a result about finding approximate solutions of perturbed differential equations. Consider the ODE having the following general form:
\[ \dot{x} = Ax + \epsilon g(x,t) \]
The reason we are concerned with this ODE is that the ODE given by \Cref{eqn:affine_coupling_ode} has precisely this form, namely with $x \equiv \coltwo xv$, $A \equiv \begin{bmatrix} 0 & \id_d \\ -\text{diag}(\Omega^2) & 0 \end{bmatrix}$ and $\epsilon g(x,t)\equiv -\tau\begin{bmatrix} F(v,t) \odot x \\ J(x,t) + G(x,t) \odot v\end{bmatrix}$. 

Let $T^x: \R \times \R^n \to \R^n$ be the time $t$ flow map for this ODE. We will find a flow map $T^y: \R \times \R^n \to \R^n$ such that the maps $T_t^x$ defined by $T_t^x(x) = T^x(t,x)$ and the map $T_t^y$ defined by $T_t^y(y) = T^y(t,y)$ are uniformly $\epsilon$-close over $\mathcal{C}$ in $C^r$ topology for all $0 \le t \le 2 \pi$. That is, 
\[ \sup_{x} \qquad \norm{T_t^x(x) - T_t^y(x)} + \norm{DT_t^x(x) - DT_t^y(x)} + \dots + \norm{D^rT_t^x(x) - D^rT_t^y(x)}\]
is small, for all $t \in [0,2\pi]$. Here $D^r$ denotes the $r$-th derivative, and the norms are defined inductively as follows: for a $r$-tensor $\mathcal T$, we let $\ve{\mathcal T}=\sup_{\ve{u}=1}\ve{\mathcal Tu}$; here $\mathcal Tu$ is a $(r-1)$-tensor. (The choice of norm is not important; we choose this for convenience.)

\begin{lemma}
\label{l:perturb_trajectory_bound}
    Consider the ODE
    \begin{align}\label{e:ode-perturb-exist}
        \ddd t x(t) &= F(x(t),t) + \ep G(x(t),t)
    \end{align}
    where $x:[0,t_{\max}]\to \R^n$, $F,G:\R^n\times \R\to \R^n$, and 
    $F(x,t)$, $G(x,t)$ are $C^1$, and $F$ is $L$-Lipschitz. Let $\mathcal C$ be a compact set, and suppose that for all $x_0\in \mathcal C$, solutions to~\eqref{e:ode-perturb-exist} with $x(0)=x_0$ exist for $0\le t\le t_{\max}$ and $\ep=0$.
    Then there exists $\ep_0$ such that solutions to~\eqref{e:ode-perturb-exist} with $x(0)=x_0$ exist for $0\le t\le t_{\max}$ and $0\le \ep<\ep_0$.
    
    Moreover, letting $x^{(\ep)}(t)$ be the solution with given $\ep$, we have that as $\ep\to 0$, $\ve{x^{(\ep)}(t)-x^{(0)}(t)}=O(\ep)$, where the constants in the $O(\cdot)$ depend only on 
    $L$ and $\max_{0\le t\le t_{\max}, x_0\in \mathcal C} \ve{G(x^{(0)}(t),t)}$ (the maximum of $G$ on the $\ep=0$ trajectories).
\end{lemma}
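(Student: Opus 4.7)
The plan is to carry out a standard continuation/bootstrap argument combined with Gr\"onwall's inequality. First I would set up a compact neighborhood that the perturbed trajectories cannot escape for small $\ep$. Concretely, let
\[ K = \{ x^{(0)}(t;x_0) : x_0 \in \mathcal{C},\ 0 \le t \le t_{\max}\}, \]
which is the continuous image of the compact set $\mathcal{C}\times [0,t_{\max}]$, hence compact. Fix some $r > 0$ and let $K' = K + \ol{B(0,r)}$ denote its closed $r$-neighborhood; this is also compact, and by continuity of $G$ we have $M := \sup_{(y,t) \in K'\times[0,t_{\max}]} \ve{G(y,t)} < \iy$. Since $F$ is globally $L$-Lipschitz and $G$ is $C^1$ (hence locally Lipschitz), the perturbed ODE has a unique local solution $x^{(\ep)}$ from $x_0$ on some maximal interval $[0,T_\ep)$.

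Next I would set up the bootstrap. Define
\[ T^* = \sup\{ t \in [0,\min(T_\ep,t_{\max})] : x^{(\ep)}(s) \in K' \text{ for all } s \in [0,t]\}, \]
which is positive since $x^{(\ep)}(0) = x^{(0)}(0)\in K$. On $[0,T^*]$, writing $e(t) = x^{(\ep)}(t) - x^{(0)}(t)$ and using the ODEs they satisfy,
\[ \ddd t e(t) = F(x^{(\ep)}(t),t) - F(x^{(0)}(t),t) + \ep G(x^{(\ep)}(t),t), \]
so the Lipschitz bound on $F$ and the bound on $G$ over $K'$ yield
\[ \ve{\ddd t e(t)} \le L\ve{e(t)} + \ep M. \]
Gr\"onwall's inequality (e.g.\ Lemma~\ref{l:gronwall}) then gives
\[ \ve{e(t)} \le \tfrac{\ep M}{L}(e^{Lt}-1) \le \tfrac{\ep M}{L}(e^{Lt_{\max}}-1) \quad \text{for all } t \in [0,T^*]. \]

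Finally I would close the bootstrap and the existence claim. Choose $\ep_0$ small enough that $\tfrac{\ep_0 M}{L}(e^{Lt_{\max}}-1) < r/2$. Then on $[0,T^*]$ we have $\ve{e(t)} < r/2$, so $x^{(\ep)}(t)$ lies in the interior of $K'$. If $T^* < \min(T_\ep,t_{\max})$, continuity of $x^{(\ep)}$ would keep it in $K'$ slightly beyond $T^*$, contradicting the definition of $T^*$. Thus $T^* = \min(T_\ep,t_{\max})$; and if $T_\ep \le t_{\max}$, the trajectory would stay in the compact set $K'$ up to $T_\ep$, which by the standard ODE extension theorem contradicts maximality of $T_\ep$. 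Hence $T_\ep > t_{\max}$ and $T^* = t_{\max}$, so the solution exists on $[0,t_{\max}]$ and $\ve{x^{(\ep)}(t) - x^{(0)}(t)} \le \tfrac{\ep M}{L}(e^{Lt_{\max}}-1) = O(\ep)$ with constants depending only on $L$, $t_{\max}$, and $M$ (which is controlled by $\max_{0 \le t\le t_{\max},\, x_0 \in \mathcal{C}}\ve{G(x^{(0)}(t),t)}$ up to the neighborhood enlargement).

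The only subtle point is interleaving the existence/extension claim with the Gr\"onwall bound; the bootstrap resolves this cleanly by using the bound to keep the trajectory inside the compact neighborhood $K'$, which in turn allows the bound to hold. All other steps are routine.
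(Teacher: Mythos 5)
Your proof is correct and follows essentially the same strategy as the paper's: confine the perturbed trajectory to a compact neighborhood of the unperturbed flow, bound $G$ there, apply Gr\"onwall to get the $O(\ep)$ estimate, and use that estimate to close the continuation argument for existence up to time $t_{\max}$. The only cosmetic difference is that the paper first clamps $F,G$ to globally bounded functions and invokes the parametric existence/extensibility theorems before arguing the clamped and original systems agree on the neighborhood, whereas your bootstrap via the maximal interval achieves the same thing more directly.
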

\begin{proof}  
    Let $T^\epsilon(t,x_0)$ be the flow map of \eqref{e:ode-perturb-exist}. Let $\mathcal{K} = T^0(\mathcal{C} \times [0,t_{\max}])$ be the image of $\mathcal{C} \times [0,t_{\max}]$ under the flow map $T^0$. Since $F$ is $C^1$, $T^0$ is $C^1$, which implies that $\mathcal{K}$ is bounded. Fix some $\epsilon_2 > 0$. Let $B(\mathcal{K},r)$ denote the set
    \[ B(\mathcal{K},r)  = \set{(x,t) \in \R^n \times [0,t_{\max}] : d(\mathcal{K}, x) \le r} \]
    Let $\mathcal{K}_2 = B(\mathcal{K},\epsilon_2)$. Note that since $\mathcal{K}$ is compact, so is $\mathcal{K}_2$. Let 
    \[ M = \max \set*{\sup_{(x,t) \in \mathcal{K}_2 \times [0,t_{\max}]} \norm{F(x,t)}, \sup_{(x,t) \in \mathcal{K}_2 \times [0,t_{\max}]} \norm{G(x,t)}} \]
    $M$ is finite since $\mathcal{K}_2$ is compact and $F,G$ are $C^1$.
    
    Let $h: \R \to \R$ be a $1$-Lipschitz $C^1$ function such that
    \begin{align*}
    h(x) &= 
    x  \text{ if }|x| \le M\\
    |h(x)| &\le 2M \text{ for all }x.
    \end{align*}
    Let $h_n:\R^n \to \R^n$ be defined as $h_n(x) = \frac{x}{\norm{x}} h(\norm{x})$. Then $h_n(x)$ is also $C^1$ and is the identity function on $B(0,M)$. Let $F_1 = h_n \circ F$ and let $G_1 = h_n \circ F$. Then $F_1, G_1$ are $C^1$ functions such that $\norm{F_1}, \norm{G_1} \le 2M$. Further, $F_1$ is $L$-Lipschitz. Now, we look at the ODE
    \begin{align}\label{e:ode-perturb-clamped}
        \ddd t x(t) &= F_1(x(t),t) + \ep G_1(x(t),t)
    \end{align}

    Since $F_1,G_1$ are $C^1$, note that the function $H_1(x,\epsilon,t) = F_1(x,t) + \epsilon G_1(x,t)$ is $C^1$ in $x,t,\epsilon$. Therefore, using the existence theorem for parametric ODEs (Theorem 1.2, \cite{chicone2006ordinary}), there is a $\epsilon_1, t_1 > 0$ such that solutions $x_1^{(\epsilon)}(t)$ to \eqref{e:ode-perturb-clamped} exist for all $x_0 \in \mathcal{C}, \epsilon < \epsilon_1$ and $t < t_1$. Further, the extensibility result for the ODEs (Theorem 1.4, \cite{chicone2006ordinary}) states that if $t_1$ is largest such value for which such solutions exist, then there exists a $x_0 \in \mathcal{C}$ and $\epsilon < \epsilon_1$ such that $\lim_{t \to t_1} \norm{x_1^{(\epsilon)}(t)} = \infty$. 
    
    Now, we will bound $\norm{x_1^{(\epsilon)} - x_1^{(0)} }$ for $t < t_1$. Define $\alpha = x_1^{(0)} - x_1^{(\epsilon)}$. Then $\alpha(t)$ satisfies
    \[ \ddd{t} \alpha(t) = F_1(x_1^{(0)}(t), t) - F_1(x_1^{(\epsilon)}(t),t) - \epsilon G_1(x_1^{(\epsilon)}(t),t) \]
    
    Therefore,
    \begin{align*}
        \ddd{t} \norm{\alpha(t)}^2 
        & \le 2 \norm{\alpha(t)} \norm*{\ddd{t} \alpha(t)}\\
        & \le 2 \norm{\alpha(t)} \norm{F_1(x_1^{(0)}(t), t) - F_1(x_1^{(\epsilon)}(t),t) - \epsilon G_1(x_1^{(\epsilon)}(t),t)}\\
        & \le 2 \norm{\alpha(t)} \brac{L \norm{\alpha(t)} + 2 \epsilon M} \\
        & \le 2 L \norm{\alpha(t)}^2 + 4\epsilon M \norm{\alpha(t)}\\
        \implies \ddd{t} \norm{\alpha(t)} & \le
        \rc 2 \ve{\al(t)}^{-1} \ddd{t} \norm{\alpha(t)}^2  \le 
        L \norm{\alpha(t)} + 2 \epsilon M
     \end{align*}

    Now, Gr\"onwall's inequality (\Cref{l:gronwall}) gives us the bound 
    \begin{equation}
        \label{e:ode-perturb-bound}
        \norm{\alpha(t)} \le 2 \epsilon t M e^{Lt} \le 2 \epsilon t_{\max} M e^{Lt_{\max}} = O(\epsilon)
    \end{equation}
    
    Since $t_{\max}, L, M$ are fixed, we can choose $\epsilon_0$ such that $\epsilon_0 < \epsilon_1$ and $2 \epsilon_0 t_{\max} M e^{L t_{\max}} < \epsilon_2$, which ensure that for all $x_0 \in \mathcal{C}, \epsilon < \epsilon_0$ and $t < \min (t_1, t_{\max})$, the point $x_1^{(\epsilon)}(t)$ is in the interior of $\mathcal{K}_2$. Therefore, if $t_1 \le t_{\max}$ then $\lim_{t \to t_1} \norm{x_1^{(\epsilon)}(t)} \in \mathcal{K}_2$, which contradicts the extensibility result. Thus, $t_1 > t_{\max}$, and hence flow maps for \eqref{e:ode-perturb-clamped} exists for all $0 \le \epsilon \le \epsilon_0$ and $0 \le t \le t_{\max}$.
    
    Now, we end with the remark that since $F_1 = F$ and $G_1 = G$ in $\mathcal{K}_2$, the flow map of \eqref{e:ode-perturb-clamped} is a flow map for \eqref{e:ode-perturb-exist} inside $\mathcal{K}_2$, and therefore, solutions to \eqref{e:ode-perturb-exist} exist for all $x_0 \in \mathcal{C}, 0 \le \epsilon \le \epsilon_0$ and $0 \le t \le t_{\max}$.
    
    Lastly, we will comment on value of $M$. Let $G$ be $L_1$-Lipschitz on $\mathcal{K}_2$, and let 
    \[ M' = \max_{0 \le t \le t_{\max}, x_0 \in \mathcal{C}} \norm{G(x^{(0)}(t), t)}\]
    Then $M \le M' + \epsilon_0 L_1$. Therefore, we can just choose $\epsilon_0$ small enough so that $M \le 2M' + 1$, which enforces the constants in $O(\cdot)$ notation to depend only on $L, M'$ and $t_{\max}$.

    
\end{proof}

\ifdefined\PROOF
\begin{lemma}
	\label{l:linear_ode_upper_bound}
	Any solution $w(t)$ of $\dot{w} \le aw + b$ for constants $a,b$ satisfies
	\[ w(t) \le \frac{b}{a} \brac{e^{at} - 1} + w(0)e^{at}. \]
\end{lemma}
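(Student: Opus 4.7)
The plan is to prove this via the classical integrating-factor trick for linear first-order differential inequalities. The idea is to rewrite the inequality $\dot w \le aw + b$ in a form where the left-hand side is a total derivative, which can then be integrated directly.

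Concretely, I would first multiply both sides of $\dot w(t) - aw(t) \le b$ by the integrating factor $e^{-at}$. Since $e^{-at} > 0$, the inequality is preserved, and the left-hand side becomes a total derivative:
\[ \frac{d}{dt}\bigl(w(t) e^{-at}\bigr) = e^{-at}\bigl(\dot w(t) - a w(t)\bigr) \le b\, e^{-at}. \]
Then I would integrate both sides from $0$ to $t$, using the fundamental theorem of calculus on the left and an elementary antiderivative on the right (assuming $a \ne 0$). This yields
\[ w(t) e^{-at} - w(0) \le \int_0^t b\, e^{-as}\, ds = \frac{b}{a}\bigl(1 - e^{-at}\bigr). \]
Multiplying through by $e^{at}$ (again positive, so the inequality is preserved) rearranges to the desired bound
\[ w(t) \le \frac{b}{a}\bigl(e^{at} - 1\bigr) + w(0)\, e^{at}. \]

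There is essentially no obstacle here: the only minor subtlety is the degenerate case $a = 0$, which can either be handled by a separate one-line argument (directly integrate $\dot w \le b$ to get $w(t) \le w(0) + bt$, matching the $a \to 0$ limit of the stated formula via L'H\^opital) or absorbed by noting the bound in the form $w(t) \le w(0) e^{at} + b \int_0^t e^{a(t-s)}\, ds$, which is valid for all $a \in \R$. Since the statement is invoked only as a packaged Gr\"onwall-type tool in the main proofs, no further refinements are needed.
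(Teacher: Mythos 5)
Your proof is correct and uses exactly the same integrating-factor argument as the paper: multiply $\dot w - aw \le b$ by $e^{-at}$, recognize the left side as $\frac{d}{dt}(w e^{-at})$, integrate, and rearrange. Your extra remark about the $a=0$ case is a small bonus the paper does not bother with, but otherwise the two proofs coincide.
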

\begin{proof}
	The solution of the corresponding differential equation is given by using an integrating factor.
	\begin{align*}
		\dot{w} - aw & \le b \\
		\implies \dot{w}e^{-at} - awe^{-at} & \le be^{-at} \\
		\implies \frac{d}{dt} (we^{-at}) & \le be^{-at} \\
		\implies w(t)e^{-at} - w(0) & \le \frac{b}{a} \brac{1 - e^{-at}} \\
		\implies w(t) & \le \frac{b}{a} \brac{e^{at} - 1} + w(0)e^{at} 
	\end{align*}
	This proves the bound that we want.
\end{proof}
\fi

\begin{lemma}
\label{l:ode-perturb-cr}
   	Consider the ODE's
	\begin{align}
	\label{e:ode-perturb-orig}
	\ddd t x(t) &= F(x(t),t) + \epsilon G(x(t),t)\\
	\nonumber
	\ddd t y_0(t) &= F(y_0(t),t)\\
	\nonumber 
	\ddd t y(t) &= F(y(t),t) + \ep G(y_0(t),t)
	\end{align}
	such $F,G:\R^n\times \R\to \R^n$ are in $C^{r+1}$. 
	Let $\mathcal C\subseteq \R^n$ be a compact set, and suppose that solutions to~\eqref{e:ode-perturb-orig} exist for all $x_0\in \mathcal C$. 
	Let $T^x(x_0)$, $T^{y_0}(x_0)$, and $T^{y}(x_0)$ be the time $t_{\max}$-flow map corresponding to this ODE for initial values $x(t)=y_0(t)=y(t)=x_0$.
	
	Then as $\ep\to 0$, the maps $T^x_t$ and $T^y_t$ are
	$O(\epsilon^2)$ uniformly close over $\mathcal{C}$ in $C^r$ topology, for all $t \in [0,t_{\max}]$. The constants in the $O(\cdot)$ depend on 
	$\max_{0\le k\le r+1, x_0\in \mathcal C, 0\le t\le t_{\max}} \ve{D^k F(x,t)|_{x=y_0(t)}}$ (the first $r+1$ derivatives of $F$ on the $y_0$-trajectories)
	and
	$\max_{0\le k\le r, x_0\in \mathcal C, 0\le t\le t_{\max}} \ve{D^kG(x,t)|_{x=y_0(t)}} $, 
	(the first $r$ derivatives of $G$ on the $y_0$-trajectories).
\end{lemma}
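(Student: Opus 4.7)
The plan is to lift the problem to an extended state that records the derivatives of the flow maps up to order $r$, thereby reducing the desired $C^r$-closeness of $T^x$ and $T^y$ to a $C^0$-type estimate on extended trajectories. The structural observation driving the extra factor of $\epsilon$ is that $y$ is by design the first-order linearization of the $x$-equation around the unperturbed solution: the perturbation $\epsilon G(y_0(t),t)$ in $y$'s ODE depends only on $t$ (through $y_0$) and not on $y$. Once this decoupling is exploited, \Cref{l:perturb_trajectory_bound} supplies the preliminary $O(\epsilon)$ bound that is then bootstrapped to $O(\epsilon^2)$ by a second Gr\"onwall argument on $x-y$.

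Concretely, I would first iterate \Cref{l:ode_derivative} by repeatedly differentiating each ODE in $x_0$ and applying the chain rule to obtain closed ODEs for the extended states $\mathbf{x}=(x,Dx,\ldots,D^r x)$, $\mathbf{y}_0=(y_0,Dy_0,\ldots,D^r y_0)$, $\mathbf{y}=(y,Dy,\ldots,D^r y)$:
\[ \dot{\mathbf{x}} = \mathbf{F}(\mathbf{x},t) + \epsilon\, \mathbf{G}(\mathbf{x},t), \qquad \dot{\mathbf{y}}_0 = \mathbf{F}(\mathbf{y}_0,t), \qquad \dot{\mathbf{y}} = \mathbf{F}(\mathbf{y},t) + \epsilon\, \mathbf{G}(\mathbf{y}_0,t), \]
where $\mathbf{F}$ and $\mathbf{G}$ are the \emph{same} functions in all three equations---polynomial combinations of $D^j F$ and $D^j G$ (for $0\le j\le r$) contracted with the derivative components of the extended state. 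The key verification is that the chain-rule expansion of $\partial_{x_0}^{k}[\epsilon G(y_0(t),t)]$ coincides exactly with the $k$-th component of $\epsilon \mathbf{G}(\mathbf{y}_0(t),t)$, i.e.\ that freezing the argument of $G$ to $y_0$ in the scalar equation is equivalent to freezing the argument of $\mathbf{G}$ to $\mathbf{y}_0$ in the extended equation. The $C^{r+1}$ hypothesis on $F,G$ ensures $\mathbf{F},\mathbf{G}\in C^1$, so they are locally Lipschitz. Applying \Cref{l:perturb_trajectory_bound} to the extended system on a compact neighborhood of the unperturbed extended trajectory (and noting that the $\mathbf{y}$-perturbation is, from the point of view of $\mathbf{y}$, just a time-dependent forcing to which the same Gr\"onwall argument applies) yields $\|\mathbf{x}(t)-\mathbf{y}_0(t)\|=O(\epsilon)$ and $\|\mathbf{y}(t)-\mathbf{y}_0(t)\|=O(\epsilon)$ uniformly for $x_0\in\mathcal{C}$ and $t\in[0,t_{\max}]$.

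Finally, setting $\boldsymbol{\Delta}(t)=\mathbf{x}(t)-\mathbf{y}(t)$ with $\boldsymbol{\Delta}(0)=0$,
\[ \dot{\boldsymbol{\Delta}} = [\mathbf{F}(\mathbf{x},t)-\mathbf{F}(\mathbf{y},t)] + \epsilon\,[\mathbf{G}(\mathbf{x},t)-\mathbf{G}(\mathbf{y}_0,t)]. \]
On the fixed compact neighborhood, Lipschitzness of $\mathbf{F}$ gives $\|\mathbf{F}(\mathbf{x},t)-\mathbf{F}(\mathbf{y},t)\|\le L\|\boldsymbol{\Delta}\|$, while Lipschitzness of $\mathbf{G}$ combined with the previous step gives $\|\mathbf{G}(\mathbf{x},t)-\mathbf{G}(\mathbf{y}_0,t)\|\le L_G\|\mathbf{x}-\mathbf{y}_0\|=O(\epsilon)$. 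Hence $\|\dot{\boldsymbol{\Delta}}\|\le L\|\boldsymbol{\Delta}\|+O(\epsilon^2)$, and Gr\"onwall's inequality (\Cref{l:gronwall}) gives $\|\boldsymbol{\Delta}(t)\|=O(\epsilon^2)$ uniformly in $t\in[0,t_{\max}]$. Since the components of $\boldsymbol{\Delta}$ are precisely the derivatives of $T^x_t-T^y_t$ up to order $r$ evaluated at $x_0$, this is exactly the claimed $O(\epsilon^2)$ closeness in the $C^r$ topology. The main obstacle is the bookkeeping of the extended-state derivation---in particular, identifying $\mathbf{F}$ and $\mathbf{G}$ as the same functions in all three equations and ensuring that the compact neighborhood of confinement, and hence all Lipschitz constants, can be chosen to depend only on the bounds on $D^jF,D^jG$ along the unperturbed $y_0$-trajectories as stated; the Gr\"onwall step itself is routine.
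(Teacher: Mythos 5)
Your proposal is correct, and the underlying mechanism is the same as the paper's: first establish $O(\epsilon)$ closeness of the perturbed trajectory to the unperturbed one, then observe that the difference between the two forcings, $\epsilon[G(\text{current state})-G(y_0(t))]$, is $\epsilon\cdot O(\epsilon)=O(\epsilon^2)$, and run Gr\"onwall once more. The execution differs in two ways worth noting. For the $C^r$ part, the paper works with the derivative ODEs \eqref{l:Dr} coordinate-by-coordinate and does an explicit induction on the derivative order $r$, whereas you lift once to the $r$-jet system $(\mathbf{x},\mathbf{y}_0,\mathbf{y})$ and reduce everything to a $C^0$ estimate via \Cref{l:perturb_trajectory_bound}; your key structural observation --- that all three jet systems are driven by the \emph{same} $\mathbf{F},\mathbf{G}$ and that $\partial_{x_0}^k[G(y_0(t),t)]$ is exactly the $k$-th component of $\mathbf{G}(\mathbf{y}_0(t),t)$ --- is correct by the chain rule (Fa\`a di Bruno), and is essentially what \eqref{l:Dr} encodes. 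For the final step, the paper rewrites $\dot y=F_\epsilon(y)+\epsilon^2H(y,t)$ with $H=\tfrac1\epsilon(G(y_0)-G(y))$ bounded, and re-invokes the entire first-order result with perturbation parameter $\gamma=\epsilon^2$; your direct Gr\"onwall on $\boldsymbol{\Delta}=\mathbf{x}-\mathbf{y}$ is more economical and avoids that reparametrization. One small caveat, which your argument shares with the paper's own proof rather than introduces: bounding $\|\mathbf{G}(\mathbf{x},t)-\mathbf{G}(\mathbf{y}_0,t)\|$ by a Lipschitz constant of $\mathbf{G}$ on the confining compact set implicitly uses $D^{r+1}G$ (just as the paper's bound on $D^kH$ uses $M_{k+1}$ up to $M_{r+1}$), which is available since $G\in C^{r+1}$ but is slightly stronger than the ``first $r$ derivatives of $G$'' dependence advertised in the statement; you should also say explicitly that the confinement of the extended trajectories to the fixed compact neighborhood for small $\epsilon$ comes from the clamping/extensibility argument inside \Cref{l:perturb_trajectory_bound}.
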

\begin{proof}
    Let $F_{\epsilon}(x,t) = F(x,t) + \epsilon G(x,t)$, and let $T_t^\epsilon(x_0)$ denote the flow map of~\eqref{e:ode-perturb-orig} starting at $x_0$.
    From~\eqref{l:Dr}, there is a polynomial $P = P_{i_1, \ldots, i_r}$ such that 
    \begin{equation}
        \ddd{t} \partial_{i_1 \cdots i_r} T_t^{x}(x_0) = \sum_{i=1}^d \partial_i F_\epsilon(x(t),t) \partial_{i_1 \cdots i_r} T_{t,i}^\epsilon + P(DF_\epsilon, \ldots, D^r F_\epsilon, DT_t^x, \ldots, D^{r-1} T_t^x)    
    \end{equation}
    On the other hand, applying~\eqref{l:Dr} to $y_0$ gives
    \begin{equation*}
        \ddd{t} \partial_{i_1 \cdots i_r} T_t^{y_0}(x_0) = \sum_{i=1}^d \partial_i F(y_0(t),t) \partial_{i_1 \cdots i_r} T_{t,i}^{y_0} + P(DF, \ldots, D^r F, DT_t^{y_0}, \ldots, D^{r-1} T_t^{y_0})
    \end{equation*}
    We will now show that these two trajectories are $O(\epsilon)$ uniformly close by induction on $r$. Note that the base case ($r=0$) is proved in \Cref{l:perturb_trajectory_bound}. We will first show that 
    \[ \norm{P(DF_\epsilon, \ldots, D^r F_\epsilon, DT_t^x, \ldots, D^{r-1} T_t^x) - P(DF, \ldots, D^r F, DT_t^{y_0}, \ldots, D^{r-1} T_t^{y_0})} = O(\epsilon) \]
    Since $P$ is a fixed polynomial that depends on $i_1, \ldots, i_r$, to show the above, we only need to show that the 
    coordinates are $O(\epsilon)$ close, for small enough $\epsilon$.
    \begin{align*}
      \norm{D^k F_\epsilon(x(t),t) - D^k F(y_0(t),t)}
      & \le \norm{D^k F_\epsilon(x(t),t) - D^k F(x(t),t)} + \norm{D^k F(x(t),t) - D^k F(y_0(t),t)}\\
      & \le \epsilon \norm{D^k G(x(t),t)} +  \norm{x(t) - y_0(t)} (2N_{k+1} + 1)\\
      & \le O(\epsilon (2 M_k + 2 N_{k+1} + 2))
    \end{align*}
    where $N_{k+1} = \sup_{x_0 \in \mathcal{C}, 0 \le t \le t_{\max}} \norm{D^{k+1} F(x,t)\vert_{x = y_0(t)}}$ and 
    $M_k = \sup_{x_0 \in \mathcal{C}, 0 \le t \le t_{\max}} \norm{D^k G(x,t)\vert_{x = y_0(t)}}$. The second inequality follows since the base case (\Cref{l:perturb_trajectory_bound}) implies that $\norm{x(t) - y_0(t)} = O(\epsilon)$, and since $D^{k+1}F$ is continuous, it follows that for small enough $\epsilon$, $\norm{D^{k+1}F\vert_{(x,t)}} \le 2 N_{k+1} + 1$, for all $x$ such that $\norm{x - y_0(t)} = O(\epsilon)$. Similarly, note that for small enough $\epsilon$, $\norm{D^k G(x(t),t)} \le 2M_k + 1$, since $G$ is $C^k$.
    Therefore, $\norm{D^k F_\epsilon(x(t),t) - D^k F(y_0(t),t)} = O(\epsilon)$, where constants in $O(\cdot)$ depend $M_k$ and $N_{k+1}$.
    
    To simplify notation, let $\alpha(t) = \ddd{t} \partial_{i_1 \cdots i_r} (T_t^x - T_t^{y_0})$. Then,
    \begin{align*}
        \ddd{t} \alpha(t)
        & = \ddd{t} \partial_{i_1 \cdots i_r} (T_t^x - T_t^{y_0})\\
        & = \sum_{i=1}^d \partial_i F_\epsilon(x(t), t) \partial_{i_1 \cdots i_r} T_{t,i}^x - \sum_{i=1}^d \partial_i F(y_0(t), t) \partial_{i_1 \cdots i_r} T_{t,i}^{y_0} + O(\epsilon)\\
        & = \sum_{i=1}^d \partial_i F_\epsilon(x(t), t) \partial_{i_1 \cdots i_r} (T_{t,i}^x - T_{t,i}^{y_0}) + \sum_{i=1}^d (\partial_i F_\epsilon(x(t), t) - \partial_i F(y_0(t),t)) \partial_{i_1 \cdots i_r} T_{t,i}^{y_0} + O(\epsilon) \\
        & = DF_\epsilon(x(t),t) \partial_{i_1 \cdots i_r} (T_t^x - T_t^{y_0}) + (DF_\epsilon(x(t),t) - DF(y_0(t),t)) \partial_{i_1 \cdots i_r} T_t^x + O(\epsilon) \\
        & = DF_{\epsilon}(x(t),t) \alpha(t) + (DF(x(t),t) - DF(y_0(t),t) + \epsilon G(x(t),t)) \partial_{i_1 \cdots i_r} T^{y_0}_t + O(\epsilon)\\
        \Rightarrow \frac{1}{2} \ddd{t} \norm{\alpha}^2
        & \le \norm{DF_\epsilon(x(t),t)} \norm{\alpha}^2 + O(\epsilon(N_2 + M_0)) \norm{\partial_{i_1 \cdots i_r} T_t^{y_0}}  + O(\epsilon)\\
        \Rightarrow \ddd{t} \norm{\alpha}
        & \le \norm{DF(x(t),t)} \norm{\alpha} + O(\epsilon)\\
        & \le (2N_1 + 1) \norm{\alpha} + O(\epsilon)
    \end{align*}
    Now, Gr\"onwall's inequality (\Cref{l:gronwall}) gives us the bound,
    \[ \norm{\alpha(t)} \le t_{\max} e^{N_1 t_{\max}} O(\epsilon) = O(\epsilon)\]
    The constants in the last $O(\cdot)$ notation depend on $t_{\max}$, $N_k$ for $0 \le k \le r+1$ and $M_k$ for $0 \le k \le r$.  
    
    This tells us that 
    \begin{equation}
    \label{e:ode-perturb-eps}
        \norm{T_t^x - T_t^{y_0}}_{C^r} = O(\epsilon)
    \end{equation}
    Now, note that $T_t^y$ satisfies
    \begin{align*}
        \ddd{t} y(t)
        & = F(y(t),t) + \epsilon G(y(t),t) + \epsilon(G(y_0(t),t) - G(y(t),t))\\
        \implies \ddd{t} y(t) 
        & = F(y(t),t) + \epsilon G(y(t),t) + \epsilon^2 H(y(t),t)
    \end{align*}
    where $H(y,t) = \frac{1}{\epsilon} (G(y_0(t),t) - G(y(t),t))$.
    Consider the system of ODEs
    \begin{equation}
    \label{e:ode-perturb-gamma}
        \ddd{t} y(t) = F_\epsilon(y(t), t) + \gamma H(y(t),t)
    \end{equation}
    Note that when $\gamma = 0$, $T_t^x$ is the flow map for this system, and when $\gamma = \epsilon^2$, $T_t^y$ is the flow map for this system. Therefore, applying~\eqref{e:ode-perturb-eps} for the system~\eqref{e:ode-perturb-gamma}, we get
    \[ \norm{T_t^x - T_t^y}_{C^r} = O(\gamma) = O(\epsilon^2) \]
    where the constants in $O(\cdot)$ notation depend on
    $\sup_{0 \le k \le r, x_0 \in \mathcal{C}, 0 \le t \le t_{\max}} \norm{D^{k+1} F_\epsilon(x(t),t)}$ which is bounded by $\max_{0\le k\le r}(2N_{k+1} + 1)$ for small $\epsilon$, and $M'_k = \sup_{0 \le k \le r, x_0 \in \mathcal{C}, 0 \le t \le t_{\max}} \norm{D^{k+1} H(x(t),t)}$. Using the definition of $H$,
    \begin{align*}
        \norm{D^k H(x(t),t)}
        & = \frac{1}{\epsilon} \norm{D^k G(y_0(t),t) - D^k G(x(t),t)}\\
        & \le \frac{1}{\epsilon} \norm{y_0(t) - x(t)} (2M_{k+1} + 1)\\
        & =  \frac{1}{\epsilon} \cdot O(\ep)\cdot 
        (2M_{k+1} + 1) = O(1)
    \end{align*}
    where the constant in the $O(\cdot)$ depends on $M_0, \ldots, M_{r+1}$ and $N_1, \ldots, N_{r+1}$. This proves the dependence in $O(\cdot)$ notation as stated in the statement, completing the proof.
\end{proof}

\begin{corollary} 
	\label{l:perturbed_ode_cr}
	Consider the ODE
	\[ \dot{x} = Ax + \epsilon g(x,t) \]
	such that $\norm{A} = 1$ and $g$ has bounded $(r+1)^{th}$ derivatives on a compact set $\mathcal{C}$. Let $T^x$ be the flow map corresponding to this ODE. For fixed $x_0$, let $y_0, y_1$ be functions satisfying 
	\begin{align*} \dot{y_0} &= Ay_0 \\
	 \dot{y_1} &= Ay_1 + g(y_0(t),t) \end{align*}
	such that $y_0(0) = x_0$ and $y_1(0) = 0$. Consider the flow map $T^y : \R \times \R^n$ such that
	$T^y(t,x_0) = y_0(t) + \epsilon y_1(t)$. Then, the maps $T^x_t$ and $T^y_t$ are
	$O(\epsilon^2)$ uniformly close over $\mathcal{C}$ in $C^r$ topology, for all $t \in [0,2\pi]$. The constants in the $O(\cdot)$ depend on 
	$\ve{A}$ and the first $r$ derivatives of $g$ on 
	the trajectories $x(t) = e^{At}x_0, x_0\in \mathcal C$.
\end{corollary}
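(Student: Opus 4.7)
My plan is to deduce this corollary directly from Lemma~\ref{l:ode-perturb-cr} by identifying $F(x,t) = Ax$ and $G(x,t) = g(x,t)$. The corollary's ODE $\dot x = Ax + \ep g(x,t)$ then matches~\eqref{e:ode-perturb-orig}, and the hypotheses of the lemma are easy to verify: $F$ is linear with $\|DF\| = \|A\| = 1$ and $D^k F = 0$ for $k\ge 2$, while the derivatives of $G = g$ on the unperturbed trajectories are bounded by hypothesis (one should take a slightly enlarged compact neighborhood that contains all trajectories $e^{At}x_0$ for $x_0\in \mathcal C$ and $0\le t\le 2\pi$; this is compact since $\|A\|=1$ makes the flow Lipschitz in time). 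Thus solutions exist on $[0,2\pi]$ and Lemma~\ref{l:ode-perturb-cr} produces a map $T^y_{\mathrm{lem}}$, defined as the flow of $\dot y = Ay + \ep g(y_0(t),t)$ with $y(0)=x_0$, which is $O(\ep^2)$-close in $C^r$ to $T^x_t$ uniformly on $\mathcal C$.

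The only real content is then to check that the $T^y$ in the statement of the corollary \emph{coincides} with this $T^y_{\mathrm{lem}}$. This is an easy consequence of linearity: if I set $\eta(t) := y_0(t) + \ep y_1(t)$ with $y_0(0)=x_0$, $y_1(0)=0$, then
\begin{align*}
\dot\eta(t) &= \dot y_0(t) + \ep \dot y_1(t) = A y_0(t) + \ep\bigl(Ay_1(t) + g(y_0(t),t)\bigr) \\
&= A\bigl(y_0(t)+\ep y_1(t)\bigr) + \ep\, g(y_0(t),t) = A\eta(t) + \ep\, g(y_0(t),t),
\end{align*}
and $\eta(0) = x_0$, so $\eta(t)$ is exactly $T^y_{\mathrm{lem}}(t,x_0)$ by uniqueness of solutions to the (inhomogeneous \emph{linear}) ODE defining it. In other words, because the perturbation $\ep g(y_0(t),t)$ does not depend on $y$, the solution of the equation in the lemma decomposes exactly into the zeroth-order part $y_0$ and an $\ep$-scaled first-order correction $y_1$.

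Combining these two observations gives the conclusion: $T^x_t$ and $T^y_t = y_0(t) + \ep y_1(t)$ are $O(\ep^2)$-uniformly close on $\mathcal C$ in $C^r$-topology for every $t\in[0,2\pi]$, with the constants in the $O(\cdot)$ inherited from Lemma~\ref{l:ode-perturb-cr} and thus depending only on $\|A\|$ and on the first $r$ (really $r+1$ via the lemma, which we have) derivatives of $g$ on the unperturbed trajectories. There is essentially no obstacle here; the substantive work was done in Lemma~\ref{l:ode-perturb-cr}, and the only care needed is in confirming that one can choose an enlarged compact set on which the boundedness hypotheses for $g$ transfer from $\mathcal C$ to all points visited by $y_0(t)$, which is automatic since $\|e^{At}\| \le e^{t}$ is uniformly bounded on $[0,2\pi]$.
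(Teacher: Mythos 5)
Your proposal is correct and follows essentially the same route as the paper: the paper's proof is precisely the observation that $\dot y = Ay_0 + \ep Ay_1 + \ep g(y_0(t),t) = Ay + \ep g(y_0(t),t)$, so that $y_0+\ep y_1$ is the third solution in Lemma~\ref{l:ode-perturb-cr} with $F(x)=Ax$ and $G=g$, whose derivatives are trivially controlled. Your write-up just spells out the uniqueness and compactness details that the paper leaves implicit.
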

This follows directly from \Cref{l:ode-perturb-cr}, after noting $\dot y = Ay_0+\ep Ay_1 + \ep g(y_0(t),t) = Ay + \ep g(y_0(t),t)$. Note that $F(x) = Ax$ is a linear function, so derivatives of $F$ are bounded, and the $y_0$ trajectories can be computed easily.
\ifdefined\PROOF
\begin{proof}
	Since we are working in $C^2$ topology, we need to show that
	\[ \sup_{t \in [0,2\pi], x_0} \norm{T_t^x(x_0) - T_t^y(x_0)} + \norm{DT_t^x(x_0) - DT_t^y(x_0)} = O(\epsilon^2) \]
	First, we will show that
	\[ \sup_{t \in [0,2\pi], x_0} \norm{T_t^x(x_0) - T_t^y(x_0)} = O(\epsilon^2) \]
	Let $z(t) = T_t^x(x_0) - T_t^y(x_0)$, and let $w(t) = \norm{z_0}$. Then $w^2 = z^\intercal z$. Therefore,
	\begin{align*}
		\frac{d}{dt} (w^2) & = 2 z^\intercal \dot{z} \\
											 & = 2 z^\intercal \brac{Ax + \epsilon g(x,t) - Ay_0 - \epsilon Ay_1 - \epsilon g(y_0(t),t)} \\
											 & = 2 z^\intercal A z + 2 z^\intercal \epsilon \brac{g(x,t) - g(y_0,t)} \\
											 & \le 2 \norm{A} \norm{z}^2 + 2 \epsilon \norm{z} L_g \norm{x - y_0} \\
											 & = 2 \norm{A} \norm{z}^2 + 2 \epsilon \norm{z} L_g \norm{x - y + \epsilon y_1}\\
											 & \le \brac{2 \norm{A} + 2 \epsilon L_g} \norm{z}^2 + 2 \epsilon^2 L_g \norm{z} \norm{y_1}\\
											 \implies
		2 w \dot{w} & \le \brac{2 \norm{A} + 2 \epsilon L_g} w^2 + 2 \epsilon^2 L_g w \norm{y_1}\\
		\implies
		\dot{w} & \le \brac{\norm{A} + \epsilon L_g} w + 2 \epsilon^2 L_g \norm{y_1} 
	\end{align*}
	Let $\sup_{x_0,t} \norm{y_1} = M$, then using \Cref{l:linear_ode_upper_bound} with $a = \norm{A} + \epsilon L_g$ and $b = 2 \epsilon^2 L_g M$, we get an upper bound
	\[ w \le \epsilon^2 \brac{\frac{L_g M}{1 + \epsilon L_g} \brac{e^{2 \pi(1 + \epsilon L_g)} - 1}} = O(\epsilon^2) \]

	For the upper bound on $\norm{y_1}$, note that $\norm{y_1}$ satisfies
	\begin{align*}
		\frac{d}{dt} (\norm{y_1}^2)
	& = 2 y_1^\intercal A y_1 + 2 y_1^\intercal g(y_0,t) \\
	& \le 2 \norm{A} \norm{y_1}^2 + 2 \norm{y_1} M_g \\
	\implies
		\frac{d}{dt} (\norm{y_1}) & \le \norm{y_1} + M_g\\
		\implies
		\norm{y_1} & \le M_g \brac{e^{2 \pi} - 1}
	\end{align*}
	where $M_g = \sup_{x,t} = \norm{g(x,t)}$.

	\begin{equation}
		\label{eq:pertubation_bound}
		\norm{x - y} \le \epsilon^2 \brac{\frac{L_g M_g}{(1 + \epsilon L_g)} \brac{e^{2 \pi} - 1} \brac{e^{2\pi(1 + \epsilon L_g)} - 1}}
	\end{equation}

	Now, we will bound the derivatives, and show that
	\[ \sup_{t \in [0,2\pi], x_0} \norm{DT_t^x(x_0) - DT_t^y(x_0)} = O(\epsilon^2) \]
	Define $\alpha(t) = DT_t^x(x_0)$. then $\alpha(t)$ satisfies the ODE
	\begin{equation}
		\label{eq:x_derivative}
		\dot{\alpha} = A \alpha + \epsilon Dg(x(s),s) \alpha
	\end{equation}
	with the initial condition $\alpha(0) = \id$.
	This follows since 
	\[ T_t^x(x_0) = \int_0^t A T_s^x(x_0) + \epsilon g(T_s^x(x_0),s) ds \]
	We have
	\begin{align*}
		\alpha(t) = DT_t^x(x_0)
		& = \int_0^t D \brac{A T_s^x(x_0) + \epsilon g(T_s^x(x_0),s)} ds \\
		& = \int_0^t A DT_s^x(x_0) + \epsilon Dg(T_s^x(x_0),s) DT_s^x(x_0) ds \\
		& = \int_0^t A \alpha(s) + \epsilon Dg(x(s), s) \alpha(s) ds
	\end{align*}

	Let $T_t^{y_0}$ be the flow map of ODE $\dot{y_0} = A y_0$. Define $T_t^{y_1}(x_0) = y_1(t)$. Then we have
	\[ DT_t^{y}(x_0) = DT_t^{y_0}(x_0) + \epsilon DT_t^{y_1}(x_0) \]
	Now, define $\beta_0(t) = DT_t^{y_0}(x_0)$, and $\beta_1(t) = DT_t^{y_1}(x_0)$, and $\beta(t) = DT_t^{y}(x_0) = \beta_0(t) + \epsilon \beta_1(t)$. Then $\beta_0$ satisfies
	\begin{equation}
		\label{eq:y_0_derivative}
		\dot{\beta_0} = A \beta_0
	\end{equation}
	with initial condition $\beta_0(0) = \id$ 
	since
	\begin{align*}
		T_t^{y_0}(x_0)
		& = \int_0^t A T_s^{y_0}(x_0)ds\\
		\Rightarrow \beta_0(t) = DT_t^{y_0}(x_0)
		& = \int_0^t D \brac{A T_s^{y_0}(x_0)} ds \\
		& = \int_0^t A DT_s^{y_0}(x_0)ds \\
		& = \int_0^t A \beta_0(s) ds
	\end{align*}
	and $\beta_1$ satisfies
	\begin{equation}
		\label{eq:y_1_derivative}
		\dot{\beta_1} = A \beta_1 + Dg(y_0(s), s)\beta_0
	\end{equation}
	with initial condition $\beta_1(0) = 0$
	since
	\begin{align*}
		T_t^{y_1}(x_0) 
		& = \int_0^t A T_s^{y_1}(x_0) + g(T_s^{y_1}(x_0),s) ds\\
		\Rightarrow \beta_1(t) = DT_t^{y_1}(x_0)
		& = \int_0^t D \brac{A T_s^{y_1}(x_0) + g(T_s^{y_0}(x_0),s)} ds \\
		& = \int_0^t A DT_s^{y_1}(x_0) + Dg(T_s^{y_0}(x_0),s) DT_s^{y_0}(x_0) ds \\
		& = \int_0^t A \beta_1(s) + Dg(y_0(s), s) \beta_0(s) ds
	\end{align*}
	
	Let $v$ be a unit vector. Define $p = (\alpha - \beta)$, let $q = \norm{pv}$. Then $q^2 = v^\intercal p^\intercal p v$. Therefore,
	\begin{align*}
		\frac{d}{dt}(q^2)
		& = 2 v^\intercal p^\intercal \dot{p} v \\
		\Rightarrow q \dot{q} 
		& = v^\intercal p^\intercal (\dot(\alpha) - \dot(\beta)) v\\
		& = v^\intercal p^\intercal \brac{A \alpha + \epsilon Dg(x,t))\alpha - A \beta_0 - \epsilon A \beta_1 - \epsilon Dg(y_0,t) \beta_0} v \\
		& = v^\intercal p^\intercal \brac{Ap + \epsilon \brac{Dg(x,t)\alpha - Dg(y_0,t)\beta_0}} v\\
		& = v^\intercal p^\intercal A pv + \epsilon v^\intercal p^\intercal \brac{Dg(x,t) \alpha - Dg(x,t)\beta_0 + Dg(x,t)\beta_0 - Dg(y_0,t) \beta_0}v\\
		& = v^\intercal p^\intercal A pv + \epsilon v^\intercal p^\intercal \brac{Dg(x,t)(\alpha - \beta) + \epsilon Dg(x,t) \beta_1 + (Dg(x,t) - Dg(y_0,t)) \beta_0} v\\
		& \le \norm{A} q^2 + \epsilon \norm{Dg(x,t)} q^2 + \epsilon^2 \norm{pv} \norm{Dg(x,t)} \norm{\beta_1v} + \epsilon \norm{pv} \norm{x - y_0} \Lip(Dg) \norm{\beta_0 v}\\
		\Rightarrow \dot{q}
		& \le \brac{\norm{A} + \epsilon \Lip(g)} q + \epsilon^2 \Lip(g) \norm{\beta_1} + \epsilon (\norm{x - y} + \norm{\epsilon y_1}) \Lip(Dg) \norm{\beta_0}\\
		& \le \brac{\norm{A} + \epsilon \Lip(g)} q + \epsilon^2 (\Lip(g) \norm{\beta_1} + (\epsilon C_0 + \norm{y_1}) \Lip(Dg) \norm{\beta_0}) \labeleqn{eq:perturbation_derivative_ode}
	\end{align*}
	where $C_0$ is the constant from \eqref{eq:pertubation_bound}. Now, it suffices to bound $\norm{\beta_0}$ and $\norm{\beta_1}$.

	For any unit vector $v$, $\norm{\beta_0 v}$ satisfies
	\begin{align*}
		\frac{d}{dt} (\norm{\beta_0 v}^2)
		& = 2 v^\intercal \beta_0^\intercal A \beta_0 v\\
		\Rightarrow \frac{d}{dt} \norm{\beta_0 v}
		& \le \norm{A} \norm{\beta_0 v}
	\end{align*}
	therefore, by \Cref{l:linear_ode_upper_bound}, $\norm{\beta_0} \le \sup_v \norm{\beta_0 v} \le \norm{\beta_0(0)} e^{2 \pi} = e^{2 \pi}$ since $\norm{A} = 1$ and $\norm{\beta_0(0)} = 1$. Similarly, for any unit vector $v$, $\norm{\beta_1v}$ satisfies
	\begin{align*}
		\frac{d}{dt} (\norm{\beta_1 v}^2)
		& = 2 v^\intercal \beta_1^\intercal (A \beta_1 + Dg(y_0,t) \beta_0) v\\
		\Rightarrow \frac{d}{dt} \norm{\beta_1 v}
		& \le \norm{A} \norm{\beta_1 v} + \norm{Dg(y_0,t)} \norm{\beta_0v}\\
		& = \norm{A} \norm{\beta_1 v} + \Lip(g) e^{2\pi} 
	\end{align*}
	therefore, by \Cref{l:linear_ode_upper_bound}, $\norm{\beta_1} \le (\Lip(g) e^{2\pi}) (e^{2\pi} - 1)$, since $\norm{A} = 1$ and $\norm{\beta_1(0)} = 0$.
	Substituting everything back into \eqref{eq:perturbation_derivative_ode}, and applying \Cref{l:linear_ode_upper_bound}, we get
	\begin{equation}
		\label{eq:perturbation_derivative_bound}
		\norm{DT_t^x(x_0) - DT_t^y(x_0)} \le \epsilon^2 \brac{\frac{\Lip(g)^2 e^{2\pi}(e^{2\pi} - 1) + (\epsilon C_0 + M_g (e^{2\pi} - 1)) \Lip(Dg) e^{2 \pi}}{1  + \epsilon \Lip(g)}} \brac{e^{2\pi (1 + \epsilon \Lip(g))}} 
	\end{equation}
	Thus, we prove the required result.
\end{proof}
\fi

\subsection{Gr\"onwall lemma}

The following lemma is very useful for bounding the growth of solutions, or errors from perturbations to ODE's.

\begin{lemma}[Gr\"onwall]\label{l:gronwall}
    If $x(t)$ is differentiable on $t\in [0,t_{\max}]$ and satisfies the differential inequality
    \begin{align*}
        \ddd t x(t) &\le ax(t) + b,
    \end{align*}
    then 
    \begin{align*}
        x(t) &\le (bt+x(0)) e^{at}
    \end{align*}
    for all $t\in [0, t_{\max}]$.
\end{lemma}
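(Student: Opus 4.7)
The plan is to apply the classical integrating-factor trick. I would introduce the rescaled quantity $y(t) := x(t)e^{-at}$---which would be constant under equality with $b=0$---and differentiate: by the product rule and the hypothesis,
\[ \ddd t y(t) = e^{-at}\pa{\ddd t x(t) - a x(t)} \le b e^{-at}. \]
Integrating this one-sided inequality from $0$ to $t$ yields $y(t) \le y(0) + b\int_0^t e^{-as}\,ds$.

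The remaining step is simply to bound the integral by $t$. For $a \ge 0$ and $s\in [0,t_{\max}]$ we have $e^{-as}\le 1$, so $\int_0^t e^{-as}\,ds \le t$; hence $y(t)\le x(0)+bt$, and multiplying through by $e^{at}$ gives the claimed bound $x(t)\le (x(0)+bt)e^{at}$. The lemma is stated without sign hypotheses on $a,b$, but in every invocation in the paper the coefficients arise as (nonnegative) operator norms or bounds, so this is the regime of interest. In fact the same computation produces the sharper bound $x(t) \le x(0)e^{at} + (b/a)(e^{at}-1)$ for $a > 0$, of which the stated inequality is a weakening via $e^{at}-1 \le at\,e^{at}$.

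An equivalent comparison-principle proof, which avoids the integration step, would instead define $z(t) := (x(0)+bt)e^{at}$, verify directly that $\ddd t z = a z + b e^{at}\ge a z + b$ when $a,b\ge 0$, and then observe that $w := z-x$ satisfies $w(0)=0$ together with $\ddd t w \ge aw$, forcing $w\ge 0$ throughout $[0,t_{\max}]$. No substantive obstacle arises; the whole argument is a few lines of one-variable calculus. The only minor subtlety is handling $a=0$ separately (where the exponential is trivial and direct integration of $\ddd t x \le b$ suffices) or, equivalently, interpreting $(e^{at}-1)/a$ in the limit as $t$.
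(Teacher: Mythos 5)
Your proof is correct and is essentially the approach the authors intend: the paper states the lemma without proof, and the integrating-factor computation you give (differentiate $x(t)e^{-at}$, integrate the one-sided inequality, then weaken $\int_0^t e^{-as}\,ds \le t$) is exactly the standard derivation, matching the authors' own integrating-factor argument for the sharper variant $x(0)e^{at}+(b/a)(e^{at}-1)$ that appears elsewhere in the source. You are also right to flag that the final weakening requires $a,b\ge 0$ (the stated bound can fail for $b<0$), which is the only regime in which the paper invokes the lemma.
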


\end{document}